\def\eqref#1{equation~\ref{#1}}
\def\1{\bm{1}}
\DeclareMathAlphabet{\mathsfit}{\encodingdefault}{\sfdefault}{m}{sl}
\SetMathAlphabet{\mathsfit}{bold}{\encodingdefault}{\sfdefault}{bx}{n}
\newcommand{\textbbf}[1]{\textbf{\textcolor{blue}{#1}}}
\newtheorem{assumption}{Assumption}
\newtheorem{lemma}{Lemma}
\newtheorem{theorem}{Theorem}
\newtheorem{remark}{Remark}
\titlespacing{\section}{0pt}{1pt}{1pt}
\titlespacing{\subsection}{0pt}{1pt}{1pt}
\title{Latent Safety-Constrained Policy Approach for Safe Offline Reinforcement Learning}
\author{Prajwal Koirala, Zhanhong Jiang, Soumik Sarkar \& Cody Fleming \\
Iowa State University\\
Ames, Iowa, USA \\
\texttt{\{prajwal,zhjiang,soumiks,flemingc\}@iastate.edu} \\
}
\begin{document}

\maketitle

\begin{abstract}
In safe offline reinforcement learning (RL), the objective is to develop a policy that maximizes cumulative rewards while strictly adhering to safety constraints, utilizing only offline data. Traditional methods often face difficulties in balancing these constraints, leading to either diminished performance or increased safety risks. We address these issues with a novel approach that begins by learning a conservatively safe policy through the use of Conditional Variational Autoencoders, which model the latent safety constraints. Subsequently, we frame this as a Constrained Reward-Return Maximization problem, wherein the policy aims to optimize rewards while complying with the inferred latent safety constraints. This is achieved by training an encoder with a reward-Advantage Weighted Regression objective within the latent constraint space. Our methodology is supported by theoretical analysis, including bounds on policy performance and sample complexity. Extensive empirical evaluation on benchmark datasets, including challenging autonomous driving scenarios, demonstrates that our approach not only maintains safety compliance but also excels in cumulative reward optimization, surpassing existing methods. Additional visualizations provide further insights into the effectiveness and underlying mechanisms of our approach. The code is available \href{https://github.com/PrajwalKoirala/LSPC-Safe-Offline-RL}{here}\footnote{\url{https://github.com/PrajwalKoirala/LSPC-Safe-Offline-RL}}.
\end{abstract}

\section{Introduction}

Although Reinforcement learning (RL) is a popular approach for decision-making and control applications across various domains, its deployment in industrial contexts is limited by safety concerns during the training phase. In traditional online RL, agents learn optimal policies through trial and error, interacting with their environments to maximize cumulative rewards. This process inherently involves exploration, which can lead to the agent encountering unsafe states and/or taking unsafe actions, posing substantial risks in industrial applications such as autonomous driving, robotics, and manufacturing systems \citep{garcia2015comprehensive, gu2022review, moldovan2012safe, shen2014risk, yang2020projection}. The primary challenge lies in ensuring that the agent's learning process does not compromise safety, as failures during training can result in costly damages, operational disruptions, or even endanger human lives \citep{achiam2017constrained, stooke2020responsive}. To address these challenges, researchers have explored several approaches aimed at minimizing safety risks while maintaining the efficacy of RL algorithms. 

One effective method to mitigate safety risks associated with training an agent is offline RL. In this paradigm, the focus shifts from active interaction with the environment to learning policies from a static dataset. This dataset comprises trajectory rollouts generated by an arbitrary behavior policy or multiple policies, collected beforehand. By leveraging this fixed dataset, offline RL eliminates the need for real-time data collection, thereby significantly reducing the risk of actually encountering unsafe states during the learning process. Training an agent with Offline RL, however, presents a unique set of challenges, primarily due to the issue of distribution shift \citep{levine2020offline, tarasov2024corl, fu2020d4rl}. The static dataset may not fully represent the range of scenarios the agent will encounter in the real world, leading to potential mismatches between the training data and the learned policy. This discrepancy can result in suboptimal policy performance when deployed in real-world settings. Despite these challenges, offline RL remains a powerful tool for safely training RL agents, as it allows for policy evaluation and improvement without incurring the risks associated with live interactions.

Another critical approach to mitigating safety risks in RL is safe RL. Unlike traditional RL, where the primary objective is to maximize cumulative rewards, safe RL places a strong emphasis on producing actions that adhere to predefined safety constraints \citep{xu2022trustworthy, chow2018risk}. This involves integrating safety considerations directly into the RL framework, ensuring that the agent’s behavior remains within acceptable safety bounds throughout the training and deployment phases. Safe RL methods often formulate the problem as a Constrained Markov Decision Process (CMDP), where the agent not only seeks to optimize its performance but also to satisfy safety constraints \citep{altman1998constrained, altman2021constrained, chow2018risk}. This dual objective can be challenging to achieve, as it requires balancing the exploration needed for learning with the strict adherence to safety requirements.

In safe offline RL, the agent is tasked with learning a policy exclusively from pre-collected data while adhering to stringent safety constraints \citep{liu2023datasets, le2019batch}. It is common practice—and often necessary—to constrain the policy such that it selects actions not only within the support of the dataset but also in compliance with the safety constraints \citep{xu2022constraints}. However, solving this problem is inherently difficult due to the trade-offs involved in enforcing constraints. Overly restrictive constraints can limit the agent's ability to explore potentially rewarding actions, leading to suboptimal policies that fail to optimize for cumulative rewards. On the other hand, overly relaxed constraints may allow the selection of out-of-distribution (OOD) actions, increasing the risk of violating safety constraints and leading to hazardous outcomes. 

To address these challenges, we introduce Latent Safety-Prioritized Constraints (LSPC), a framework that leverages Conditional Variational Autoencoders (CVAEs) to model the distribution of safety constraints within a latent space. This allows the agent to operate within a learned safety-prioritized boundary while maintaining sufficient flexibility for reward optimization. By incorporating implicit Q-learning, our approach formulates the policy learning problem as a Constrained Reward-Return Maximization task, where the agent seeks to maximize cumulative rewards while adhering to the inferred safety constraints. This principled method ensures that the learned policy remains safe, even in offline settings where exploration is not possible. Our empirical results on standard safe offline RL benchmarks demonstrate that LSPC achieves superior performance compared to existing approaches, balancing safety and reward return in a manner that is suitable for deployment in critical, high-stakes environments.
The key contributions of this paper are as follows:
\begin{itemize}
    \item We propose LSPC, a novel framework to model safety constraints derived from the static dataset in a tractable and imposable form, thereby facilitating their integration into the policy learning process.
    \item We formulate the problem as Constrained Reward-Return Maximization, ensuring safety compliance while maximizing cumulative rewards, supported by theoretical bounds on policy performance and sample complexity.
    \item Through extensive empirical evaluations, we demonstrate that policies derived from LSPC significantly outperform existing methods in both safety and reward optimization, providing a robust solution for high-stakes environments.
\end{itemize}

\begin{figure}[hbt!]
    \centering
    \begin{subfigure}[b]{0.53\textwidth}
        \centering
        \includegraphics[width=\textwidth]{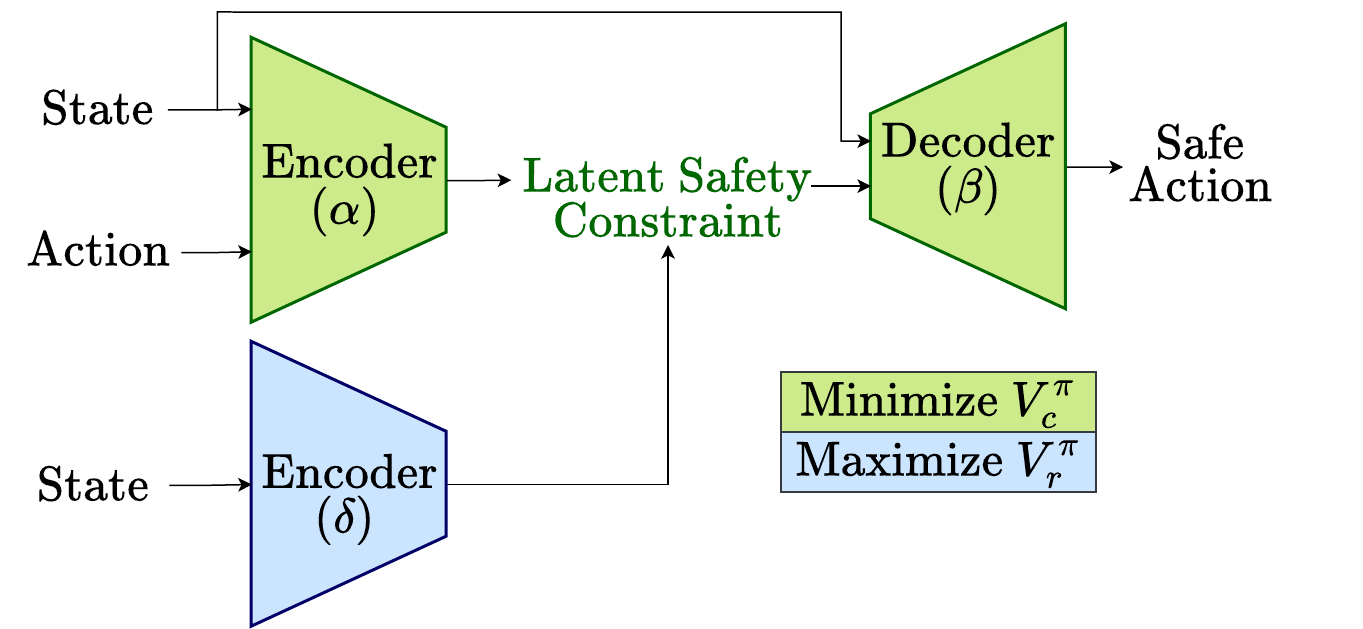}
        \caption{Illustration of the architecture and training objective of proposed safe offline RL framework.}
        \label{fig:Architecture}
    \end{subfigure}
    \hfill
    \begin{subfigure}[b]{0.44\textwidth}
        \centering
        \includegraphics[width=\textwidth]{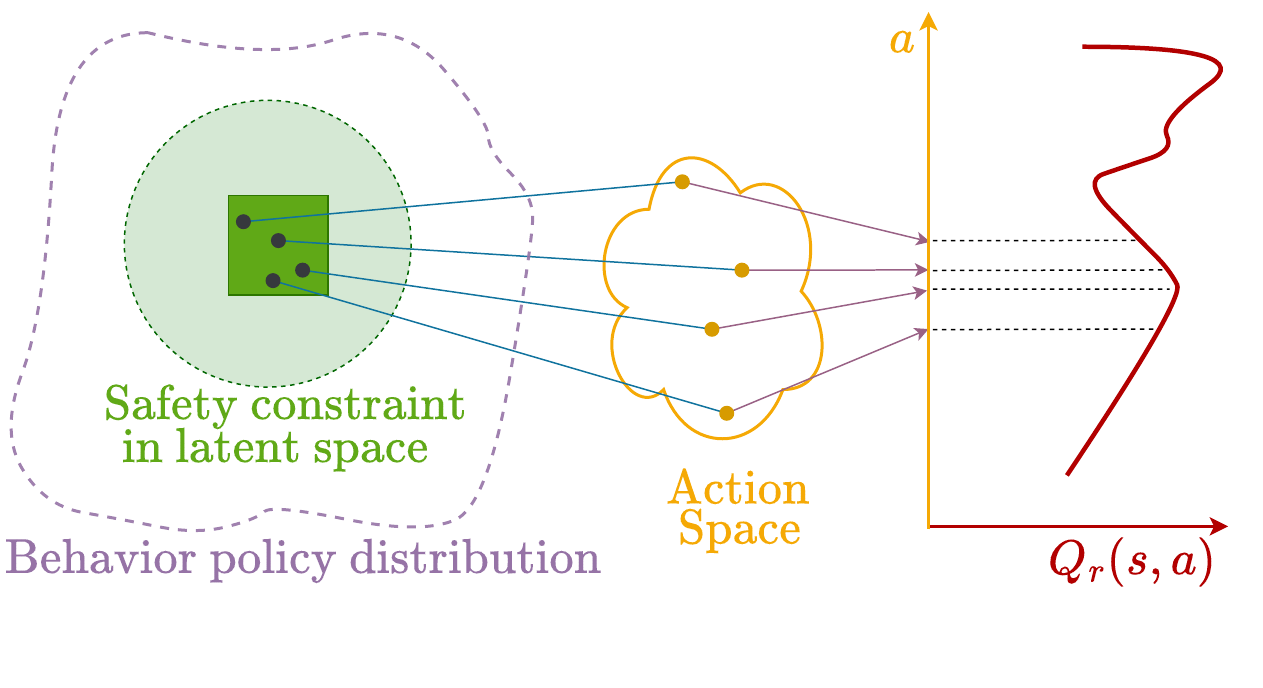}
        \caption{Visual representation of action selection within the latent safety-constrained space. }
        \label{fig:LatentSafety}
    \end{subfigure}
    \vspace{-5pt}
    \caption{Overview of the Proposed Method: A safe offline RL framework that balances safety and reward optimization. (a) The state-action pairs are encoded into a latent space by encoder $\alpha$, where the latent safety constraints are inferred. Decoder $\beta$ then uses these constraints to generate safe actions. Simultaneously, another encoder $\delta$ maximizes the reward signal to find actions that optimize performance while adhering to safety constraints. The process balances between minimizing the violation of safety constraints $V^\pi_c$ and maximizing reward returns $V^\pi_r$. (b) Actions from the latent space are subject to additional restriction to ensure they meet safety requirements before being optimized for reward return. The Q-function $Q_r(s, a)$ represents the expected reward return for each safe action for that state.}
    \label{fig:OverviewOfProposedMethod}
\end{figure}

\section{Preliminaries}

\subsection{Safe Offline RL}
In Safe Reinforcement Learning (Safe RL) problems, the environment is defined as a Constrained Markov Decision Process (CMDP), represented by the tuple $\mathcal{M} = (\mathcal{S}, \mathcal{A}, \mathcal{P}, r, c, \gamma, \rho_0)$. Here, $\mathcal{S}$ represents the state space, $\mathcal{A}$ the action space, $\mathcal{P}: \mathcal{S} \times \mathcal{A} \times \mathcal{S} \rightarrow [0,1]$ the state transition probability function, $r: \mathcal{S} \times \mathcal{A} \rightarrow \mathbb{R}$ the reward function, $c: \mathcal{S} \times \mathcal{A} \rightarrow [0, C_m]$ the cost function associated with constraint violations, $\gamma$ the discount factor, and $\rho_0$ the initial state distribution. $C_m$ is the maximum value for each immediate cost. We also assume that the maximum value of the immediate reward is $R_m$. The cost function penalizes transitions that violate safety constraints, and the objective is to ensure safety by keeping the cumulative cost under a predefined threshold, $\kappa$. 

A trajectory $\tau = {(s_0, a_0, r_0, c_0), (s_1, a_1, r_1, c_1), \dots, (s_T, a_T, r_T, c_T)}$ represents a sequence of states, actions, rewards, and costs over time. The discounted cumulative reward for a trajectory is defined as $R(\tau) = \sum_{t=0}^T \gamma^tr(s_t, a_t)$, and the discounted cumulative cost is $C(\tau) = \sum_{t=0}^T \gamma^tc(s_t, a_t)$. We then also define the stationary state-action distribution under the policy $\pi$ as $d^\pi(s,a)=(1-\gamma)\sum_{t=0}^T\gamma^tp(s_t=s,a_t=a)$ and the stationary state distribution under the policy $\pi$ as $d^\pi(s)=(1-\gamma)\sum_{t=0}^T\gamma^tp(s_t=s)$, where $p$ signifies the probability. 
The goal in safe RL is to learn a policy that maximizes the expected return in the CMDP while keeping the expected cost return below an allowable threshold. Thus, the safe RL problem is mathematically defined as follows: 
\begin{equation}\label{eq_1} 
\max_{\pi} \mathbb{E}_{\tau \sim \pi}[R(\tau)], \text{s.t.},  \mathbb{E}_{\tau \sim \pi}[C(\tau)] \leq \kappa \end{equation}


Offline Reinforcement Learning is a variant of RL in which the learning process is confined to a static dataset, eliminating any further interaction with the environment during training. 
The agent learns the policy from a pre-collected dataset $\mathcal{D}:=(s,a,s',r,c)$, with both safe and unsafe trajectories. The value functions for both reward and cost can be defined in a unified way $V^\pi_h(s_0)=\mathbb{E}_{\tau\sim\pi}[\sum_{t=0}^T\gamma^th_t|s_t=s_0], h\in\{r,c\}$. Denote by $\pi_b$ the unknown behavior policy induced by the dataset $\mathcal{D}$. We can now formulate safe offline RL as an optimization problem within the framework of a CMDP:
\begin{equation}\label{eq_2}
    \max_{\pi}V^\pi_r(s),\; s.t.,V^\pi_c(s)\leq \kappa;\;D_{KL}(\pi||\pi_b)\leq \varepsilon_1,
\end{equation}
where $ \pi_b $ governs the action distribution in the given dataset, and $ \varepsilon_1 $ is a divergence tolerance parameter. The function $ D_{KL}(\pi||\pi_b) $ measures the divergence between the learned policy $ \pi $ and the behavior policy $ \pi_b $, typically using metrics such as Kullback-Leibler (KL) divergence or other statistical distance measures. These constraints ensure that the learned policy $ \pi $ remains within the safety limits defined by the cost threshold $ \kappa $ and does not deviate significantly from the behavior policy $ \pi_b $, thereby mitigating the risks associated with out-of-distribution actions.

\subsection{Conditional Variational Autoencoder}
A Conditional Variational Autoencoder (CVAE) is a generative model that extends the standard Variational Autoencoder (VAE) by conditioning the generation process on additional information \citep{doersch2016tutorial}. Formally, the objective of the CVAE is to maximize the conditional likelihood $p_\theta(x|y)$ of the data $x$ given the condition $y$. To achieve this, CVAE introduces a latent variable $z$ and optimizes the variational evidence lower bound (ELBO) on the conditional log-likelihood:
\[
\log p(x|y) \geq \mathbb{E}_{q(z|x,y)}\left[\log p(x|z,y)\right] - D_{KL}\left(q(z|x,y) \| p(z|y)\right),
\]
where $q(z|x,y)$ is the variational posterior, $p(x|z,y)$ is the likelihood, and $p(z|y)$ is the prior. The KL divergence term $D_{KL}(q(z|x,y) \| p(z|y))$ regularizes the learned posterior to be close to the prior. During training, the CVAE jointly optimizes the encoder and decoder networks by minimizing the negative ELBO, which balances the reconstruction error and the regularization term. This approach allows the CVAE to generate new data samples ($x$) from the distribution of the dataset conditioned on the variable $y$.

\subsection{Implicit Q Learning} \label{iql_for_reward_and_critic}
Offline Reinforcement Learning (RL) algorithms often address covariate distribution shift by employing various strategies, such as using regularization methods to constrain the policy and/or the critic. In some cases, they avoid training the model outside the support of the distribution of the dataset altogether. Implicit Q-learning (IQL) incorporates an additional state-value network to prevent querying out-of-distribution actions during the training of the Q-network \citep{kostrikov2021iql}. Notably, training IQL does not require sampling actions from a policy but can be performed solely using actions from the dataset. The typical losses for IQL are as follows:
\begin{align}
\mathcal{L}_Q(\theta) &= \mathbb{E}_{(s,a,s') \sim \mathcal{D}} \left[ \left( r + \gamma V_\phi(s') - Q_\theta(s, a) \right)^2 \right] \label{eq:q_loss_reward} \\
\mathcal{L}_V(\phi) &= \mathbb{E}_{(s,a) \sim \mathcal{D}} \left[ L_\xi^2 \left( Q_{\theta}(s, a) - V_\phi(s) \right) \right] \label{eq:v_loss_reward}
\end{align}
The IQL framework involves training a value network, denoted as $ V_\phi$, in addition to the Q-network, denoted as $Q_\theta$. The loss functions associated with training these critic networks are detailed in Eqs. \ref{eq:q_loss_reward} and \ref{eq:v_loss_reward}. State value network $(V_\phi)$ is trained with expectile regression objective and uses an asymmetric squared error loss function defined as $ L_\xi^2(u) = |\xi - 1(u < 0)| u^2 $, where $\xi \in (0.5, 1.0)$. With both Q and value networks in place, the policy can be trained or extracted using the advantage-weighted regression (AWR) method \citep{kostrikov2021iql, peng2019advantage}. This method uses the learned advantage function, which is derived from the Q and value networks, and is defined as $A(s, a) = Q_\theta(s, a) - V_\phi(s)$, to guide the policy training process effectively within the distributional constraints of the dataset.

IQL in the context of offline RL is focused on learning reward-value from the data and using reward-advantage weighted regression to extract a policy that predicts actions with higher expected reward-return. A similar method can be used to learn the cost-value from the offline data. We use the superscript $c$ to denote the cost and define the cost-advantage of an action taken at state $s$ as $A^c(s, a) = Q^c_\psi(s, a) - V^c_\eta(s)$ . A similar asymmetric loss can be used to learn the cost-value of a state to discourage the underestimation of $Q^c_\theta(s, a)$:
\begin{align}
\mathcal{L}_Q^c(\psi) &= \mathbb{E}_{(s,a,s') \sim \mathcal{D}} \left[ \left( c + \gamma V^c_\eta(s') - Q^c_\psi(s, a) \right)^2 \right] \\
\mathcal{L}_V^c(\eta) &= \mathbb{E}_{(s,a) \sim \mathcal{D}} \left[ L_\xi^2 \left( V^c_\eta(s) - Q^c_\psi(s, a) \right) \right]
\end{align}

\section{Methodology}
In this section, we detail the formulation of the proposed Latent Safety Prioritized Constraints (LSPC), which we employ to formulate the constrained reward optimization problem. As depicted in figure \ref{fig:OverviewOfProposedMethod}, the framework consists of two key components: (1) the derivation of a conservatively safe policy with a Conditional Variational Autoencoder (CVAE), and (2) the maximization of reward returns subject to the safety constraints inferred within the latent space. 

As we will see, the CVAE-based architecture encodes state-action pairs into a latent representation that prioritizes adherence to safety constraints while being trained to reconstruct \textit{safe} actions. Moreover, it also serves as a parametric generative model to implicitly enforce the policy to output actions within the support of the dataset. By selecting the action most likely in the dataset $\mathcal{D}$, the generative model helps to minimize errors in value (both cost and reward) approximation during policy evaluation, which often arise in offline RL due to out-of-distribution (OOD) state-action pairs \citep{fujimoto2019off, zhou2021plas}. This obviates the need of explicit constraint on the divergence between the learned policy ($\pi$) and behavior policy ($\pi_b$).

Thus, the CVAE plays an important role in enforcing the two constraints outlined in Equation \ref{eq_2}, ensuring that the policy operates within the boundaries of safe behavior as defined within the dataset. The latent space inferred through this training encapsulates these constraints in a more tractable form derived solely from a static, cost-labeled dataset. For these reasons, the constraints imposed via the latent space in our method are referred to as Latent Safety-Prioritized Constraints (LSPC).
Below, we derive two specific policies: LSPC-S, a conservative policy trained solely to predict actions within these safety constraints, and LSPC-O, which is optimized to predict high reward-value actions while still adhering to these constraints.

\subsection{Learning Conservatively Safe Policy}

\textbf{Behavior Policy Modeling.}
To derive a safe policy $\pi_s(a|s)$ (the subscript $s$ in $\pi_s$ is short for safe), we start by employing a conditional variational autoencoder (CVAE) to model the underlying behavior policy $\pi_b(a|s)$ in the dataset. The objective of the CVAE is to maximize the likelihood $\log \pi_b(a \mid s)$  by maximizing a variational lower bound formulated as:
\begin{align}
\max_{\alpha, \beta} \log \pi_b(a \mid s) \geq \max_{\alpha, \beta} \mathbb{E}_{z \sim q_{\alpha}} \left[\log p_{\beta}(a \mid s, z)\right] - D_{KL}\left[q_{\alpha}(z \mid s, a) \parallel p(z \mid s, a)\right] 
\end{align}
where $ z $ is the latent variable, $ \alpha $ and $ \beta $ are the parameters of the encoder and the decoder, respectively. A trained encoder $ q_{\alpha}(z \mid s, a) $ encodes the state-action pair to a probability distribution in the latent space. A trained decoder $ p_{\beta}(a \mid s, z) $ provides a mapping from the latent space to the action space, conditioned on the state. When the latent variable $z$ takes on values that are likely under the prior $p(z \mid s,a)$, which is modeled as a standard normal distribution $\mathcal{N}(0,1)$, the decoder $ p_{\beta}(a \mid s, z) $ is expected to generate actions that align closely to the behavior policy distribution $\pi_b(a \mid s)$.

\textbf{Safe Policy Derivation.}
We define a safe policy as one that minimizes the expected cumulative discounted cost return over the course of interactions with the environment: 
\begin{align}
\pi_s = \arg \min_{\pi} \mathbb{E}_{\pi} [\sum_{t=0}^T \gamma^t c(s_t, a_t), a_t \sim \pi(\cdot \mid s_t)]
\end{align}
In off-policy actor-critic methods, safe policy extraction can be achieved using advantage-weighted regression (AWR). This approach reformulates the problem to maximize the expected log likelihood of actions weighted by their advantage over the cost. The safe policy $\pi_s$ can be derived as follows:
\begin{align}
\pi_s = \arg \max_{\pi} \mathbb{E}_{(s, a) \sim \mathcal{D}} \left[ \exp\left(\lambda \left(V^c_\eta(s) - Q^c_\psi(s, a)\right)\right) \log \pi(a \mid s) \right],
\end{align}
where $\lambda \in [0, \infty)$ is a hyperparameter in AWR called inverse temperature.
This formulation is equivalent to maximizing the log likelihood ($\pi_b(a \mid s)$) of an action given a state, with weights assigned according to the advantage, thereby promoting actions that are expected to minimize the cumulative cost. In the CVAE framework, parameters of the safe policy can be derived as $\alpha^*, \beta^* = \arg \min_{\alpha, \beta} \mathcal{L}_{p,q}(\alpha, \beta)$, where
\begin{multline} \label{cost advantage weighted elbo}
\mathcal{L}_{p,q}(\alpha, \beta) = \mathbb{E}_{(s, a) \sim \mathcal{D}} \left[ - \exp\left(\lambda \left(V^c_\eta(s) - Q^c_\psi(s, a)\right)\right) \left( \mathbb{E}_{z \sim q_{\alpha}}\left[\log p_{\beta}(a \mid s, z)\right] \right.\right.
    \\ - 
    \left. \left. D_{KL}\left[q_{\alpha}(z \mid s, a) \parallel p(z \mid s, a)\right] \right) \right]
\end{multline}
The CVAE is trained to reconstruct safe actions conditioned on given states and actions from the static dataset, effectively capturing the distribution of a safe policy.
\begin{align}
\pi_s(a \mid s) = \int_z p_{\beta}(a \mid s, z) p(z \mid s,a) \, dz = \mathbb{E}_{p(z \mid s,a)}[p_{\beta}(a \mid s, z)]   
\end{align}
For the values of $z$ that have a high probability under the prior $p(z \mid s,a)$, the decoder $ p_{\beta}(a \mid s, z) $ is expected to generate actions with high likelihood under the policy distribution $\pi_s(a \mid s)$. Thus, in our framework we define $z$ as an LSPC constraint, whose distribution is modeled as a Gaussian with zero mean and unit variance, ensuring that the sampled actions align with the safe policy distribution in expectation. The latent safety constraint serves as a probabilistic embedding that make the safety constraints present in the static dataset more tractable. When sampling $z$ from the full prior $\mathcal{N}(0,1)$, we refer to the resulting policy as the CVAE policy, which uses the decoder to generate `safe' actions conditioned on the state. In practice, we restrict the sampling of $z$ to a high-probability region of the prior, leading to a safer policy due to this restriction. We refer to the resulting policy as LSPC-S, as it prioritizes safety during execution.

\subsection{Constrained Reward-Return Maximization}
The objective of this constrained optimization problem is to solve for a policy $\pi$ that selects action with the maximum $Q_r$ value while also respecting the latent safety constraint.
\begin{align} 
\pi(s) = \arg \max_{a} Q_r(s, a), a \sim p_{\beta}(\cdot\mid s, z), z \sim \mathcal{N}(0,1) 
\end{align}
To achieve this, we train an additional encoder $\mu_\delta(z \mid s)$, which we refer to as latent safety encoder policy conditioned on the state and it learns the distribution of safety embedding such that the expected reward-return is maximized. 
\begin{align}
\mu_\delta^*(z \mid s) = \arg \max_\mu Q_r(s,a), a \sim p_\beta(\cdot\mid s, z), z \sim \mu(\cdot\mid s)
\end{align}
\textbf{Advantage Weighted Regression in Latent Space.}
To learn the reward-maximizing safety embeddings, we train the encoder $\mu_\delta(z \mid s)$ using reward advantage-weighted regression (AWR). The optimization objective is given by
\begin{align} \label{latent safety encoder policy loss}
 \mathcal{L}_\mu(\delta) = \mathbb{E}_{(s, a) \sim \mathcal{D}} \left[ - \exp\left(\zeta \left( Q^r_\theta(s, a) - V^r_\phi(s)\right)\right) \log p_\beta(a \mid s, z) \right], z \sim \mu_\delta(. \mid s)
\end{align}
Here $\zeta$ is the inverse temperature hyperparameter for the above AWR. While this objective focuses on maximizing the reward-return, it is equally essential to enforce the latent safety constraint to certify policy safety. To ensure that the embeddings produced by the encoder $\mu_\delta$ lie within the high-probability regions of the latent safety space, we employ a $tanh$ squashing function, which restricts the outputs to the range $(-\epsilon, \epsilon)$, $\epsilon>0$. The resulting policy, which we refer to as LSPC-O, optimizes reward-return within the restricted latent safety constraints space. The LSPC-O policy is parameterized by latent safety encoder policy parameter $\delta$ and CVAE decoder parameter $\beta$. During the gradient step to train the latent safety encoder policy ($\mu_\delta$), the CVAE decoder parameter is kept frozen, but the gradients are allowed to pass through. 

\begin{wrapfigure}{r}{0.50\textwidth}
\vspace{-15pt}
\LinesNumberedHidden
\begin{algorithm}[H]
\caption{LSPC Training}
\label{algorithm_training_concise}
\textbf{Initialize:} $\phi, \theta, \eta, \psi, \alpha, \beta, \delta$ \\
\For{each gradient step}{
    \text{TD learning with IQL:}\\
    $\phi \leftarrow \phi - \nu_\phi \nabla_\phi \mathcal{L}_V(\phi)$ \\
    $\theta \leftarrow \theta - \nu_\theta \nabla_{\theta} \mathcal{L}_Q(\theta)$ \\
    $\eta \leftarrow \eta - \nu_\eta \nabla_\eta \mathcal{L}^c_V(\eta)$ \\
    $\psi \leftarrow \psi - \nu_\psi \nabla_{\psi} \mathcal{L}^c_Q(\psi)$ \\
    \text{Policy extraction with AWR:}\\
    $\{\alpha, \beta\} \leftarrow \{\alpha, \beta\} - \nu_{\alpha\beta} \nabla_{\{\alpha, \beta\}} \mathcal{L}_{p,q}(\alpha, \beta)$ \\
    $\delta \leftarrow \delta - \nu_\delta \nabla_\delta \mathcal{L}_{\mu}(\delta)$ \\
}
\end{algorithm}
\vspace{-20pt}
\end{wrapfigure}

An overview of the training process showing each gradient step is provided in Algorithm \ref{algorithm_training_concise}. It is important to note that the policy extraction steps for both the CVAE policy and the latent safety encoder policy do not affect the value function training as defined in section \ref{iql_for_reward_and_critic}, and therefore the policy training can be done concurrently with the critics in Actor-Critic framework or after temporal-difference (TD) training of the critics. Detailed implementation specifics can be found in Appendix \ref{algorithm_details_appendix}.

\section{Theoretical Analysis}\label{theory}

We provide theoretical analysis for LSPC-O as the safe optimal policy is our target, while similar techniques can be used for LSPC-S. In what follows, we begin by imposing assumptions regarding the distances between various policies. These assumptions will allow us to derive performance bounds and sample complexity guarantees for learned policies, ensuring both safety and reward maximization.
The metrics for evaluation are based on Eq.~\ref{eq_2} such that we adopt $V^{\pi^*}_r(\rho_0)-V^{\pi}_r(\rho_0)$ and $V^{\pi}_c(\rho_0)-\kappa$.
In light of the definition for the stationary state-action distribution and the fact that $\pi^*=\textnormal{argmax}_\pi\mathbb{E}_{\tau\sim\pi}[\sum_{t=1}^T\gamma^tr(s_t,a_t)]$, we have $\pi^*=\textnormal{argmax}_\pi\mathbb{E}_{(s,a)\sim d^\pi}[r(s,a)]$. This will assist in the analysis presented later. All necessary proof is deferred to the Appendix~\ref{additional_analysis}.
\begin{assumption}\label{assump_3}
    Suppose that the policy $\pi$ is optimized by Eq.~\ref{eq_2} and that the safe policy $\pi_s$ is induced by CVAE. We have
        $D_{KL}(\pi||\pi_s)\leq \varepsilon'_1$.
\end{assumption}
\begin{assumption}\label{assump_4}
    Suppose that the policy $\pi^*$ is the optimal policy and that the safe policy $\pi_s$ is induced by CVAE. We have
$
        D_{KL}(\pi_s||\pi^*)\leq \varepsilon'_2.
   $
\end{assumption}
In~\cite{yao2024oasis}, the authors resorted to similar assumptions involving $\pi_b$ instead of $\pi_s$ (see Appendix~\ref{auxiliary_assumption} for detail) to ensure behavior regularization and establish constraint violation bound. 
Immediately, we can obtain that the upper bounds for $D_{KL}(\pi||\pi_b)$ and $D_{KL}(\pi_b||\pi^*)$ are looser, since intuitively, $\pi_s$ should be closer to $\pi$ and $\pi^*$ than $\pi_b$. One may argue that when we resort to $\pi_s$ in the above assumptions, whether the behavior regularization can still be satisfied. This is affirmatively true as $\pi_s$ is a safe reconstruction of $\pi_b$ with the generative property of the CVAE. The designed loss also enables the appropriate latent safety embedding construction for training $\pi$. Alternatively, some pre-processing can be done to $\mathcal{D}$ such that only the safe data samples are utilized to learn the policy, as done in BC-Safe~\cite{liu2023datasets}. This may be more conservative to guarantee the safety constraint, but with the compromise of optimality, as an unsafe sample could lead to the higher returns. Therefore, we select the first approach for training in our proposed pipeline. We next present a lemma to show the error propagation.
\begin{lemma}\label{lemma_1}
    Suppose that the stationary state distributions for $\pi$ and $\pi^*$ are defined as $d^\pi(s)$ and $d^{\pi^*}(s)$. The following relationship holds
    \begin{equation}
        D_{TV}(d^\pi(s)||d^{\pi^*}(s))\leq \frac{\gamma}{1-\gamma}\mathbb{E}_{s\sim d^{\pi^*}(s)}[D_{TV}(\pi(\cdot|s)||\pi^*(\cdot|s))],
    \end{equation}
    where $D_{TV}$ is the total variation distance. 
\end{lemma}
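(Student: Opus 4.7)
The plan is to use the Bellman flow equation for the discounted stationary state distribution and close the bound via a self-bounding argument: the difference $d^\pi - d^{\pi^*}$ decomposes into a propagated component (structurally identical to the quantity being bounded) plus a one-step component (the policy gap weighted by $d^{\pi^*}$), and absorbing the propagated component produces the $\gamma/(1-\gamma)$ prefactor.

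First, I would invoke the Bellman flow identity $d^\pi = (1-\gamma)\rho_0 + \gamma T^\pi d^\pi$, where $T^\pi$ is the one-step state transition operator $(T^\pi p)(s') := \sum_{s,a} p(s)\pi(a|s)P(s'|s,a)$, together with its analogue for $d^{\pi^*}$. Subtracting these and inserting the mixed term $\gamma T^\pi d^{\pi^*}$ yields
$$d^\pi - d^{\pi^*} = \gamma\bigl(T^\pi d^\pi - T^\pi d^{\pi^*}\bigr) + \gamma\bigl(T^\pi d^{\pi^*} - T^{\pi^*} d^{\pi^*}\bigr).$$
Passing to total variation, the first summand contributes at most $\gamma\, D_{TV}(d^\pi\|d^{\pi^*})$ because $T^\pi$ is a Markov kernel and therefore TV-non-expansive. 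For the second summand, I would expand the definitions, apply the triangle inequality on the sum over $s'$, and use $\sum_{s'} P(s'|s,a) = 1$ to collapse the next-state integration, obtaining
$$D_{TV}\bigl(T^\pi d^{\pi^*}\,\|\,T^{\pi^*} d^{\pi^*}\bigr) \;\leq\; \mathbb{E}_{s\sim d^{\pi^*}}\bigl[D_{TV}(\pi(\cdot|s)\,\|\,\pi^*(\cdot|s))\bigr].$$
Substituting both bounds into the decomposition gives $D_{TV}(d^\pi\|d^{\pi^*}) \leq \gamma\, D_{TV}(d^\pi\|d^{\pi^*}) + \gamma\, \mathbb{E}_{s\sim d^{\pi^*}}[D_{TV}(\pi(\cdot|s)\|\pi^*(\cdot|s))]$, and rearranging and dividing by $1-\gamma$ yields the claim.

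The main obstacle is bookkeeping rather than conceptual depth: the mixed term must be inserted as $T^\pi d^{\pi^*}$ (not $T^{\pi^*} d^\pi$) so that the expectation in the final bound is taken under $d^{\pi^*}$ as required, and the transition kernel $P(s'|s,a)$ must be absorbed by exchanging the order of summation rather than by a worst-case supremum over $(s,a)$, since the latter would destroy the dependence on $d^{\pi^*}$. A minor secondary issue is that the paper's definition of $d^\pi$ carries a finite upper index $T$ in the sum, whereas the Bellman flow identity above is exact only in the infinite-horizon (steady-state) regime; I would either state the lemma in that setting or carry an additive $\gamma^{T+1}$ boundary term that vanishes as $T\to\infty$.
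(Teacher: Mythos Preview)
Your proposal is correct and follows the standard Bellman-flow plus self-bounding argument. The paper does not give an explicit proof of this lemma but simply defers to Lemma~2 of \cite{xu2020error}, which uses essentially the same decomposition you describe (propagating the state-distribution gap through a contractive transition operator and isolating the one-step policy discrepancy under $d^{\pi^*}$), so your approach aligns with the cited reference; your remark about the finite-$T$ boundary term is a valid caveat that the paper leaves implicit.
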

The proof of Lemma~\ref{lemma_1} follows similarly from Lemma 2 in~\cite{xu2020error}.
This lemma quantifies the relationship between the stationary state distribution discrepancy w.r.t. the policy distribution discrepancy, which is critical to tighten the performance bound presented in the next.
\begin{lemma}\label{lemma_2}
    Define $V^\pi_r(\rho_0):=\mathbb{E}_{s_0\sim\rho}[V^\pi_r(s_0)]$ given a policy $\pi$. Then, we have the following:
    \begin{equation}
        |V^\pi_r(\rho_0)-V^{\pi^*}_r(\rho_0)|\leq \frac{2R_m}{(1-\gamma)^2}\mathbb{E}_{s\sim d^{\pi^*}(s)}[D_{TV}(\pi(\cdot|s)||\pi^*(\cdot|s))].
    \end{equation}
\end{lemma}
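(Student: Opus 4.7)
The plan is to exploit the standard identity that rewrites discounted value functions as averages of the reward under the stationary (discounted) state--action distribution. Concretely, $V^\pi_r(\rho_0)=\tfrac{1}{1-\gamma}\mathbb{E}_{(s,a)\sim d^\pi}[r(s,a)]$ for any policy $\pi$, so the performance difference becomes
\begin{equation*}
V^\pi_r(\rho_0)-V^{\pi^*}_r(\rho_0)=\tfrac{1}{1-\gamma}\int r(s,a)\bigl(d^\pi(s,a)-d^{\pi^*}(s,a)\bigr)\,ds\,da.
\end{equation*}
I then need to bound the magnitude of the integral on the right; the prefactor $\tfrac{1}{1-\gamma}$, combined with a further $\tfrac{1}{1-\gamma}$ arising from Lemma~\ref{lemma_1}, will produce the quadratic horizon dependence in the claim.

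The next step is an add-and-subtract decomposition of the joint-distribution difference,
\begin{equation*}
d^\pi(s)\pi(a|s)-d^{\pi^*}(s)\pi^*(a|s)=d^{\pi^*}(s)\bigl[\pi(a|s)-\pi^*(a|s)\bigr]+\pi(a|s)\bigl[d^\pi(s)-d^{\pi^*}(s)\bigr],
\end{equation*}
after which I would bound the two resulting integrals separately using $|r|\le R_m$ and the identity $\int|P-Q|=2D_{TV}(P||Q)$. The first term contributes at most $2R_m\,\mathbb{E}_{s\sim d^{\pi^*}}[D_{TV}(\pi(\cdot|s)||\pi^*(\cdot|s))]$; for the second, the inner $a$-integral collapses because $\int\pi(a|s)\,da=1$, leaving the bound $2R_m\,D_{TV}(d^\pi(s)||d^{\pi^*}(s))$.

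The final step is to invoke Lemma~\ref{lemma_1} on the state-marginal TV, which introduces an extra factor $\gamma/(1-\gamma)$ and puts the outer expectation under $d^{\pi^*}$. Adding the two contributions and using $1+\tfrac{\gamma}{1-\gamma}=\tfrac{1}{1-\gamma}$ yields $\tfrac{2R_m}{1-\gamma}\mathbb{E}_{s\sim d^{\pi^*}}[D_{TV}(\pi||\pi^*)]$ for the integral, and multiplying by the leading $\tfrac{1}{1-\gamma}$ closes the argument. The main subtlety I anticipate is choosing the correct side of the add-and-subtract decomposition: placing $d^{\pi^*}(s)$ (rather than $d^\pi(s)$) in the first bracket is what makes the outer expectation in the final bound be taken under $d^{\pi^*}$, which matches both the direction in Lemma~\ref{lemma_1} and the form of the statement; the opposite grouping would demand a mirror version of Lemma~\ref{lemma_1} that the paper does not provide.
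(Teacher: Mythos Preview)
Your proposal is correct and follows essentially the same route as the paper. The paper also starts from the occupancy-measure identity $V^\pi_r(\rho_0)=\tfrac{1}{1-\gamma}\mathbb{E}_{(s,a)\sim d^\pi}[r(s,a)]$, splits the joint-distribution discrepancy through the intermediate measure $d^{\pi^*}(s)\pi(\cdot|s)$ (stated there as a triangle inequality on $D_{TV}$ rather than your algebraically equivalent add-and-subtract), obtains the same two terms $D_{TV}(d^\pi(s)\|d^{\pi^*}(s))$ and $\mathbb{E}_{s\sim d^{\pi^*}}[D_{TV}(\pi\|\pi^*)]$, and then applies Lemma~\ref{lemma_1} together with $1+\tfrac{\gamma}{1-\gamma}=\tfrac{1}{1-\gamma}$ exactly as you do.
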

Different from existing works~\cite{cen2024learning,yao2024oasis} where the authors bounded $D_{TV}(d^\pi(s)||d^{\pi^*}(s))$ by $D_{TV}(d^{\pi_b}(s)||d^{\pi^*}(s))$, thanks to Lemma~\ref{lemma_1}, Lemma~\ref{lemma_2} states that the performance gap now stems only from the policy distribution discrepancy between $\pi$ and $\pi^*$ and eventually decays if the number of samples in $\mathcal{D}$ is infinite in our sample complexity analysis. Thus, we next upper bound $\mathbb{E}_{s\sim d^{\pi^*}(s)}[D_{TV}(\pi(\cdot|s)||\pi^*(\cdot|s))]$.
\begin{lemma}\label{lemma_3}
    Let Assumptions~\ref{assump_3} and~\ref{assump_4} hold. We have the following relationship:
    \begin{equation}
        \mathbb{E}_{s\sim d^{\pi^*}(s)}[D_{TV}(\pi(\cdot|s)||\pi^*(\cdot|s))]\leq \sqrt{\varepsilon'_1/2} + \sqrt{\varepsilon'_2/2}.
    \end{equation}
\end{lemma}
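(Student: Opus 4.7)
The plan is to bound the expected total-variation distance between $\pi$ and $\pi^*$ by routing it through the intermediate safe policy $\pi_s$ and then converting the two KL assumptions into TV via Pinsker's inequality. First, for every fixed state $s$, the triangle inequality for the total-variation metric gives
\begin{equation*}
D_{TV}(\pi(\cdot|s)\,\|\,\pi^*(\cdot|s)) \;\leq\; D_{TV}(\pi(\cdot|s)\,\|\,\pi_s(\cdot|s)) + D_{TV}(\pi_s(\cdot|s)\,\|\,\pi^*(\cdot|s)).
\end{equation*}
Taking expectations over $s\sim d^{\pi^*}(s)$ keeps the inequality, so the problem reduces to bounding each of the two terms on the right.

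Next, for each term I would apply Pinsker's inequality pointwise in $s$, namely $D_{TV}(p\|q)\leq\sqrt{\tfrac{1}{2}D_{KL}(p\|q)}$, yielding
\begin{equation*}
\mathbb{E}_{s\sim d^{\pi^*}}\!\bigl[D_{TV}(\pi(\cdot|s)\,\|\,\pi_s(\cdot|s))\bigr] \leq \mathbb{E}_{s\sim d^{\pi^*}}\!\Bigl[\sqrt{\tfrac{1}{2}D_{KL}(\pi(\cdot|s)\,\|\,\pi_s(\cdot|s))}\Bigr],
\end{equation*}
and analogously for the $(\pi_s,\pi^*)$ pair. Since $\sqrt{\cdot}$ is concave, Jensen's inequality pulls the expectation inside the square root:
\begin{equation*}
\mathbb{E}_{s\sim d^{\pi^*}}\!\Bigl[\sqrt{\tfrac{1}{2}D_{KL}(\pi\|\pi_s)}\Bigr] \leq \sqrt{\tfrac{1}{2}\,\mathbb{E}_{s\sim d^{\pi^*}}\!\bigl[D_{KL}(\pi(\cdot|s)\,\|\,\pi_s(\cdot|s))\bigr]}.
\end{equation*}
Interpreting the quantities $D_{KL}(\pi\|\pi_s)$ and $D_{KL}(\pi_s\|\pi^*)$ in Assumptions~\ref{assump_3} and~\ref{assump_4} as expected conditional KL divergences bounded uniformly by $\varepsilon'_1$ and $\varepsilon'_2$, the two terms are controlled by $\sqrt{\varepsilon'_1/2}$ and $\sqrt{\varepsilon'_2/2}$ respectively. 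Summing them yields the claimed inequality.

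The main subtle point I expect to have to justify is the interpretation of the KL bounds in the assumptions: Pinsker's inequality is a pointwise statement on conditional distributions, while the assumptions are most naturally read as statements on state-averaged KL divergences, and the averaging distribution should either be $d^{\pi^*}$ itself or be related to it through a change-of-measure argument so that Jensen can be applied cleanly. If the averaging distribution in the assumptions is not already $d^{\pi^*}$, a short additional step is needed, either absorbing a density-ratio constant into $\varepsilon'_1,\varepsilon'_2$ or invoking an upper bound on the Radon–Nikodym derivative between the two state distributions; otherwise the proof is essentially a two-line chain of triangle inequality, Pinsker, and Jensen.
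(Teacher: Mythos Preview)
Your proposal is correct and follows essentially the same chain as the paper's proof: triangle inequality through $\pi_s$, pointwise Pinsker, Jensen for the square root, then invoke the two KL assumptions. The paper applies the last step by implicitly reading Assumptions~\ref{assump_3} and~\ref{assump_4} as bounds on $\mathbb{E}_{s\sim d^{\pi^*}}[D_{KL}(\cdot\|\cdot)]$, so the subtlety you flag about the averaging distribution is handled there simply by interpretation rather than by an explicit change-of-measure argument.
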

Hence, the first main result to reveal the performance gap between any policy $\pi$ optimized by Eq.~\ref{eq_2} and the optimal one $\pi^*$ is as follows.
\begin{theorem}\label{theorem_1}
    Let Assumptions~\ref{assump_3} and~\ref{assump_4} hold. For the policy $\pi$ optimized by Eq.~\ref{eq_2} with the dataset $\mathcal{D}$, the performance gap between $\pi$ and $\pi^*$ can be bounded as
    \begin{equation}
        V^{\pi^*}_r(\rho_0)-V^{\pi}_r(\rho_0)\leq \frac{2R_m}{(1-\gamma)^2}(\sqrt{\varepsilon'_1/2} + \sqrt{\varepsilon'_2/2}).
    \end{equation}
\end{theorem}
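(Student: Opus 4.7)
The plan is to chain together the three lemmas established just before the theorem statement, so that the performance gap reduces to a policy-distribution discrepancy which is then controlled via the assumed KL bounds. Concretely, I would first invoke Lemma~\ref{lemma_2} to move the analysis from the value-function gap $V^{\pi^*}_r(\rho_0)-V^{\pi}_r(\rho_0)$ (upper bounded by $|V^\pi_r(\rho_0)-V^{\pi^*}_r(\rho_0)|$) onto the quantity $\mathbb{E}_{s\sim d^{\pi^*}(s)}[D_{TV}(\pi(\cdot|s)\|\pi^*(\cdot|s))]$, absorbing the horizon factor $\frac{2R_m}{(1-\gamma)^2}$ that Lemma~\ref{lemma_2} supplies. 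This step already uses Lemma~\ref{lemma_1} implicitly, since Lemma~\ref{lemma_2} was tightened by converting a state-distribution discrepancy into a policy-distribution discrepancy.

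Next, I would apply Lemma~\ref{lemma_3} directly to replace $\mathbb{E}_{s\sim d^{\pi^*}(s)}[D_{TV}(\pi(\cdot|s)\|\pi^*(\cdot|s))]$ with the sum $\sqrt{\varepsilon'_1/2}+\sqrt{\varepsilon'_2/2}$. Under Assumptions~\ref{assump_3} and~\ref{assump_4}, this upper bound is justified by first splitting $D_{TV}(\pi(\cdot|s)\|\pi^*(\cdot|s))\leq D_{TV}(\pi(\cdot|s)\|\pi_s(\cdot|s))+D_{TV}(\pi_s(\cdot|s)\|\pi^*(\cdot|s))$ via the triangle inequality for total variation, and then bounding each term using Pinsker's inequality and the KL constraints $D_{KL}(\pi\|\pi_s)\leq\varepsilon'_1$ and $D_{KL}(\pi_s\|\pi^*)\leq\varepsilon'_2$ — this is exactly the content of Lemma~\ref{lemma_3}.

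Combining the two steps gives
\begin{equation*}
V^{\pi^*}_r(\rho_0)-V^{\pi}_r(\rho_0)\leq |V^\pi_r(\rho_0)-V^{\pi^*}_r(\rho_0)|\leq \frac{2R_m}{(1-\gamma)^2}\bigl(\sqrt{\varepsilon'_1/2}+\sqrt{\varepsilon'_2/2}\bigr),
\end{equation*}
which is the claimed bound.

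\textbf{Main obstacle.} The theorem itself is essentially a composition, so no substantial new work is required at this step; the real technical effort lives in Lemmas~\ref{lemma_1}--\ref{lemma_3}. Within the composition, the only subtle point I would want to check is that the bound holds in the signed direction $V^{\pi^*}_r-V^\pi_r$ (not merely its absolute value), which follows because $\pi^*$ is optimal in the unconstrained sense for $V_r$ and the right-hand side is nonnegative; hence passing from $|\cdot|$ to the signed quantity is harmless. The conceptual obstacle — of routing through the CVAE-induced safe policy $\pi_s$ rather than through $\pi_b$, which is what makes $\varepsilon'_1,\varepsilon'_2$ tighter than the corresponding quantities in prior work — has already been discharged by the assumption design preceding the lemmas.
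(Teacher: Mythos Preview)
Your proposal is correct and matches the paper's own proof essentially verbatim: the paper simply states that the conclusion is immediately obtained by combining Lemma~\ref{lemma_2} and Lemma~\ref{lemma_3}, which is exactly the chaining you describe. Your additional remark about the signed direction is fine and harmless, though the paper does not bother to mention it.
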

Theorem~\ref{theorem_1} suggests that the performance gap is attributed to the policy distribution discrepancies. This bound is the worst-case bound irrespective of the number of samples in $\mathcal{D}$. However, based on traditional supervised learning, more data (which requires a weak assumption) should reduce the error such that $\pi$ is closer to $\pi^*$. We will investigate this later and show it is indeed the case. In this context, we analyze the constraint violation bound. Due to Eq.~\ref{eq_2}, the safety constraint is dictated by the threshold value $\kappa$. Nevertheless, as the policy learning proceeds, such a constraint may or may not be violated, particularly at the early phase of learning. We aim at deriving the similar worst-case bound. 
\begin{theorem}\label{theorem_2}
    Let Assumptions~\ref{assump_3} and~\ref{assump_4} hold. For the policy $\pi$ optimized by Eq.~\ref{eq_2} with the dataset $\mathcal{D}$, the constraint violation of $\pi$ can be bounded as
    \begin{equation}
        V^{\pi}_c(\rho_0)-\kappa\leq\frac{2C_m}{(1-\gamma)^2}(\sqrt{\varepsilon'_1/2} + \sqrt{\varepsilon'_2/2}).
    \end{equation}
\end{theorem}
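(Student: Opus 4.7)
The plan is to mirror the proof of Theorem~\ref{theorem_1} closely, exploiting the fact that cost-returns share the same structural properties as reward-returns (bounded per-step signals, now in $[0,C_m]$) and that $\pi^*$ is by definition a feasible policy for the constrained problem in Eq.~\ref{eq_2}.

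First I would observe that since $\pi^*$ is the optimal policy of the CMDP formulation, it must be feasible, so $V^{\pi^*}_c(\rho_0)\leq\kappa$. Consequently
\begin{equation*}
V^{\pi}_c(\rho_0)-\kappa \;\leq\; V^{\pi}_c(\rho_0)-V^{\pi^*}_c(\rho_0) \;\leq\; \bigl|V^{\pi}_c(\rho_0)-V^{\pi^*}_c(\rho_0)\bigr|.
\end{equation*}
Next I would establish a cost-analogue of Lemma~\ref{lemma_2}. The derivation of Lemma~\ref{lemma_2} only uses boundedness of the per-step signal together with Lemma~\ref{lemma_1} to convert the state-distribution total-variation gap into a policy-distribution total-variation gap; neither step is specific to rewards. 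Substituting $c$ for $r$ and $C_m$ for $R_m$ throughout therefore yields
\begin{equation*}
\bigl|V^{\pi}_c(\rho_0)-V^{\pi^*}_c(\rho_0)\bigr| \;\leq\; \frac{2C_m}{(1-\gamma)^2}\,\mathbb{E}_{s\sim d^{\pi^*}(s)}\bigl[D_{TV}(\pi(\cdot|s)\|\pi^*(\cdot|s))\bigr].
\end{equation*}
Finally, plugging in Lemma~\ref{lemma_3}, whose statement depends on Assumptions~\ref{assump_3} and~\ref{assump_4} but not on whether the signal being evaluated is reward or cost, produces the claimed upper bound.

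The main subtlety I expect is the first inequality: it only yields a constraint-violation bound if we can assert $V^{\pi^*}_c(\rho_0)\leq\kappa$. Because $\pi^*$ is defined as the optimal policy of the constrained problem, feasibility is baked into the definition, but I would make this explicit, since without it one only controls the two-sided quantity $|V^{\pi}_c-V^{\pi^*}_c|$, which does not directly upper-bound the violation. A secondary item worth verifying is that the cost version of the performance-difference argument really does retain the same $1/(1-\gamma)^2$ factor; this follows because each step — the telescoping performance-difference identity, the bound $|c|\leq C_m$, and the invocation of Lemma~\ref{lemma_1} — is signal-agnostic. No assumption beyond those already imposed is needed, so the resulting bound has exactly the same structural form as Theorem~\ref{theorem_1} with $R_m$ replaced by $C_m$, as desired.
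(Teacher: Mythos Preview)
Your proposal is correct and matches the paper's own proof essentially step for step: the paper first states a cost-analogue of Lemma~\ref{lemma_2} (recorded as Lemma~\ref{lemma_4}), combines it with Lemma~\ref{lemma_3} under Assumptions~\ref{assump_3} and~\ref{assump_4}, and then invokes feasibility of $\pi^*$ (i.e., $V^{\pi^*}_c(\rho_0)\leq\kappa$) to pass from the two-sided bound to the one-sided constraint-violation bound. Your explicit flagging of the feasibility step is exactly the ingredient the paper also relies on.
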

\begin{remark}
We notice that both Theorem~\ref{theorem_1} and Theorem~\ref{theorem_2} imply that the error bounds for the reward and the cost incurred during learning are w.r.t $\frac{1}{(1-\gamma)^2}(\sqrt{\varepsilon'_1/2} + \sqrt{\varepsilon'_2/2})$, differing in a constant. This intuitively makes sense as we have adopted the same IQL to train both critic networks. The dependence on $\frac{1}{(1-\gamma)^2}$ is necessarily inevitable due to the error propagation from the policy distribution discrepancy to the state distribution discrepancy, which has theoretically been justified in~\cite{xu2020error} and~\cite{schulman2015trust}. So far, thanks to Assumption~\ref{assump_3} and Assumption~\ref{assump_4}, the performance gap and the constraint violation are upper bounded. However, would increasing the size of $\mathcal{D}$ reduce them? Particularly, when we have a sufficiently large amount of pre-collected data, can this enforce $V^\pi_c(s)\leq \kappa$ to hold strictly? Inspired by the theory of empirical process~\cite{shorack2009empirical}, we provide the answers to the above questions and the resulting sample complexity in Appendix~\ref{sample_complexity}.
\end{remark}
\vspace{-0.09in}
\begin{remark}
    Another remark is also provided in this context for the connection between the proposed method and theoretical results. In LSPC, we leverage CVAE to reconstruct safe policies residing in the offline dataset $\mathcal{D}$, strategically ensuring that the further optimal solutions are safe. This  justifies Assumption~\ref{assump_3} and Assumption~\ref{assump_4} that provide new upper bounds for the distribution distances between $\pi$ and $\pi_s$, and $\pi_s$ and $\pi^*$. Previously, instead of $\pi_s$, it was the unknown behavior policy $\pi_b$~\cite{yao2024oasis}, which has less assurance of safety than $\pi_s$ and results in relatively looser bounds. Moreover, in LSPC, we adopt the IQL and AWR for critic model updates and policy extraction, which motivates us to use the value function as the metric to evaluate the proposed algorithm. As the policy is learned via a static offline dataset, the performance gap and constraint violation are assessed based on the stationary distributions, instead of a dynamic metric such as regret~\cite{zhong2024theoretical}. Theorem~\ref{theorem_1} and Theorem~\ref{theorem_2} suggest the performance gap and constraint violation during learning are upper bounded by constants that are correlated with the new distribution distance bounds induced by the safety policy $\pi_s$. Additionally, as the policy evaluation and extraction steps are all parametric with deep learning models, the number of data samples plays a central role in controlling the performance, which has been a well-established fact in modern deep learning community. 
    In offline RL, this motivates the need to assess performance gap and constraint violations with respect to the size of $\mathcal{D}$. Thereby, Theorem~\ref{theorem_3} and Theorem~\ref{theorem_4} (In Appendix~\ref{sample_complexity}) imply the decay rate with respect to the size of $\mathcal{D}$. 
    Though the analysis can apply to related algorithms with some adaptation, the constants in these results, particularly $\varepsilon_1'$ and $\varepsilon_2'$, are uniquely defined and thus the theoretical conclusions are tailored to the proposed algorithm. 
\end{remark}

\section{Results and Discussions}

\textbf{Datasets and Metrics.}
This section presents a thorough evaluation of our method within the context of safe offline reinforcement learning. Our evaluation uses the DSRL benchmark \citep{liu2023datasets}, focusing on normalized return and normalized cost to measure performance.
The normalized reward return is given by $R=(R_\pi - R_{min})/(R_{max} - R_{min})$, where $R_\pi$ is the undiscounted total reward accumulated in an episode, and $R_{max}$ and $R_{min}$ are constant for a task and represent the maximum and minimum empirical reward return.
Similarly, the normalized cost return is given by $C = C_\pi/\kappa$, where $\kappa>0$ is the targeted cost threshold.
The evaluation spans across tasks from Metadrive \citep{li2022metadrive}, Safety Gymnasium \citep{ji2023safety} and Bullet Safety Gym \citep{gronauer2022bullet}. More about these tasks and benchmark can be found in appendix \ref{Benchmark Details and Additional Baselines} and \cite{liu2023datasets}. Following the DSRL constraint variation evaluation, each method is evaluated on each dataset with three distinct target cost thresholds and across three random seeds. Although our method is agnostic to cost threshold variation, we adhere to the same evaluation criteria for consistency. A normalized cost return below 1 indicates adherence to safety requirements, with safety being the primary performance criterion.  The results in Table \ref{tab:performance_large_table} use boldface to represent safe agents with normalized costs smaller than 1 and blue color to highlight safe agent(s) with the highest reward return in the dataset. These results demonstrate how our methods
not only adhere to safety constraints but also optimize rewards. 

\begin{table}[h]
    \centering
    \caption{Comparison of our methods with baselines across benchmark tasks. \textbf{Bold} indicates safety, and \textbbf{blue} denotes both safety and high performance.} \label{tab:performance_large_table}
    \vspace{-10pt}
    \resizebox{1.0\textwidth}{!}{
    \rowcolors{2}{green!5}{purple!5}
    \begin{tabular}{|c|cc|cc|cc|cc|cc|cc|}
        \hline
        \textbf{Method} & \multicolumn{2}{c}{\textbf{BC-Safe}} & \multicolumn{2}{c}{\textbf{CDT}}  & \multicolumn{2}{c}{\textbf{CPQ}} & \multicolumn{2}{c}{\textbf{FISOR}} & \multicolumn{2}{c}{\textbf{LSPC-S}} & \multicolumn{2}{c|}{\textbf{LSPC-O}}\\
        \hline
        Task & reward $\uparrow$ & cost $\downarrow$ & reward $\uparrow$ & cost $\downarrow$ & reward $\uparrow$ & cost $\downarrow$ & reward $\uparrow$ & cost $\downarrow$ & reward $\uparrow$ & cost $\downarrow$ & reward $\uparrow$ & cost $\downarrow$ \\
        \hline
        \multicolumn{1}{c}{Metadrive:} \\ \hline
        EasySparse & \textbf{0.11} & \textbf{0.21} & \textbf{0.17} & \textbf{0.23} & \textbf{-0.06} & \textbf{0.07} & \textbf{0.38} & \textbf{0.15} & \textbf{0.62} & \textbf{0.06} & \textbbf{0.71} & \textbbf{0.46}\\
        EasyMean & \textbf{0.04} & \textbf{0.29} & \textbf{0.45} & \textbf{0.54} & \textbf{-0.07} & \textbf{0.07} & \textbf{0.38} & \textbf{0.08} & \textbf{0.62} & \textbf{0.04} & \textbbf{0.69} & \textbbf{0.26}\\
        EasyDense & \textbf{0.11} & \textbf{0.14} & \textbf{0.32} & \textbf{0.62} & \textbf{-0.06} & \textbf{0.03} & \textbf{0.36} & \textbf{0.08} & \textbf{0.55} & \textbf{0.06} & \textbbf{0.68} & \textbbf{0.37} \\
        \hline
        MediumSparse & \textbf{0.33} & \textbf{0.30} & 0.87 & 1.10 & \textbf{-0.08} & \textbf{0.07} & \textbf{0.42} & \textbf{0.07} & \textbbf{0.96} & \textbbf{0.32} & \textbbf{0.94} & \textbbf{0.12} \\
        MediumMean & \textbf{0.31} & \textbf{0.21} & \textbf{0.45} & \textbf{0.75} & \textbf{-0.08} & \textbf{0.05} & \textbf{0.39} & \textbf{0.02} & \textbf{0.85} & \textbf{0.43} & \textbbf{0.94} & \textbbf{0.11}\\
        MediumDense & \textbf{0.24} & \textbf{0.17} & 0.88 & 2.41 &  \textbf{-0.07} & \textbf{0.07} & \textbf{0.49} & \textbf{0.12} & \textbbf{0.93} & \textbbf{0.07} & \textbbf{0.93} & \textbbf{0.01} \\ \hline
        HardSparse & 0.17 & 3.25 & \textbf{0.25} & \textbf{0.41} & \textbf{-0.05} & \textbf{0.06} & \textbf{0.30} & \textbf{0.00} & \textbbf{0.50} & \textbbf{0.24} & \textbbf{0.54} & \textbbf{0.47}\\
        HardMean & \textbf{0.13} & \textbf{0.40} & \textbf{0.33} & \textbf{0.97} & \textbf{-0.05} & \textbf{0.06} & \textbf{0.26} & \textbf{0.09} & \textbbf{0.51} & \textbbf{0.21} & \textbbf{0.53} & \textbbf{0.57} \\
        HardDense & \textbf{0.15} & \textbf{0.22} & \textbf{0.08} & \textbf{0.21} & \textbf{-0.04} & \textbf{0.08} & \textbf{0.30} & \textbf{0.10} & \textbbf{0.47} & \textbbf{0.08} & \textbbf{0.50} & \textbbf{0.23} \\
        \hline
        \textbf{Average} & \textbf{0.18} & \textbf{0.58} & \textbf{0.42} & \textbf{0.80} & \textbf{-0.06} & \textbf{0.06} & \textbf{0.36} & \textbf{0.08} & \textbf{0.67} & \textbf{0.17} & \textbbf{0.72} & \textbbf{0.29}\\
        \hline


        \multicolumn{1}{c}{Safety Gym:} \\
        \hline
        CarButton1 & \textbbf{0.07} & \textbbf{0.85} & 0.21 & 1.6 & 0.42 & 9.66 &  \textbf{-0.02} & \textbf{0.04} & \textbf{-0.02} & \textbf{0.14} & \textbf{-0.01} & \textbf{0.11} \\
        CarButton2 & \textbf{-0.01} & \textbf{0.63} & 0.13 & 1.58 & 0.37 & 12.51  & \textbbf{0.01} & \textbbf{0.09} & \textbf{-0.09} & \textbf{0.21} & \textbf{-0.12} & \textbf{0.39} \\
        CarGoal1   & \textbf{0.24} & \textbf{0.28} & 0.66 & 1.21 & 0.79 & 1.42  & \textbbf{0.49} & \textbbf{0.12} & \textbf{0.22} & \textbf{0.23} & \textbf{0.31} & \textbf{0.40}  \\
        CarGoal2   & \textbbf{0.14} & \textbbf{0.51} & 0.48 & 1.25 & 0.65 & 3.75 & \textbf{0.06} & \textbf{0.05} & \textbf{0.13} & \textbf{0.44} & \textbbf{0.19} & \textbbf{0.42} \\
        CarPush1   & \textbf{0.14} & \textbf{0.33} & \textbbf{0.31} & \textbbf{0.4} & \textbf{-0.03} & \textbf{0.95} & \textbbf{0.28} & \textbbf{0.04} & \textbf{0.18} & \textbf{0.32} & \textbf{0.18} & \textbf{0.33} \\
        CarPush2   & \textbf{0.05} & \textbf{0.45} & 0.19 & 1.3 & 0.24 & 4.25 & \textbf{0.14} & \textbf{0.13} & \textbf{0.02} & \textbf{0.34} & \textbf{0.05} & \textbf{0.62}\\
        \hline
        SwimmerVel & 0.51 & 1.07 & \textbbf{0.66} & \textbbf{0.96} & 0.13 & 2.66 &  \textbf{-0.04} & \textbf{0.00} & \textbf{0.50} & \textbf{0.08} & \textbf{0.44} & \textbf{0.14} \\
        HopperVel & \textbf{0.36} & \textbf{0.67} & \textbf{0.63} & \textbf{0.61} & 0.14 & 2.11 & \textbf{0.17} & \textbf{0.32} & \textbf{0.26} & \textbf{0.39} & \textbbf{0.69} & \textbbf{0.00} \\
        HalfCheetahVel & \textbf{0.88} & \textbf{0.54} & \textbbf{1.0} & \textbbf{0.01} & \textbf{0.29} & \textbf{0.74} & \textbf{0.89} & \textbf{0.00} & \textbf{0.79} & \textbf{0.01} & \textbbf{0.97} & \textbbf{0.10}\\
        Walker2dVel & \textbbf{0.79} & \textbbf{0.04} & \textbbf{0.78} & \textbbf{0.06} & \textbf{0.04} & \textbf{0.21}  & \textbf{0.38} & \textbf{0.36} & 0.56 & 1.28 & \textbbf{0.76} & \textbbf{0.02} \\
        AntVel & \textbbf{0.98} & \textbbf{0.29} & \textbbf{0.98} & \textbbf{0.39} & \textbf{-1.01} & \textbf{0.0} & \textbf{0.89} & \textbf{0.00} & \textbbf{0.95} & \textbbf{0.07} & \textbbf{0.98} & \textbbf{0.45} \\
        \hline
        \textbf{Average} & \textbf{0.38} & \textbf{0.51} & \textbbf{0.55} & \textbbf{0.85} & 0.19 & 3.48 &  \textbf{0.30} & \textbf{0.11} &  \textbf{0.32} & \textbf{0.32} & \textbf{0.40} & \textbf{0.27}\\

        \hline
        \multicolumn{1}{c}{Bullet Safety Gym:} \\
        \hline
        BallRun      & 0.27 & 1.46 & 0.39 & 1.16 & 0.22 & 1.27 & \textbbf{0.18} & \textbbf{0.00} & \textbf{0.08} & \textbf{0.00} & \textbbf{0.14} & \textbbf{0.00}\\
        CarRun       & \textbbf{0.94} & \textbbf{0.22} & \textbbf{0.99} & \textbbf{0.65} & 0.95 & 1.79 & \textbf{0.73} & \textbf{0.04} & \textbf{0.72} & \textbf{0.00} & \textbbf{0.97} & \textbbf{0.13}\\
        DroneRun     & \textbf{0.28} & \textbf{0.74} & \textbbf{0.63} & \textbbf{0.79} & 0.33 & 3.52 & \textbf{0.30} & \textbf{0.16} & \textbf{0.54} & \textbf{0.00} & \textbf{0.57} & \textbf{0.00}\\
        AntRun       & 0.65 & 1.09 & \textbbf{0.72} & \textbbf{0.91} &  \textbf{0.03} & \textbf{0.02} & \textbf{0.45} & \textbf{0.00} & \textbf{0.29} & \textbf{0.04} & \textbf{0.44} & \textbf{0.45} \\ \hline
        BallCircle   & \textbbf{0.52} & \textbbf{0.65} & 0.77 & 1.07 & \textbf{0.64} & \textbf{0.76} & \textbf{0.34} & \textbf{0.00} & \textbf{0.27} & \textbf{0.28} & \textbbf{0.47} & \textbbf{0.01}\\
        CarCircle    & \textbf{0.5}  & \textbf{0.84} & \textbbf{0.75} & \textbbf{0.95} & \textbf{0.71} & \textbf{0.33} & \textbf{0.40} & \textbf{0.03} & \textbf{0.35} & \textbf{0.00} & \textbbf{0.72} & \textbbf{0.04}\\
        DroneCircle  & \textbf{0.56} & \textbf{0.57} & \textbbf{0.60} & \textbbf{0.98} & -0.22 & 1.28 & \textbf{0.48} & \textbf{0.00} & \textbf{0.16} & \textbf{0.00} & \textbbf{0.58} & \textbbf{0.60}\\
        AntCircle   & \textbf{0.40}  & \textbf{0.96} & 0.54 & 1.78 & \textbf{0.00}  & \textbf{0.00} & \textbf{0.20} & \textbf{0.00} & \textbf{0.13} & \textbf{0.02} & \textbbf{0.45} & \textbbf{0.40}\\
        \hline
        \textbf{Average} & \textbf{0.52} & \textbf{0.82} & 0.68 & 1.04 & 0.33 & 1.12 &  \textbf{0.39} & \textbf{0.03} &  \textbf{0.32} & \textbf{0.04} & \textbbf{0.54} & \textbbf{0.20}\\
        \hline
    \end{tabular}
    }
\end{table}

\textbf{Discussion on Empirical Results.}
In the comparative analysis presented in Table \ref{tab:performance_large_table}, our proposed method, LSPC-O, demonstrates a significant performance improvement over existing baselines. Among these, FISOR demonstrates consistency in upholding safety even under strict cost thresholds, as it imposes hard constraints in the RL problem formulation. However, this approach leads to excessively conservative policies, significantly limiting reward-return performance. Another method CDT conditions the policy on target reward and cost returns. In practice, this method is often unreliable because it is challenging to satisfy both conditions simultaneously. Although the authors suggest improvements, our empirical results show frequent safety violations, especially under smaller cost thresholds. Additionally, interpreting results across different prompt conditions is difficult, making prompt selection a non-trivial task in CDT implementation. BC-Safe applies behavior cloning to a safe subset of the dataset, but it lacks consistent safety and offers no empirical guarantees. Its primary limitations are the need for \textit{sufficient} quantity of `safe' data and retraining with the filtered data when the cost threshold is changed. Despite being agnostic to the reward labels in the dataset, this method often outperforms FISOR and CDT in performance, as evident in the table \ref{tab:performance_large_table}.

Our method effectively addresses the limitations observed in these baselines. Unlike SafeBC, we employ weighted regression to derive a safe policy, which theoretically only relies on a much weaker prerequisite on the explicit presence of the safe data points (Assumption~\ref{assump_6}), 
while also ensuring policy improvement using the reward-labels.  Unlike CDT, our method disentangles safety and reward objectives, providing reliable safety guarantees across different tasks and cost thresholds. Additionally, we derive a safe policy in our methodology that can be utilized in safety-critical operations, where constraint satisfaction is prioritized over reward maximization. Finally, compared to FISOR, the conservativeness in our methods can be controlled by adjusting the degree of restriction in the latent safety constraint space, yielding better reward performance while maintaining safety. 

\begin{figure}[hbt!]
    \centering
    \begin{subfigure}[b]{0.48\textwidth}
        \centering
        \includegraphics[width=\textwidth]{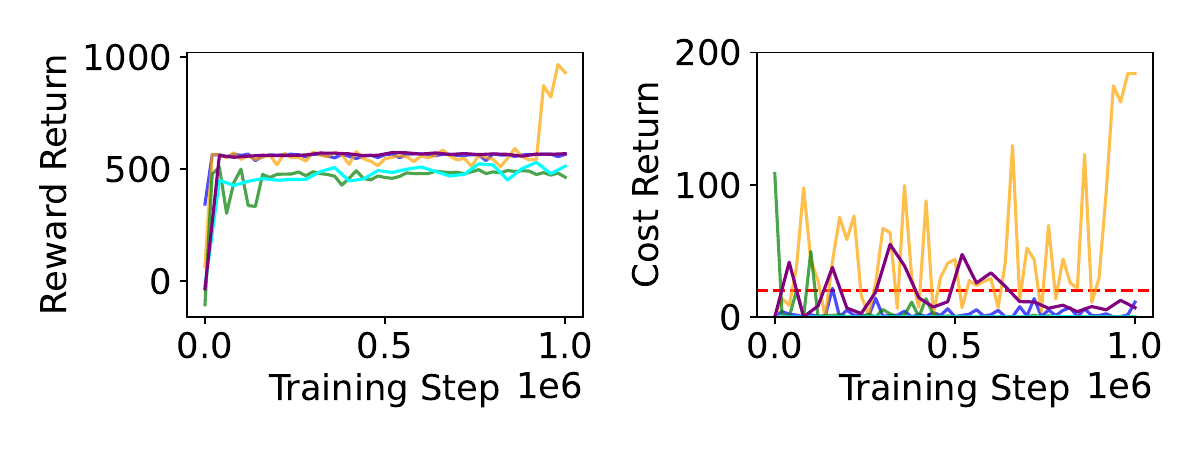}
        \caption{Pybullet Car Run}
        \label{fig:car_run_training}
    \end{subfigure}
    \hfill
    \begin{subfigure}[b]{0.48\textwidth}
        \centering
        \includegraphics[width=\textwidth]{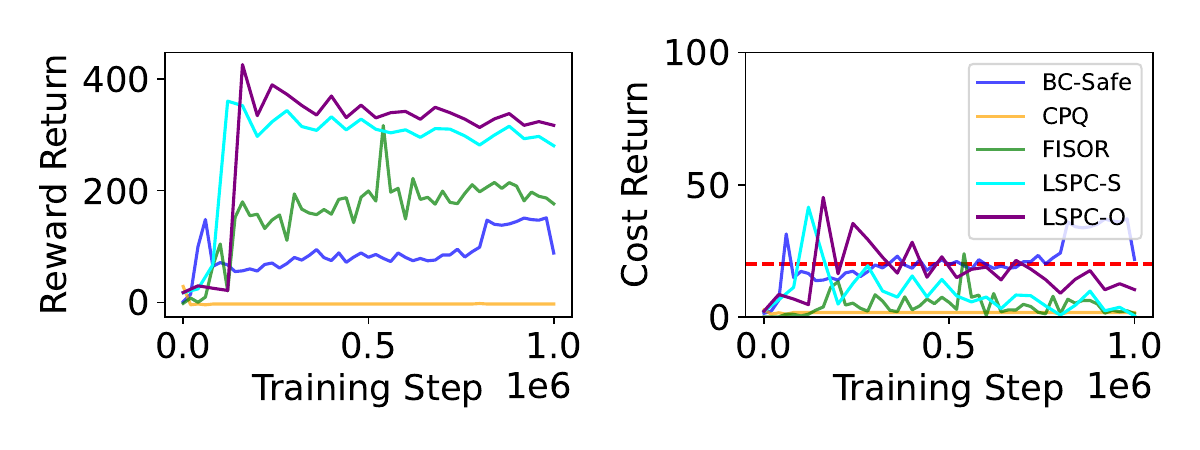}
        \caption{Metadrive Easy Sparse}
        \label{fig:easy_sparse_training}
    \end{subfigure}
    \vspace{-5pt}

    \caption{Training progress illustrating how the reward return and cost return evolve over the course of training, highlighting the agent's accumulated reward and adherence to safety constraints.}
    \label{fig:Training}
    \vspace{-0.1in}
\end{figure}
The graphs in Figure \ref{fig:Training} illustrate the evolution of both reward return and cost return as training progresses, comparing how various algorithms perform in adhering to safety constraints while optimizing for reward across two tasks: Pybullet Car Run and Metadrive Easy Sparse. Our method, LSPC-S, demonstrates a conservative approach with a strong emphasis on safety, while LSPC-O achieves a favorable balance between reward maximization and cost minimization, consistently outperforming baseline methods. This is particularly evident in its lower average episode costs as training progresses. In contrast, FISOR fails to outperform even the conservative LSPC-S in both tasks. Behavior Cloning with safe trajectories (BC-Safe) performs reasonably well in the Car Run task, but shows poor performance in Metadrive Easy Sparse, struggling with both reward accumulation and safety adherence. CPQ achieves a higher reward in Car Run, though at the cost of increased episode costs near the end of training. However, in the Metadrive task, CPQ's performance is trivially safe, with near-zero reward accumulation.

Figure \ref{fig:Visualizations of the learned policy distributions} illustrates how the learned policies operate in the action space for the Pybullet Car Run and Metadrive Easy Sparse tasks. 
The plots on the left show kernel density estimates for three policies: (1) the CVAE policy, (2) the safe policy after applying Latent Safety Prioritized Constraints (LSPC), and (3) the optimal policy within the restricted action space defined by LSPC. The right plots display scatter plots of actions sampled from the CVAE policy, convex hull estimates of actions sampled from the safe policy, and the optimal action predicted by LSPC-O, the reward-optimized policy. The color map represents the reward values $Q_r(s,a)$ of the sampled actions for the given state.
This representation underscores the ability of our method to prioritize safety while maximizing reward, as the policy concentrates on regions of high $Q_r$ within the restricted latent space.

\begin{figure}[h!]
    \centering
    \begin{subfigure}[b]{0.49\textwidth}
        \centering
        \includegraphics[width=\textwidth]{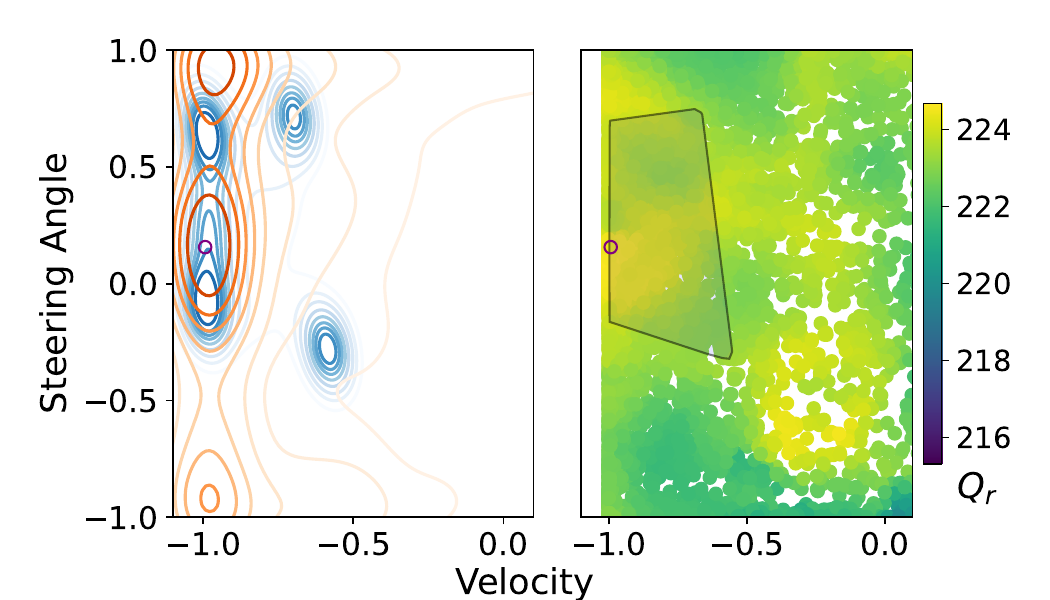}
        \caption{Pybullet Car Run}
        \label{fig:car_run_action_space}
    \end{subfigure}
    \hfill
    \begin{subfigure}[b]{0.49\textwidth}
        \centering
        \includegraphics[width=\textwidth]{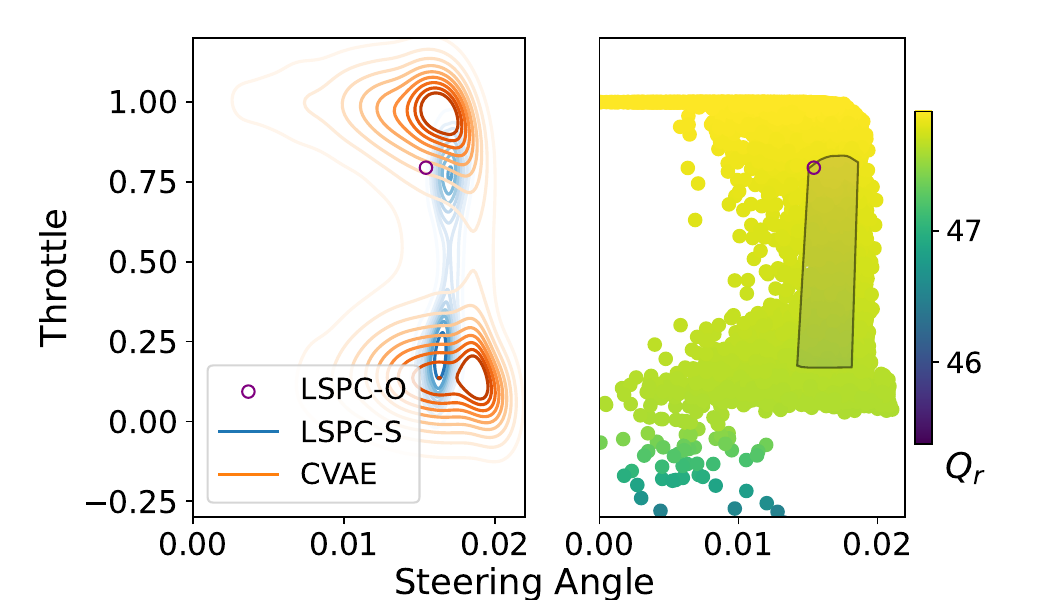}
        \caption{Metadrive Easy Sparse}
        \label{fig:easy_sparse_action_space}
    \end{subfigure}
    \vspace{-5pt}
    \caption{Action-space distributions for (a) Pybullet Car Run and (b) Metadrive Easy Sparse. Kernel density estimates (left) and convex hulls within the scatter plot (right) visualize the unconstrained CVAE policy, the constrained safe policy (LSPC-S), and the optimal policy (LSPC-O) identified by the latent safety encoder.}
    \label{fig:Visualizations of the learned policy distributions}
\end{figure}
\vspace{-0.1in}
\section{Related Works}
Offline RL has demonstrated success in domains such as robotics, healthcare, and recommendation systems \citep{tarasov2024corl, prudencio2023survey, lange2012batch}. However, the absence of online interactions can lead to out-of-distribution (OOD) actions that overestimate the value function due to unseen data \citep{levine2020offline}. Solutions include value function regularization \citep{kumar2020conservative, kumar2019stabilizing, kostrikov2021offline}, policy constraints to stay near the behavior policy \citep{fujimoto2021minimalist, wu2019behavior, fujimoto2019off}, and avoidance of OOD actions' evaluations \citep{chen2021decision, kostrikov2021iql, janner2021offline}. Generative models like CVAE \citep{fujimoto2019off, zhou2021plas} and diffusion-based methods \citep{janner2022planning, wang2022diffusion} have also been applied to sample actions in the latent space, reducing OOD occurrences.

Safe Offline RL aims to ensure safety within offline RL by incorporating techniques that balance policy improvement and safety constraints \citep{liu2023datasets}. Methods such as COptiDICE \citep{lee2022coptidice} and OASIS \citep{yao2024oasis} extend distribution shaping \citep{lee2021optidice} to constrained RL settings. Lagrangian-based approaches like CPQ \citep{xu2022constraints}, BCQ-Lag, and BEAR-Lag \citep{liu2023datasets} integrate safe policy learning within offline RL methods like CQL \citep{kumar2020conservative}, BCQ \citep{fujimoto2019off} and BEAR \citep{kumar2019stabilizing}. Other approaches, such as VOCE \citep{guan2024voce}, use probabilistic inference with non-parametric variational distributions (similar to \cite{liu2022constrained}), while models like CDT \citep{liu2023constrained} and Saformer \citep{zhang2023saformer} leverage decision-transformer \citep{chen2021decision} frameworks for cost-prompted sequence modeling. Other generative model-based baselines in safe offline RL include CPQ, which uses a CVAE for behavior policy modeling, and FISOR \citep{zheng2024safe}, which employs a diffusion model \citep{ho2020denoising} as an actor, trained to select actions based on feasible region identification.

\section{Conclusions}
In this work, we introduced the Latent Safety-Prioritized Constraints (LSPC) framework for safe offline reinforcement learning, addressing the critical challenge of balancing reward maximization with stringent safety constraints. By leveraging Conditional Variational Autoencoders to model the latent safety constraints, we enable a principled approach to enforce safety while optimizing cumulative rewards within the constraint space. Our method offers a robust solution that avoids the pitfalls of overly restrictive or loose constraints, leading to policies that are both safe and high-performing. Theoretical analysis of policy performance illustrates that the reconstructed safety policy from the unknown behavior policy shrinks the performance gap induced by the value function. The additional sample complexity analysis also reveals the decay rate of the gap w.r.t. the number of samples. Extensive empirical evaluations demonstrate that LSPC-O consistently outperforms existing/recent methods on challenging benchmarks. 

Although our proposed methods demonstrate strong theoretical backing and empirical performance, a key limitation may lie in determining how restrictive the latent space for the optimal policy search should be (controlled by $\epsilon$). Future work could focus on deriving theoretical bounds that establish a relationship between this restrictiveness and the cost value, providing both theoretical insights and practical guidelines. From an application perspective, transferring learned policies to real-world robotic tasks via sim2real or few-shot fine-tuning is a promising avenue. Additionally, exploring safety prioritization dynamics in multi-agent cooperative or adversarial settings offers another exciting direction for both theory and practice.

\subsubsection*{Acknowledgments}
This work was partly supported by the National Science Foundation, USA, under grants NSF CNS-2313104, CAREER CNS-1845969 and CPS Frontier CNS-1954556.

\bibliography{iclr2025_conference}
\bibliographystyle{iclr2025_conference}

\clearpage
\appendix
\section{Appendix}

\subsection{Additional Analysis}\label{additional_analysis}
In this section, we provide additional analysis and numerical results to support or validate our proposed algorithm. In what follows, we provide the missing proof for the main theoretical results.
\subsubsection{Auxiliary Assumptions}\label{auxiliary_assumption}
We present the following assumptions for completeness and illustration, though we will not use them for characterizing the main results.
\begin{assumption}\label{assump_1}
    There exists a constant $\varepsilon_1>0$ such that for the policy $\pi$ optimized by Eq.~\ref{eq_2} and the behavior policy $\pi_b$, they satisfy the following relationship
    \begin{equation}
        D_{KL}(\pi||\pi_b)\leq \varepsilon_1.
    \end{equation}
\end{assumption}
\begin{assumption}\label{assump_2}
    There exists a constant $\varepsilon_2>0$ such that for the optimal policy $\pi^*$ and the behavior policy $\pi_b$, they satisfy the following relationship
    \begin{equation}
        D_{KL}(\pi_b||\pi^*)\leq \varepsilon_2.
    \end{equation}
\end{assumption}
Assumption~\ref{assump_1} retains the behavior regularization such that the distribution shift will not be significant. While Assumption~\ref{assump_2} secures a region where $\pi$ is a valid policy. Due to the non-asymptotic manner, $\pi^*$ may not be practically achievable. However, such an optimal policy should not be far away from $\pi_b$. Otherwise, the distribution drift will degrade the model performance. These two assumptions have been used in a recent work~\cite{yao2024oasis} to dictate the performance gap. Due to the intermediate safe policy $\pi_s$ we introduce in this work, the performance gap should be narrowed. Since intuitively, $\pi_s$ should be closer to $\pi$ and $\pi^*$ than $\pi_b$.
\subsubsection{Missing Proofs in Section~\ref{theory}}
For completeness, we restate all lemmas and theorems in this context.

\textbf{Lemma~\ref{lemma_2}}
    Define $V^\pi_r(\rho_0):=\mathbb{E}_{s_0\sim\rho}[V^\pi_r(s_0)]$ given a policy $\pi$. Then, we have the following relationship:
    \begin{equation}
        |V^\pi_r(\rho_0)-V^{\pi^*}_r(\rho_0)|\leq \frac{2R_m}{(1-\gamma)^2}\mathbb{E}_{s\sim d^{\pi^*}(s)}[D_{TV}(\pi(\cdot|s)||\pi^*(\cdot|s))].
    \end{equation}
\begin{proof}
    In light of the stationary state-action distribution, the following relationship is obtained
    \begin{equation}
        \begin{split}
            |V^\pi_r(\rho_0)-V^{\pi^*}_r(\rho_0)|&=\frac{1}{1-\gamma}|\mathbb{E}_{(s,a)\sim d^\pi(s,a)}[r(s,a)]-\mathbb{E}_{(s,a)\sim d^{\pi^*}(s,a)}[r(s,a)]|\\&\leq\frac{R_m}{1-\gamma}\sum_{(s,a)}|d^\pi(s,a)-d^{\pi^*}(s,a)|\\&=\frac{2R_m}{1-\gamma}D_{TV}(d^\pi(s,a)||d^{\pi^*}(s,a))\\&\leq \frac{2R_m}{1-\gamma}(D_{TV}(d^\pi(s,a)||d^{\pi^*}(s)\cdot\pi(\cdot|s))+D_{TV}(d^{\pi^*}(s)\cdot\pi(\cdot|s)||d^{\pi^*}(s,a)))\\&=\frac{2R_m}{1-\gamma}D_{TV}(d^\pi(s)||d^{\pi^*}(s))+\frac{2R_m}{1-\gamma}\mathbb{E}_{s\sim d^{\pi^*}(s)}[D_{TV}(\pi(\cdot|s)||\pi^*(\cdot|s))]\\&\leq \frac{2R_m}{(1-\gamma)^2}\mathbb{E}_{s\sim d^{\pi^*}(s)}[D_{TV}(\pi(\cdot|s)||\pi^*(\cdot|s))].
        \end{split}
    \end{equation}
The first inequality is due to the maximum immediate reward. The second inequality follows from the Triangle inequality. The last inequality is based on Lemma~\ref{lemma_1}. This completes the proof.
\end{proof}

\textbf{Lemma~\ref{lemma_3}}
    Let Assumptions~\ref{assump_3} and~\ref{assump_4} hold. We have the following relationship:
    \begin{equation}
        \mathbb{E}_{s\sim d^{\pi^*}(s)}[D_{TV}(\pi(\cdot|s)||\pi^*(\cdot|s))]\leq \sqrt{\frac{\varepsilon'_1}{2}} + \sqrt{\frac{\varepsilon'_2}{2}}.
    \end{equation}
\begin{proof}
    Based on the Triangle inequality, the following is attained
    \begin{equation}
        D_{TV}(\pi(\cdot|s)||\pi^*(\cdot|s))\leq D_{TV}(\pi(\cdot|s)||\pi_s(\cdot|s)) + D_{TV}(\pi_s(\cdot|s)||\pi^*(\cdot|s)).
    \end{equation}
    Thus,
    \begin{equation}
        \begin{split}
            &\mathbb{E}_{s\sim d^{\pi^*}(s)}[D_{TV}(\pi(\cdot|s)||\pi^*(\cdot|s))]\\&\leq \mathbb{E}_{s\sim d^{\pi^*}(s)}[D_{TV}(\pi(\cdot|s)||\pi_s(\cdot|s))] + \mathbb{E}_{s\sim d^{\pi^*}(s)}[D_{TV}(\pi_s(\cdot|s)||\pi^*(\cdot|s))]\\&\leq \mathbb{E}_{s\sim d^{\pi^*}(s)}[\sqrt{D_{KL}(\pi(\cdot|s)||\pi_s(\cdot|s))/2}]+\mathbb{E}_{s\sim d^{\pi^*}(s)}[\sqrt{D_{KL}(\pi_s(\cdot|s)||\pi^*(\cdot|s))/2}]\\&\leq \sqrt{\mathbb{E}_{s\sim d^{\pi^*}(s)}[D_{KL}(\pi(\cdot|s)||\pi_s(\cdot|s))]/2}+\sqrt{\mathbb{E}_{s\sim d^{\pi^*}(s)}[D_{KL}(\pi_s(\cdot|s)||\pi^*(\cdot|s))]/2}\\&\leq\sqrt{\frac{\varepsilon'_1}{2}} + \sqrt{\frac{\varepsilon'_2}{2}}.
        \end{split}
    \end{equation}
    The second inequality follows from the Pinsker's inequality and the third is based on the Jensen's inequality. This completes the proof.
\end{proof}
\textbf{Theorem~\ref{theorem_1}}
    Let Assumptions~\ref{assump_3} and~\ref{assump_4} hold. For the policy $\pi$ optimized by Eq.~\ref{eq_2} with the dataset $\mathcal{D}$, the performance gap between $\pi$ and $\pi^*$ can be bounded as
    \begin{equation}
        V^{\pi^*}_r(\rho_0)-V^{\pi}_r(\rho_0)\leq \frac{2R_m}{(1-\gamma)^2}(\sqrt{\frac{\varepsilon'_1}{2}}+\sqrt{\frac{\varepsilon'_2}{2}}).
    \end{equation}
\begin{proof}
The conclusion is immediately obtained by combining the conclusions from Lemma~\ref{lemma_2} and Lemma~\ref{lemma_3}.
\end{proof}
To derive the constraint violation bound, we first need to establish a similar conclusion as in Lemma~\ref{lemma_2} based on the cost value functions $V^\pi_c(\rho_0)$ and $V^{\pi^*}_c(\rho_0)$.
\begin{lemma}\label{lemma_4}
    Define $V^\pi_c(\rho_0):=\mathbb{E}_{s_0\sim\rho_0}[V^\pi_c(s_0)]$ given a policy $\pi$. Then, we have the following relationship:
    \begin{equation}
        |V^\pi_c(\rho_0)-V^{\pi^*}_c(\rho_0)|\leq \frac{2C_m}{(1-\gamma)^2}\mathbb{E}_{s\sim d^{\pi^*}(s)}[D_{TV}(\pi(\cdot|s)||\pi^*(\cdot|s))].
    \end{equation}
\end{lemma}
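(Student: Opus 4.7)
The proof will mirror the one for Lemma~\ref{lemma_2} essentially verbatim, replacing the reward $r$ by the cost $c$ and the maximum immediate reward $R_m$ by the maximum immediate cost $C_m$. The first step is to rewrite the difference $V^\pi_c(\rho_0) - V^{\pi^*}_c(\rho_0)$ using the standard identity
\begin{equation*}
V^\pi_c(\rho_0) = \frac{1}{1-\gamma}\mathbb{E}_{(s,a)\sim d^\pi(s,a)}[c(s,a)],
\end{equation*}
which follows from the definition of the stationary state-action distribution and $V^\pi_c(\rho_0) = \mathbb{E}_{s_0\sim\rho_0}[V^\pi_c(s_0)]$. Taking the absolute value of the difference of two such expectations and using the bound $c(s,a)\in[0,C_m]$, I would obtain
\begin{equation*}
|V^\pi_c(\rho_0)-V^{\pi^*}_c(\rho_0)| \leq \frac{C_m}{1-\gamma}\sum_{(s,a)}|d^\pi(s,a)-d^{\pi^*}(s,a)| = \frac{2C_m}{1-\gamma}D_{TV}(d^\pi(s,a)\|d^{\pi^*}(s,a)).
\end{equation*}

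The second step is to decompose the TV distance on the joint distribution. Using the triangle inequality with intermediate distribution $d^{\pi^*}(s)\cdot\pi(\cdot|s)$, I get
\begin{equation*}
D_{TV}(d^\pi(s,a)\|d^{\pi^*}(s,a)) \leq D_{TV}(d^\pi(s)\|d^{\pi^*}(s)) + \mathbb{E}_{s\sim d^{\pi^*}(s)}[D_{TV}(\pi(\cdot|s)\|\pi^*(\cdot|s))],
\end{equation*}
where the first term comes from comparing $d^\pi(s,a)$ with $d^{\pi^*}(s)\cdot\pi(\cdot|s)$ (which collapses to the marginal state discrepancy) and the second comes from comparing $d^{\pi^*}(s)\cdot\pi(\cdot|s)$ with $d^{\pi^*}(s,a) = d^{\pi^*}(s)\cdot\pi^*(\cdot|s)$.

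Finally, I invoke Lemma~\ref{lemma_1} to replace $D_{TV}(d^\pi(s)\|d^{\pi^*}(s))$ by $\frac{\gamma}{1-\gamma}\mathbb{E}_{s\sim d^{\pi^*}(s)}[D_{TV}(\pi(\cdot|s)\|\pi^*(\cdot|s))]$ and collect terms, yielding a factor $\frac{1}{1-\gamma}$ times the policy TV expectation; combined with the leading $\frac{2C_m}{1-\gamma}$ this gives the claimed $\frac{2C_m}{(1-\gamma)^2}$ coefficient. There is no genuine obstacle here since the proof is structurally identical to that of Lemma~\ref{lemma_2}; the only care needed is to ensure that the cost function's boundedness constant $C_m$ is used consistently in place of $R_m$ and that Lemma~\ref{lemma_1}, stated in terms of the policy distribution discrepancy, applies unchanged because it concerns only state-distribution propagation and is independent of whether the downstream signal is reward or cost.
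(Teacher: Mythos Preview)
Your proposal is correct and takes exactly the same approach as the paper, which simply states that the proof follows similarly from that of Lemma~\ref{lemma_2}. You have spelled out the substitution of $c$ for $r$ and $C_m$ for $R_m$ in full detail, but structurally there is no difference.
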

\begin{proof}
The proof follows similarly from that in Lemma~\ref{lemma_2}.
\end{proof}
\textbf{Theorem~\ref{theorem_2}}
    Let Assumptions~\ref{assump_3} and~\ref{assump_4} hold. For the policy $\pi$ optimized by Eq.~\ref{eq_2} with the dataset $\mathcal{D}$, the constraint violation of $\pi$ can be bounded as
    \begin{equation}
        V^{\pi}_c(\rho_0)-\kappa\leq\frac{2C_m}{(1-\gamma)^2}(\sqrt{\frac{\varepsilon'_1}{2}}+\sqrt{\frac{\varepsilon'_2}{2}}).
    \end{equation}
\begin{proof}
    According to Assumption~\ref{assump_3} and Assumption~\ref{assump_4}, combining the conclusion from Lemma~\ref{lemma_4}, we have
    \begin{equation}
        |V^\pi_c(\rho_0)-V^{\pi^*}_c(\rho_0)|\leq \frac{2C_m}{(1-\gamma)^2}(\sqrt{\frac{\varepsilon'_1}{2}}+\sqrt{\frac{\varepsilon'_2}{2}})
    \end{equation}
    Combining the above inequality with the fact that the optimal policy is constraint satisfactory, i.e., $V^{\pi^*}_c(\rho_0)\leq \kappa$, yields the desirable result.
\end{proof}
\subsection{Analysis for Decay Rate and Sample Complexity}\label{sample_complexity}
Recalling the following relationship and applying the importance sampling to it, we have:
\begin{equation}\label{eq_17}
\begin{split}
    |V^\pi_r(\rho_0)-V^{\pi^*}_r(\rho_0)|&\leq\frac{2R_m}{(1-\gamma)^2}\mathbb{E}_{s\sim d^{\pi^*}(s)}[D_{TV}(\pi(\cdot|s)||\pi^*(\cdot|s))]\\&=\frac{2R_m}{(1-\gamma)^2}\mathbb{E}_{s\sim d^{\pi_b}(s)}[\frac{d^{\pi^*}(s)}{d^{\pi_b}(s)}D_{TV}(\pi(\cdot|s)||\pi^*(\cdot|s))]\\&\leq \frac{2R_m}{(1-\gamma)^2}\mathbb{E}_{s\sim d^{\pi_b}(s)}[D_{TV}(\pi(\cdot|s)||\pi^*(\cdot|s))].
\end{split}
\end{equation}
The last inequality follows from $\mathbb{E}_{s\sim d^{\pi_b}(s)}[\frac{d^{\pi^*}(s)}{d^{\pi_b}(s)}]=1$. The reason why we use $\pi_b$ instead of $\pi_s$ has two folds. First, the action of $\pi$ is not sampled from actions induced by $\pi_s$ as the extra encoder block in the pipeline also takes input directly from $\mathcal{D}$. Second, directly using $d^{\pi_s}(s)$ will yield a relationship about how the performance gap varies along with the size of a dataset only comprising safe samples. However, this is not practically feasible since $\mathcal{D}$ can contain samples generated by both safe and unsafe policies. Next, we will derive how $\mathbb{E}_{s\sim d^{\pi_b}(s)}[D_{TV}(\pi(\cdot|s)||\pi^*(\cdot|s))]$ evolves with the size of $\mathcal{D}$. With the Pinsker's inequality, $D_{TV}(\cdot||\cdot)$ can be converted to $D_{KL}(\cdot||\cdot)$. Hence, to achieve the best performance $V^{\pi^*}_r(\rho_0)$, it is equivalent to minimizing $\mathbb{E}_{s\sim d^{\pi_b}(s)}[D_{KL}(\pi(\cdot|s)||\pi^*(\cdot|s))]$ by using $\mathcal{D}$. Suppose that the size of $\mathcal{D}$ is $N$ and that the learned policy is parameterized by $w\in\mathbb{R}^n$, such that the following minimization problem can be obtained: 
\begin{equation}\label{eq_18}\textnormal{min}_{w\in\mathbb{R}^n} \frac{1}{N}\sum_{i=1}^ND_{KL}(\pi_w(\cdot|s_i)||\pi^*(\cdot|s_i)).\end{equation}
Next, we start with the an assumption for the function of KL-divergence.
\begin{assumption}\label{assump_5}
    Suppose that the learned policy optimized by Eq.~\ref{eq_2} is parameterized by $w\in\mathbb{R}^n$ and that its parameter space is denoted by $\mathcal{W}\subset\mathbb{R}^n$. Let $v_w(s)=D_{KL}(\pi_w(\cdot|s)||\pi^*(\cdot|s))$. The function class $\mathcal{F}:=\{v_w(\cdot):\mathcal{S}\to\mathbb{R}|w\in\mathcal{W}\}$ is a Donsker class such that it satisfies Donsker's theorem~\cite{csorgHo2003donsker}. Additionally, its variance $\mathbb{V}(v_w(s)|s\sim d^{\pi_b}(s))$ is assumed to be bounded above for all $w\in\mathcal{W}$.
\end{assumption}
This assumption is key to obtain the rate of how policy distribution discrepancy between $\pi_w$ and $\pi^*$ decays when the size of $\mathcal{D}$ increases. It is a common assumption when considering training with finite data samples in an empirical process~\cite{ma2005robust,geer2000empirical}. 

In general, the offline dataset $\mathcal{D}$ should include samples produced by the safe and unsafe policies. However, unfortunately, in many existing works, there is no discussion on how good the data quality is, as it will influence the model learning. Denote by $\mathcal{D}_{s}$ and $\mathcal{D}_{u}$ the safe and unsafe data subsets in $\mathcal{D}$ and $N_s$ and $N_u$ the corresponding sizes. We now conduct a thought experiment in this context to motivate the following assumption. In a scenario, if $0<N_s/N\ll0.01$, which means the number of safe samples are quite small, one may argue how the agent would learn the optimal policy from such a dataset $\mathcal{D}$, while ensuring the safety constraint. Ideally, since we resort to the advantage weighted regression in our work, in an \textit{asymptotic} manner with infinitely many iterations, i.e., $T\to\infty$ (or a more practical sense, \textit{a sufficiently large number of iterations}), we should attain $\pi^*$ even when $N_s=1$. Certainly, if $N_s=0$, then the learning would fail since there is no safe policy to be induced from $\mathcal{D}$, regardless of what $T$ is. Thus, the above discussion essentially results in a more difficult challenge between the time complexity and the sample complexity, which can be a key future work of interest. In this work, we pay only attention to the sample complexity such that the following assumption should be imposed on the dataset $\mathcal{D}$.
\begin{assumption}\label{assump_6}
    Consider a pre-collected dataset $\mathcal{D}$ such that its safe data subset $\mathcal{D}_s\neq\emptyset$. Denoting by $N_s$ the size of $\mathcal{D}_s$, the ratio $N_s/N>0$.
\end{assumption}
Though in Assumption~\ref{assump_6} we need $\mathcal{D}$ to include at least some safe data samples, we don't really require a specific number due to the proposed method. Compared to BCSafe~\cite{liu2023datasets}, which is only trained with safe trajectories that satisfy the constraints, Assumption~\ref{assump_6} is much weaker and more practically feasible.
Thus, we arrive at the following main result.

\begin{theorem}\label{theorem_3}
    Let Assumptions~\ref{assump_5} and~\ref{assump_6} hold. Define $V^\pi_r(\rho_0):=\mathbb{E}_{s_0\sim\rho_0}[V^{\pi}_r(s_0)]$ given a policy $\pi$. For the policy $\pi_w$ optimized by Eq.~\ref{eq_18} with a dataset $\mathcal{D}$ consisting of $N$ samples, the performance gap satisfies the following relationship
    \begin{equation}
        V^{\pi^*}_r(\rho_0)-V^{\pi_w}_r(\rho_0)\leq \mathcal{O}\bigg(\frac{1}{N^{0.25+\upsilon}}\bigg),
    \end{equation}
for any $\upsilon>0$, when $N\to\infty$.
\end{theorem}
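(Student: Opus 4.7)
The plan is to build on equations~17 and~18 in the excerpt and then use empirical process theory to control the finite-sample error from training on $N$ samples. Writing $v_w(s):=D_{KL}(\pi_w(\cdot|s)\|\pi^*(\cdot|s))$ and $P v_w := \mathbb{E}_{s\sim d^{\pi_b}(s)}[v_w(s)]$, I first apply Pinsker's inequality and then Jensen's inequality to the bound in equation~17, yielding
\[
V^{\pi^*}_r(\rho_0) - V^{\pi_w}_r(\rho_0) \leq \frac{\sqrt{2}\, R_m}{(1-\gamma)^2}\sqrt{P v_{\hat w}},
\]
where $\hat w$ is the empirical minimizer from equation~18. So the remaining task is to show that $P v_{\hat w}$ decays at rate $N^{-1/2 - 2\upsilon}$, since the claimed reward-gap rate is then obtained by taking the square root.

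Next, I invoke Assumption~\ref{assump_5}. Because $\mathcal{F} = \{v_w : w\in\mathcal{W}\}$ is Donsker, $\sqrt{N}(P_N - P)$ converges in distribution uniformly on $\mathcal{F}$, giving $\sup_{w\in\mathcal{W}} |P_N v_w - Pv_w| = O_p(N^{-1/2})$. Since KL divergences are nonnegative and $v_{w^*} \equiv 0$ at the realizable optimum $\pi^* = \pi_{w^*}$, the empirical optimality of $\hat w$ yields $P_N v_{\hat w} \leq P_N v_{w^*} = 0$, so $P v_{\hat w} \leq \sup_w |P_N v_w - Pv_w|$. Combining this identity with the bounded-variance condition in Assumption~\ref{assump_5} and a high-probability refinement of Donsker's theorem (picking up an extra $N^{\upsilon'}$ factor for any $\upsilon'>0$) pushes the bound to $P v_{\hat w} = O(N^{-1/2 - 2\upsilon})$. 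Substituting back into the square root produces the claimed rate. Assumption~\ref{assump_6} enters here by guaranteeing that $\mathcal{D}_s$ is nonempty, which is what makes the importance-sampling step in equation~17 nondegenerate and ensures that $d^{\pi_b}$ overlaps the support of $d^{\pi^*}$ so the reduction to a $d^{\pi_b}$-expectation is actually informative.

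The main obstacle I anticipate is the refinement step that upgrades the raw Donsker rate $O_p(N^{-1/2})$ into $O(N^{-1/2-2\upsilon})$ at the empirical minimizer. The standard route is a Talagrand-type concentration inequality combined with a localization argument on a shrinking neighborhood of $w^*$ where the variance $\mathbb{V}(v_w\mid s\sim d^{\pi_b})$ is small, which is exactly the regime in which the bounded-variance hypothesis of Assumption~\ref{assump_5} bites. A secondary technical subtlety is that $\pi^*$ may only lie in the closure of the parametric family $\{\pi_w:w\in\mathcal{W}\}$ rather than exactly in it; the proof then takes $w^*$ to be the best-in-class projection and absorbs the (vanishing) approximation bias into the constant, noting that the Donsker hypothesis effectively enforces realizability in the asymptotic regime considered here.
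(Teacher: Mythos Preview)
Your outline matches the paper's proof closely: both start from equation~17, apply Pinsker's and then Jensen's inequality to reduce the performance gap to $\sqrt{Pv_{\hat w}}$ (up to constants), and then invoke the Donsker assumption on $\mathcal{F}$ to control the finite-sample error. The paper also uses the realizability step you make explicit --- it silently drops $\frac{1}{N}\sum_i v_{\hat w}(s_i)$, i.e.\ takes $P_N v_{\hat w}=0$, so that $Pv_{\hat w}=Pv_{\hat w}-P_N v_{\hat w}$; your derivation via $P_N v_{\hat w}\le P_N v_{w^*}=0$ is the same move made precise.

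The only substantive divergence is the rate-extraction step. The paper does \emph{not} go through Talagrand-type concentration or a localization argument. After writing
\[
N^{1/4}\sqrt{Pv_{\hat w}/2}=\sqrt{\tfrac{1}{2}\,N^{1/2}\bigl(Pv_{\hat w}-P_N v_{\hat w}\bigr)},
\]
it asserts directly from Donsker's theorem that the right-hand side tends to $0$ in probability and reads off the $\mathcal{O}(N^{-(1/4+\upsilon)})$ rate from that. Your instinct that this is the fragile point is well placed: convergence in distribution to a nondegenerate Gaussian does not by itself yield convergence to zero in probability, so the paper's step here is at least as heuristic as the gap you flag. Your Talagrand-plus-localization plan is a more principled attempt to close it, but it goes beyond what the paper actually does. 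As a minor note, the paper does not explicitly invoke Assumption~\ref{assump_6} in the body of the proof; it functions as a background well-posedness condition rather than through the support-overlap argument you sketch.
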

\begin{proof}
    By Pinsker's inequality and Jensen's inequality, we have the following relationship
    \begin{equation}\label{eq_20}
        \mathbb{E}_{s\sim d^{\pi_b}(s)}[D_{TV}(\pi(\cdot|s)||\pi^*(\cdot|s))]\leq \sqrt{\mathbb{E}_{s\sim d^{\pi_b}(s)}[D_{KL}(\pi_s(\cdot|s)||\pi^*(\cdot|s))]/2}.
    \end{equation}
    Based on the traditional supervised learning, it suffices to solve the empirical risk minimization as shown in Eq.~\ref{eq_18} so as to minimize $\mathbb{E}_{s\sim d^{\pi_b}(s)}[D_{KL}(\pi_s(\cdot|s)||\pi^*(\cdot|s))]$ in the upper bound. By Assumption~\ref{assump_5}, the distribution function induced by the state distribution $d^{\pi_b}(s)$ is $\mathbb{E}_{s\sim d^{\pi_b}(s)}[v_w(s)]$. Similarly, its empirical distribution function is $\frac{1}{N}\sum_{i=1}^Nv_w(s_i)$. Hence, based on Donsker's theorem, we can acquire
    \begin{equation}
        \sqrt{N}(\mathbb{E}_{s\sim d^{\pi_b}(s)}[v_w(s)]-\frac{1}{N}\sum_{i=1}^Nv_w(s_i))\sim\mathcal{N}(0,\mathbb{V}(v_w(s)|s\sim d^{\pi_b}(s)))
    \end{equation}
    which tells us that the centered and scaled version of $\frac{1}{N}\sum_{i=1}^Nv_w(s_i)$ converges in distribution to a normal distribution with mean 0 and bounded variance $\mathbb{V}(v_w(s)|s\sim d^{\pi_b}(s))$.
    Immediately, we have
    \begin{equation}
        \sqrt{N}(\mathbb{E}_{s\sim d^{\pi_b}(s)}[v_w(s)]-\frac{1}{N}\sum_{i=1}^Nv_w(s_i))\xrightarrow{N\to\infty}0,\;\textnormal{in probability}.
    \end{equation}
    With this in hand, multiplying Eq.~\ref{eq_20} by $N^{0.25}$ can be rewritten as
    \begin{equation}
        \begin{split}
            &N^{0.25}\mathbb{E}_{s\sim d^{\pi_b}(s)}[D_{TV}(\pi(\cdot|s)||\pi^*(\cdot|s))]\\&\leq N^{0.25}\sqrt{\mathbb{E}_{s\sim d^{\pi_b}(s)}[D_{KL}(\pi_s(\cdot|s)||\pi^*(\cdot|s))]/2}\\&=N^{0.25}\sqrt{\mathbb{E}_{s\sim d^{\pi_b}(s)}[v_w(s)]/2}\\&=N^{0.25}\sqrt{\frac{1}{2}\bigg(\mathbb{E}_{s\sim d^{\pi_b}(s)}[v_w(s)]-\frac{1}{N}\sum_{i=1}^Nv_w(s_i)\bigg)}\\&=\sqrt{\frac{1}{2}N^{0.5}\bigg(\mathbb{E}_{s\sim d^{\pi_b}(s)}[v_w(s)]-\frac{1}{N}\sum_{i=1}^Nv_w(s_i)\bigg)}\xrightarrow{N\to\infty}0,\;\textnormal{in probability}.
        \end{split}
    \end{equation}
    Thus, the above equation implies that $\mathbb{E}_{s\sim d^{\pi_b}(s)}[D_{TV}(\pi(\cdot|s)||\pi^*(\cdot|s))]$ should decay in a rate $\mathcal{O}\bigg(\frac{1}{N^{0.25+\upsilon}}\bigg)$ for any $\upsilon>0$, when $N$ approaches $\infty$, which completes the proof.
\end{proof}
In~\cite{cen2024learning}, they also arrived at the similar conclusion, but the performance gap is still dictated by the state distribution discrepancy between $d^{\pi_b}(s)$ and $d^{\pi^*}(s)$. This means their performance gap is reduced when the size of $\mathcal{D}$ expands, but cannot reach to 0. Instead, Our work has further reduced it to 0 by converting the state distribution discrepancy to policy distribution discrepancy, given sufficient time. When applying the similar techniques to the cost value function, we have the following result.
\begin{theorem}\label{theorem_4}
    Let Assumptions~\ref{assump_5} and~\ref{assump_6} hold. Define $V^\pi_c(\rho_0):=\mathbb{E}_{s_0\sim\rho_0}[V^\pi_c(s_0)]$ given a policy $\pi$. For the policy $\pi_w$ optimized by Eq.~\ref{eq_18} with a dataset $\mathcal{D}$ consisting of $N$ samples, the constraint violation bound satisfies the following relationship
    \begin{equation}
        V^{\pi}_c(\rho_0)-\kappa\leq \mathcal{O}\bigg(\frac{1}{N^{0.25+\upsilon}}\bigg),
    \end{equation}
for any $\upsilon>0$.
\end{theorem}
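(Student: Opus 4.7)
The plan is to mirror the derivation of Theorem~\ref{theorem_3} almost line for line, with the cost value function playing the role of the reward value function and the constant $C_m$ replacing $R_m$. The key observation is that Lemma~\ref{lemma_4} already gives us the cost-side analogue of Lemma~\ref{lemma_2}, so the bulk of the work is just invoking the same empirical-process machinery.

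First, I would start from Lemma~\ref{lemma_4}, which gives
\begin{equation*}
|V^{\pi}_c(\rho_0)-V^{\pi^*}_c(\rho_0)|\leq \frac{2C_m}{(1-\gamma)^2}\mathbb{E}_{s\sim d^{\pi^*}(s)}[D_{TV}(\pi(\cdot|s)||\pi^*(\cdot|s))].
\end{equation*}
Then I would apply the importance-sampling trick used in Eq.~\ref{eq_17} to switch the measure from $d^{\pi^*}(s)$ to $d^{\pi_b}(s)$, since $\mathbb{E}_{s\sim d^{\pi_b}(s)}\!\left[\tfrac{d^{\pi^*}(s)}{d^{\pi_b}(s)}\right]=1$. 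The justification for using $d^{\pi_b}$ rather than $d^{\pi_s}$ carries over without change: the sampling in training is ultimately tied to $\mathcal{D}$, whose state distribution is $d^{\pi_b}$.

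Next, I would follow the exact chain used in Theorem~\ref{theorem_3}: apply Pinsker's inequality to upper-bound $D_{TV}$ by $\sqrt{D_{KL}/2}$, then use Jensen's inequality to pull the square root outside the expectation, obtaining
\begin{equation*}
\mathbb{E}_{s\sim d^{\pi_b}(s)}[D_{TV}(\pi(\cdot|s)\|\pi^*(\cdot|s))]\leq \sqrt{\tfrac{1}{2}\,\mathbb{E}_{s\sim d^{\pi_b}(s)}[v_w(s)]},
\end{equation*}
with $v_w(s)=D_{KL}(\pi_w(\cdot|s)\|\pi^*(\cdot|s))$. Under Assumption~\ref{assump_5}, the class $\{v_w\}$ is Donsker, so $\sqrt{N}\bigl(\mathbb{E}[v_w(s)] - \tfrac{1}{N}\sum_i v_w(s_i)\bigr)$ converges in distribution to a zero-mean Gaussian with bounded variance. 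Since Eq.~\ref{eq_18} drives the empirical mean $\tfrac{1}{N}\sum_i v_w(s_i)$ toward zero, and Assumption~\ref{assump_6} guarantees at least some safe samples (so the empirical minimizer is meaningful), we can conclude $N^{0.25}\cdot\mathbb{E}_{s\sim d^{\pi_b}(s)}[D_{TV}(\pi\|\pi^*)] \to 0$ in probability, giving the decay rate $\mathcal{O}(N^{-(0.25+\upsilon)})$ for any $\upsilon>0$.

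Finally, I would close the argument by combining $|V^{\pi}_c(\rho_0)-V^{\pi^*}_c(\rho_0)| \leq \mathcal{O}(N^{-(0.25+\upsilon)})$ with the feasibility of the optimal policy, namely $V^{\pi^*}_c(\rho_0)\leq \kappa$, to obtain $V^{\pi}_c(\rho_0)-\kappa \leq V^{\pi}_c(\rho_0)-V^{\pi^*}_c(\rho_0) \leq \mathcal{O}(N^{-(0.25+\upsilon)})$. The main conceptual point worth being careful about (rather than a true obstacle) is that the same parameterization $w$ optimizing the reward-side KL also controls the cost-side gap, because both are driven by the policy distribution discrepancy $D_{KL}(\pi_w(\cdot|s)\|\pi^*(\cdot|s))$; the Donsker assumption is stated on this single function class, which is exactly what we need. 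Essentially no new machinery beyond that of Theorem~\ref{theorem_3} is required, and the proof should be short enough to write simply as ``proof follows similarly from Theorem~\ref{theorem_3}, using Lemma~\ref{lemma_4} in place of Lemma~\ref{lemma_2}.''
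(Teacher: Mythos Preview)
Your proposal is correct and follows essentially the same approach as the paper, which simply states that the result is obtained by combining the proofs of Theorem~\ref{theorem_2} and Theorem~\ref{theorem_3}. You have unpacked exactly those ingredients: Lemma~\ref{lemma_4} together with the feasibility condition $V^{\pi^*}_c(\rho_0)\leq\kappa$ (the content of Theorem~\ref{theorem_2}), and the empirical-process decay argument on $\mathbb{E}_{s\sim d^{\pi_b}(s)}[D_{TV}(\pi(\cdot|s)\|\pi^*(\cdot|s))]$ (the content of Theorem~\ref{theorem_3}).
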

\begin{proof}
    The proof can be obtained by combining the proof from Theorem~\ref{theorem_2} and Theorem~\ref{theorem_3}. 
\end{proof}
Surprisingly, when $N\to\infty$, $V^\pi_c(\rho_0)\leq \kappa$, which ensures the safety constraint. This delivers us a useful insight that when $N$ is sufficiently large, the cost constraint violation will be significantly small, even neglected. 

\textbf{Sample Complexity.} If we set an accuracy threshold for both performance gap and constraint violation bound as $\chi$ for any arbitrarily small constant $\chi>0$, then roughly the size of $\mathcal{D}$ satisfies $N=\mathcal{O}(\frac{1}{\chi^4})$, which suggests that the sample complexity is much larger than that in~\cite{hasanzadezonuzy2021learning}. This is attributed to the offline manner, while in their case, distribution drift is not an issue. Our result also resembles the claim in Theorem 2 from~\cite{xu2022constraints}. We remark on the sample complexity obtained in this context that it is \textit{the worst case} sample complexity as the positive constant $\upsilon$ can be any value. Practically speaking, it falls into the range of $[0,0.25]$ as suggested by existing works~\cite{nguyen2021sample,li2024settling,hasanzadezonuzy2021learning,shi2024distributionally}. Our analysis on the sample complexity facilitates the theoretical understanding
of safe offline reinforcement learning and offers useful insights for more efficient future algorithm design.

\subsection{Algorithm and Implementation Details} \label{algorithm_details_appendix}
In this section, we outline the algorithmic framework and provide implementation details for our method, focusing on the practical aspects. Our methods build upon the CORL \citep{tarasov2024corl} implementation for Implicit Q-Learning (IQL), with the codebase inspired by the OSRL \citep{liu2023datasets} style. The training process involves sampling a batch \(\mathcal{B}\) from the replay buffer containing offline data during each gradient step. The expectations described in the methodology are approximated either by computing means over this batch or by using a Monte Carlo approximation.

We employ two Q-value networks, denoted as \( Q_{\theta_1}(s, a) \) and \( Q_{\theta_2}(s, a) \), to improve training stability and mitigate overestimation of Q-values. The value function \( V_\phi(s) \) is trained using the minimum of the two Q-networks, following a Double Q-Learning approach. 


The reward value function \( V_\phi(s) \) is trained using the following asymmetric L2 loss function:

\begin{align} \label{appendix_reward_value}
\mathcal{L}_V(\phi) &= \frac{1}{|\mathcal{B}|} \sum_{(s,a) \in \mathcal{B}} \left[ L_\xi^2 \left( \min_{i=1,2}Q_{\hat{\theta}_i}(s, a) - V_\phi(s) \right) \right]
\end{align}

Both Q-networks are updated simultaneously using the following loss function:

\begin{align} \label{appendix_reward_q}
\mathcal{L}_Q(\theta_1, \theta_2) &= \frac{1}{|\mathcal{B}|} \sum_{i=1,2} \sum_{(s,a,s') \in \mathcal{B}} \left[ \left( r + \gamma V_\phi(s') - Q_{\theta_i}(s, a) \right)^2 \right]
\end{align}

Similarly, we use two q-value networks for cost and use the maximum of this ensemble. The cost critic networks are updated with the following loss functions:

\begin{align}
\mathcal{L}_V^c(\eta) &= \frac{1}{|\mathcal{B}|} \sum_{(s,a) \in \mathcal{B}} \left[ L_\xi^2 \left( V^c_\eta(s) - \max_{i=1,2}Q^c_{\hat{\psi}_i}(s, a) \right) \right] \label{appendix_cost_value}\\
\mathcal{L}_Q^c(\psi_1, \psi_2) &= \frac{1}{|\mathcal{B}|} \sum_{i=1,2} \sum_{(s,a,s') \in \mathcal{B}} \left[ \left( c + \gamma V^c_\eta(s') - Q^c_{\psi_i}(s, a) \right)^2 \right] \label{appendix_cost_q}
\end{align}

The loss function used for the CVAE policy is approximated as:

\begin{align} \label{appendix_cvae_loss}
\mathcal{L}_{p,q}(\alpha, \beta) = \frac{-1}{|\mathcal{B}|} \sum_{\substack{(s,a) \in \mathcal{B}}}  w_c(s,a)\left( \log p_{\beta}(a \mid s, z)  - D_{KL}\left[q_{\alpha}(z \mid s, a) \parallel p(z \mid s, a)\right] \right),
\end{align}
where $z \sim q_\alpha$ is the latent variable and $ w_c(s,a) = \exp\left(\lambda \left(V^c_\phi(s) - Q^c_\theta(s, a)\right)\right)$ is the weight given to the behavior cloning loss, common in policy extraction methods like advantage-weighted regression. This method assigns greater weight to the state-action pairs with cost value ($Q^c_\theta(s, a)$) smaller than the cost value associated with that state ($V^c_\phi(s)$), allowing the policy to selectively imitate safer actions from the dataset. To further motivate the CVAE to learn the distribution of safe state-action and reconstruct safe actions, particularly in metadrive, we assign $w_c(s,a)=0$ for transitions with either cost values greater than a threshold of $0.02$. Other hyperparameters used in our methods are discussed in the section that follows.


Finally, we use a reward-advantage weighted regression loss to train the latent safety encoder policy to predict reward-maximizing embeddings in the latent space:

\begin{align} \label{appendix_encoder_loss}
\mathcal{L}_\mu(\delta) = \frac{-1}{|\mathcal{B}|} \sum_{(s, a) \in \mathcal{B}} \left[ \exp\left(\zeta \left( Q^r_\theta(s, a) - V^r_\phi(s)\right)\right) \log p_\beta(a \mid s, z) \right], z \sim \mu_\delta(. \mid s)
\end{align}


\begin{algorithm}[H]
\caption{Learning LSPC Policies}
\label{algotithm_training}
\textbf{Input:} Dataset $\mathcal{D}$ and hyperparameters including learning rates $\nu$, soft update rate $\mathcal{T}$ and others \\
\textbf{Initialize:} Network parameters $\phi, \theta_1, \theta_2, \eta, \psi_1, \psi_2, \alpha, \beta, \delta$ \\
\For{N Gradient Steps}{
        Sample a batch $\mathcal{B}$ of transitions $(s, a, r, c, s')$ from $\mathcal{D}$
        
        Use equation \ref{appendix_reward_value} to update the parameters of the reward value function:  $\phi \leftarrow \phi - \nu_\phi \nabla_\phi \mathcal{L}_V(\phi)$ 

        Use equation \ref{appendix_reward_q} to update the parameters of the reward q-value functions: $\{\theta_1, \theta_2\} \leftarrow {\{\theta_1, \theta_2\}} - \nu_\theta \nabla_{\{\theta_1, \theta_2\}} \mathcal{L}_Q(\theta_1, \theta_2)$ 

        Use equation \ref{appendix_cost_value} to update the parameters of the cost value function: \quad \quad $\eta \leftarrow \eta - \nu_\eta \nabla_\eta \mathcal{L}^c_V(\eta)$ 

        Use equation \ref{appendix_cost_q} to update the parameters of the cost q-value functions: $\{\psi_1, \psi_2\} \leftarrow \{\psi_1, \psi_2\} - \nu_\psi \nabla_{\{\psi_1, \psi_2\}} \mathcal{L}^c_Q(\psi_1, \psi_2)$ 

        Use equation \ref{appendix_cvae_loss} to update the parameters of the CVAE encoder and decoder: $\{\alpha, \beta\} \leftarrow \{\alpha, \beta\} - \nu_{\alpha\beta} \nabla_{\{\alpha, \beta\}} \mathcal{L}_{p,q}(\alpha, \beta)$ 
        
        Use equation \ref{appendix_encoder_loss} to update the parameters of the latent safety encoder: \quad \quad $\delta \leftarrow \delta - \nu_\delta \nabla_\delta \mathcal{L}_{\mu}(\delta)$ 

        Update the parameters of the target q functions: $\hat{\theta}_1 \leftarrow (1-\mathcal{T})\hat{\theta}_1 + \mathcal{T} \hat{\theta}_1, \hat{\theta}_2 \leftarrow (1-\mathcal{T})\hat{\theta}_2 + \mathcal{T} \hat{\theta}_2$ $\hat{\psi}_1 \leftarrow (1-\mathcal{T})\hat{\psi}_1 + \mathcal{T} \hat{\psi}_1, \mathcal{T}{\psi}_2 \leftarrow (1-\mathcal{T})\hat{\psi}_2 + \mathcal{T} \hat{\psi}_2$
}
\textbf{Return:} Trained network parameters $\phi, \theta_1, \theta_2, \eta, \psi_1, \psi_2, \alpha, \beta, \delta$ \\
\end{algorithm}

\subsubsection{Hyperparameters and Tuning} \label{appendix_hyperparameters}

To facilitate understanding and analysis of hyperparameter effects, we classify them into three broad categories: IQL critic learning hyperparameters, AWR policy extraction hyperparameters, and those specific to our method, LSPC. The IQL and AWR hyperparameters are kept consistent across all experiments in this study. These common hyperparameters are listed in Table \ref{table:common_hyperparameters}.

\begin{table}[H]
\centering
\caption{Common Hyperparameters for IQL and AWR}
\label{table:common_hyperparameters}
\begin{tabular}{|l|c|}
\hline
\textbf{Hyperparameter} & \textbf{Value} \\ \hline
Batch size (\(|\mathcal{B}|\)) & 1024 \\ \hline
Discount factor (\(\gamma\)) & 0.99 \\ \hline
Soft update rate for Q-networks (\(\mathcal{T}\)) & 0.005 \\ \hline
Inverse temperature for reward & 2.0 \\ \hline
Inverse temperature for cost & 2.0 \\ \hline
Learning rates for all parameters & \(3 \times 10^{-4}\) \\ \hline
Asymmetric L2 loss coefficient (\(\xi\)) & 0.7 \\ \hline
Max exp advantage weight (both cost and reward) & 200.0 \\ \hline
\end{tabular}
\end{table}

For the LSPC-specific hyperparameters, the values are chosen to balance the performance and safety constraints while trying to minimize the need for extensive tuning.
The KL divergence coefficient in VAE loss functions regularizes the latent space by encouraging the learned distribution to approximate a prior distribution, balancing reconstruction accuracy with structured latent representations for effective data generation. We set this to 0.5 in all of our experiments. The dimension of the latent safety space is fixed to 32, the general idea being that the space should be of sufficient capacity to capture the safety-wise distribution of the state-action in the dataset. The degree of restriction that constrains the policy to operate within the high-likelihood region of the latent safety space is set to 0.25 for LSPC-S. Generally, the same restriction value of 0.25 works for LSPC-O as well. However, for BulletSafety environments (except AntRun), this restriction is relaxed to 0.6 to enhance performance against baselines, given the upperhand our method has in managing cost returns. While we strive for consistent hyperparameters across tasks, the diversity of the benchmark tasks, with each using different simulators, dynamics, and objectives, might require some adjustments. 

Figure \ref{fig:variation_with_restriction_hyperprameter_easysparse} illustrates the impact of varying the restriction hyperparameter for both LSPC-S and LSPC-O. As the latent safety restriction is increased, the reward performance of LSPC-O improves progressively relative to LSPC-S and CVAE Policy. This is because the latent safety encoder policy in LSPC-O can explore a larger region within the latent space to find reward-maximizing actions. However, this improved performance comes with a trade-off, i.e., higher cost returns. The figure demonstrates how different settings of the restriction hyperparameter yield different LSPC-O policies, each safe under certain cost thresholds used in our experiments.

\begin{figure}
    \centering
    \includegraphics[width=0.85\linewidth]{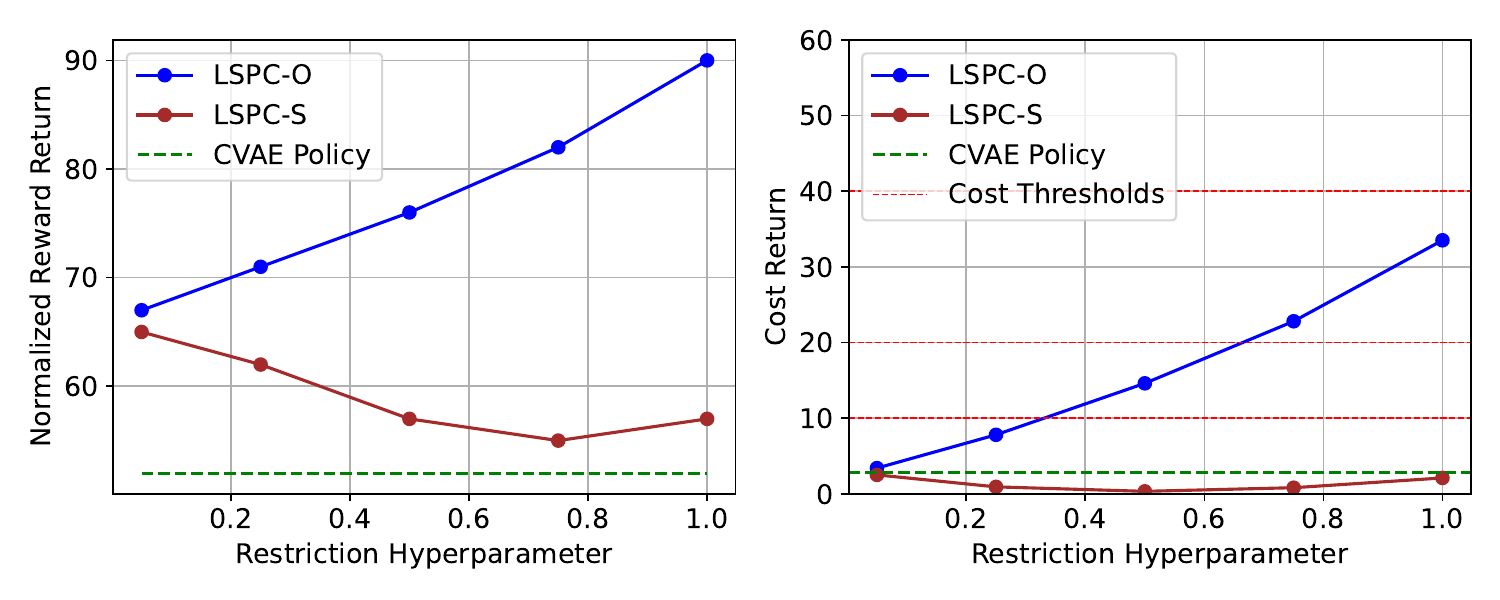}
    \caption{Effect of variation of the latent space restriction hyperparameter while evaluating in Metadrive Easy Sparse environment. The red dashed horizontal lines represent the cost thresholds suggested by \cite{liu2023datasets} for this environment.}
    \label{fig:variation_with_restriction_hyperprameter_easysparse}
\end{figure}

\subsubsection{Ablation}
Ablation studies are critical for evaluating the influence of individual components of a model on its overall performance. While the main text primarily focuses on presenting the core contributions of our Latent Safety-Constrained Policy (LSPC) framework, this section provides a detailed exploration of specific ablation studies to address the impact of hyperparameters, architectural design choices, and loss functions on performance and safety adherence.

\paragraph{Role Reversal of CVAE and Safety Encoder.}
In this experiment, we investigated the impact of reversing the roles of the CVAE ($\alpha, \beta$) and safety encoder ($\delta$) in the LSPC framework. Specifically, in this configuration (referred to as \textit{Converse LSPC-O}), the CVAE is trained to maximize rewards while the encoder minimizes costs within the inferred latent space. The architecture is detailed in figure \ref{Ablation Converse LSPC Architecture}, and the training is conducted for 300k timesteps in the HalfCheetah-velocity environment with varying levels of restriction ($\epsilon$) in the latent space. The respective training curves are in figure \ref{Ablation Converse LSPC-O on HalfCheetah-Velocity}.

\begin{figure}[hbt!]
    \centering
    \begin{subfigure}[b]{0.41\textwidth}
        \centering
        \includegraphics[width=\textwidth]{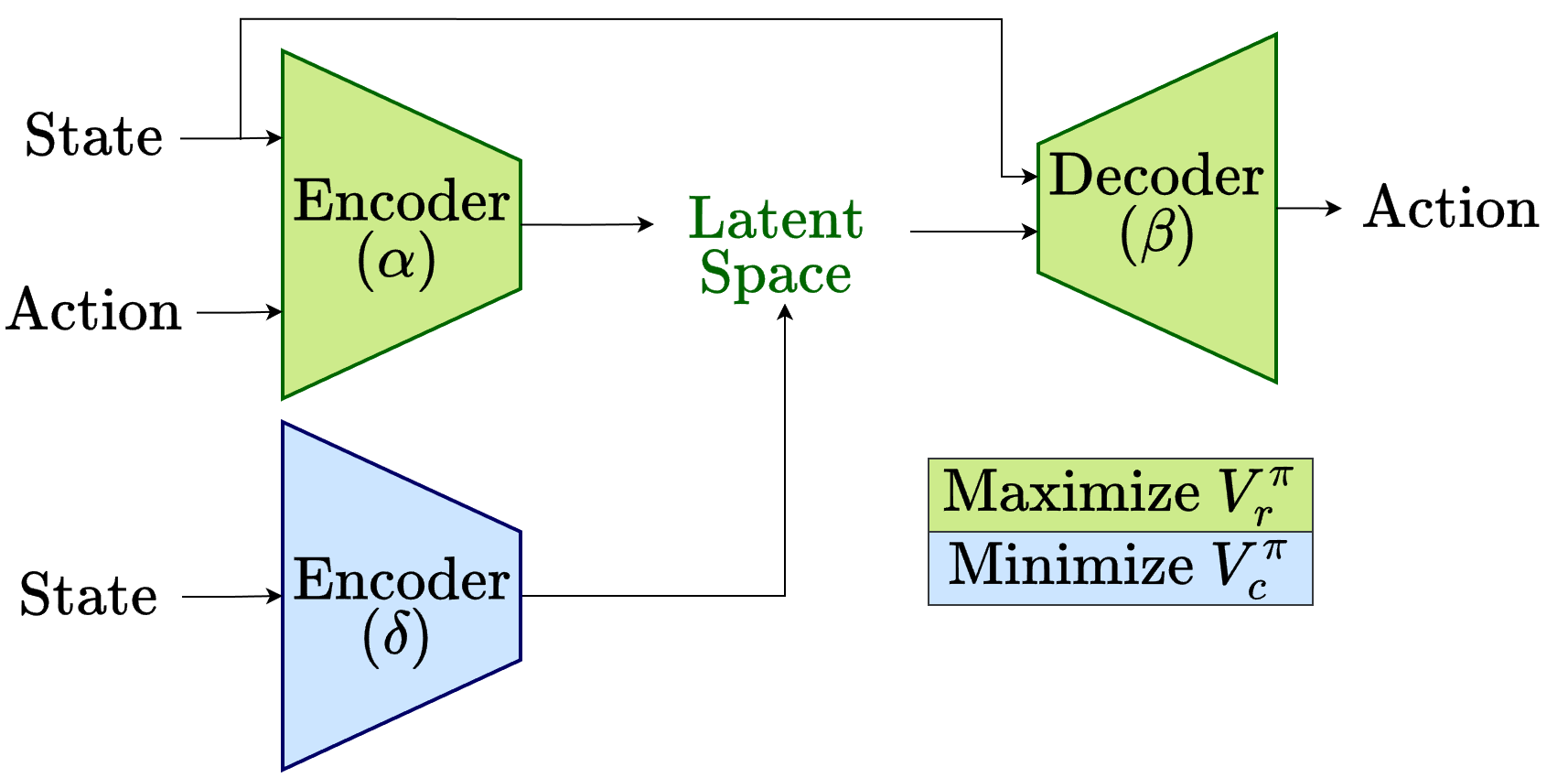}
        \caption{Converse LSPC Architecture}
        \label{Ablation Converse LSPC Architecture}
    \end{subfigure}
    \hfill
    \begin{subfigure}[b]{0.56\textwidth}
        \centering
        \includegraphics[width=\textwidth]{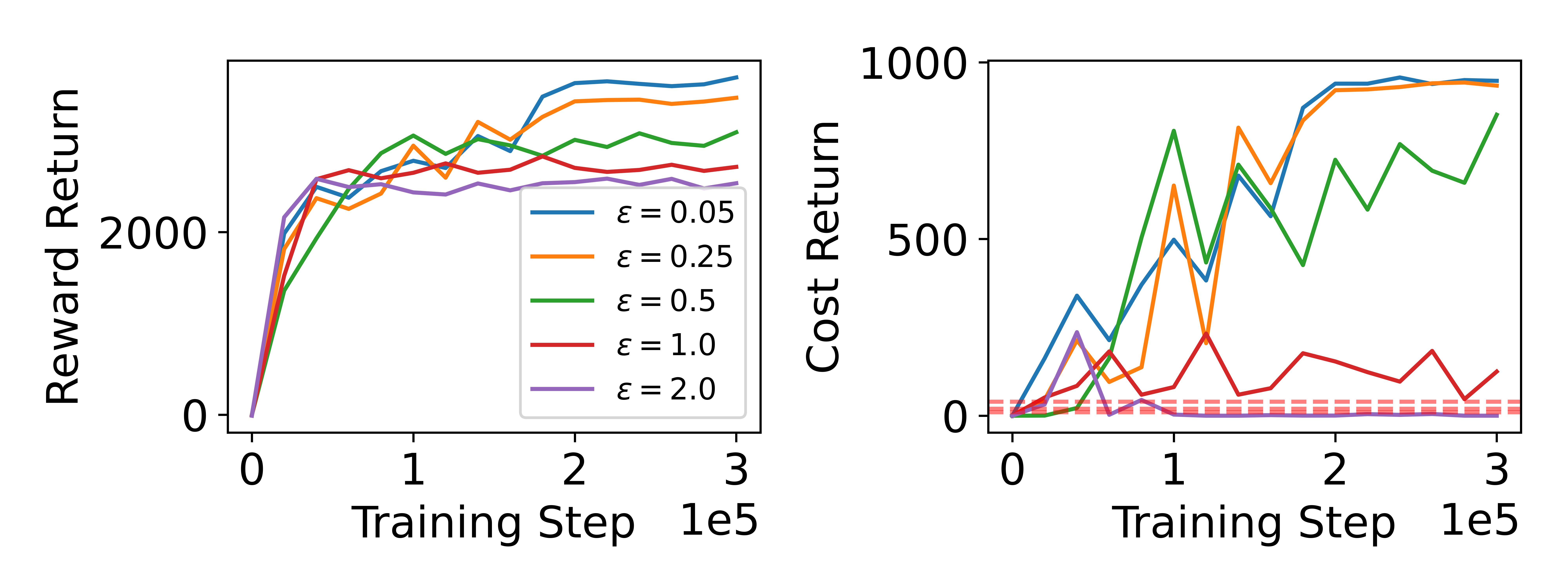}
        \caption{Converse LSPC-O on HalfCheetah-Velocity}
        \label{Ablation Converse LSPC-O on HalfCheetah-Velocity}
    \end{subfigure}
    \caption{Ablation study on role reversal of CVAE and Safety Encoder}
    \label{fig:Ablation study on role reversal of CVAE and Safety Encoder}
\end{figure}

With low $\epsilon$ (high restriction), the encoder ($\delta$) struggled to learn an effective cost-reducing policy. As $\epsilon$ increased, some decrease in costs was achieved; however, this came at the expense of lower reward returns. While we can further loosen the restriction to get safer policies, very loose restrictions may cause out-of-distribution (OOD) action generation issues. A general concern in CVAE frameworks is that utilizing latent space samples too far from the mean might cause the decoder to generate out-of-distribution (OOD) samples, as the density of training samples decreases further from the mean. This highlights the necessity of designing a latent space representation that satisfies both the in-distribution requirement and safety constraints, as addressed in the original LSPC framework. By carefully structuring the roles of the CVAE and safety encoder, the original framework ensures that the inferred latent space boundaries contain safe and in-distribution samples. Moreover, the original framework always maintains a conservative policy (LSPC-S) learned during the training of LSPC-O as a backup. This ablation study underscores the importance of these architectural design choices in balancing performance and safety.

\paragraph{Inverse Temperature Hyperparameter in LSPC-S.}
In AWR, the inverse temperature hyperparameter $\lambda$ dictates the trade-off between behavior regularization and value function optimization. For LSPC-S, a lower $\lambda$ indicates a higher degree of behavior cloning, while a larger $\lambda$ places sharper weight on samples with better cost advantages. To evaluate the impact of this parameter on both reward performance and safety adherence, we trained LSPC-S with $\lambda$ values of 1, 2, and 4 in the CarRun and BallCircle environments.

\begin{figure}[hbt!]
    \centering
    \begin{subfigure}[b]{0.49\textwidth}
        \centering
        \includegraphics[width=\textwidth]{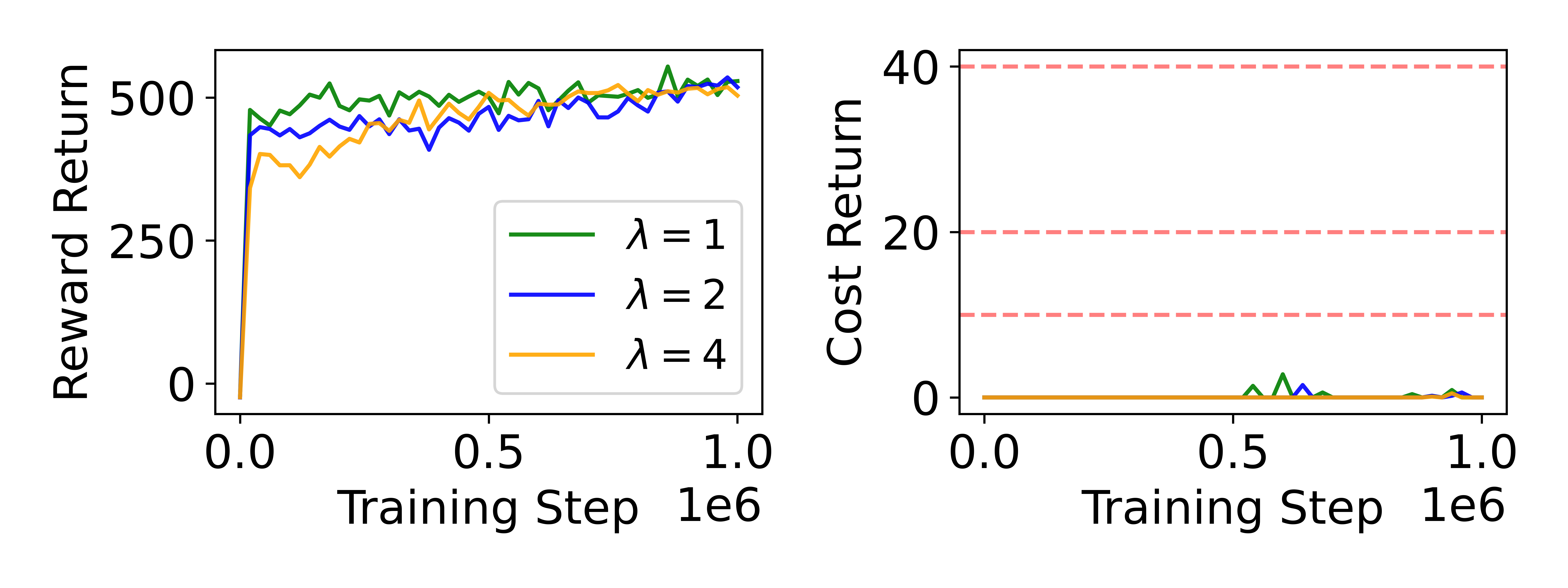}
        \caption{Pybullet Car Run}
        \label{Ablation LSPCS_pybullet_carrun_lambda_variation}
    \end{subfigure}
    \hfill
    \begin{subfigure}[b]{0.49\textwidth}
        \centering
        \includegraphics[width=\textwidth]{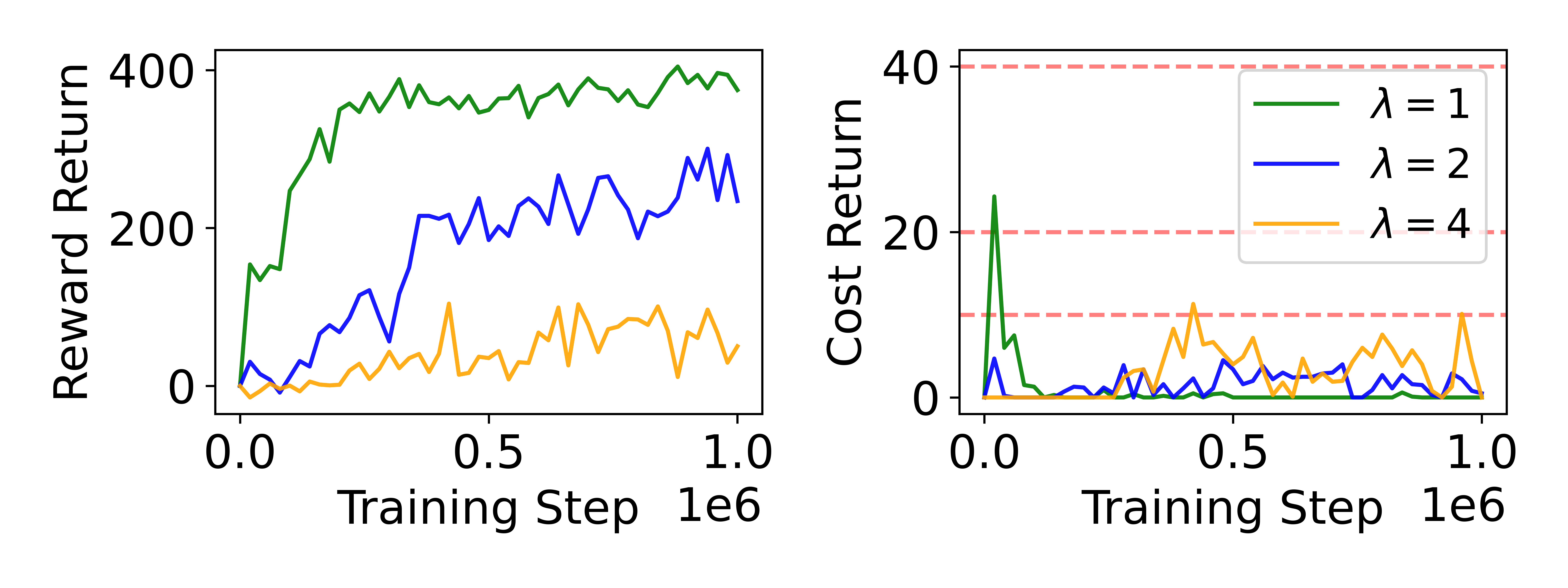}
        \caption{Pybullet Ball Circle}
        \label{Ablation LSPCS_pybullet_ballcircle_lambda_variation}
    \end{subfigure}
    \caption{Ablation study on the effect of inverse temperature hyperparameter ($\lambda$) in LSPC-S}
    \label{fig:Ablation study on the effect of inverse temperature hyperparameter in LSPC-S}
\end{figure}

In the training logs presented in figure \ref{fig:Ablation study on the effect of inverse temperature hyperparameter in LSPC-S}, it can be observed that safety adherence is consistently maintained by LSPC-S across all tested $\lambda$ values. However, higher $\lambda$ values typically results in restricted reward-wise performance. This effect is particularly pronounced in the BallCircle environment, where lower $\lambda$ values, such as $\lambda = 1$, enable LSPC-S to achieve significantly higher reward returns compared to $\lambda = 2$ and $\lambda = 4$.

\begin{figure}[hbt!]
    \centering
    \begin{subfigure}[b]{0.49\textwidth}
        \centering
        \includegraphics[width=\textwidth]{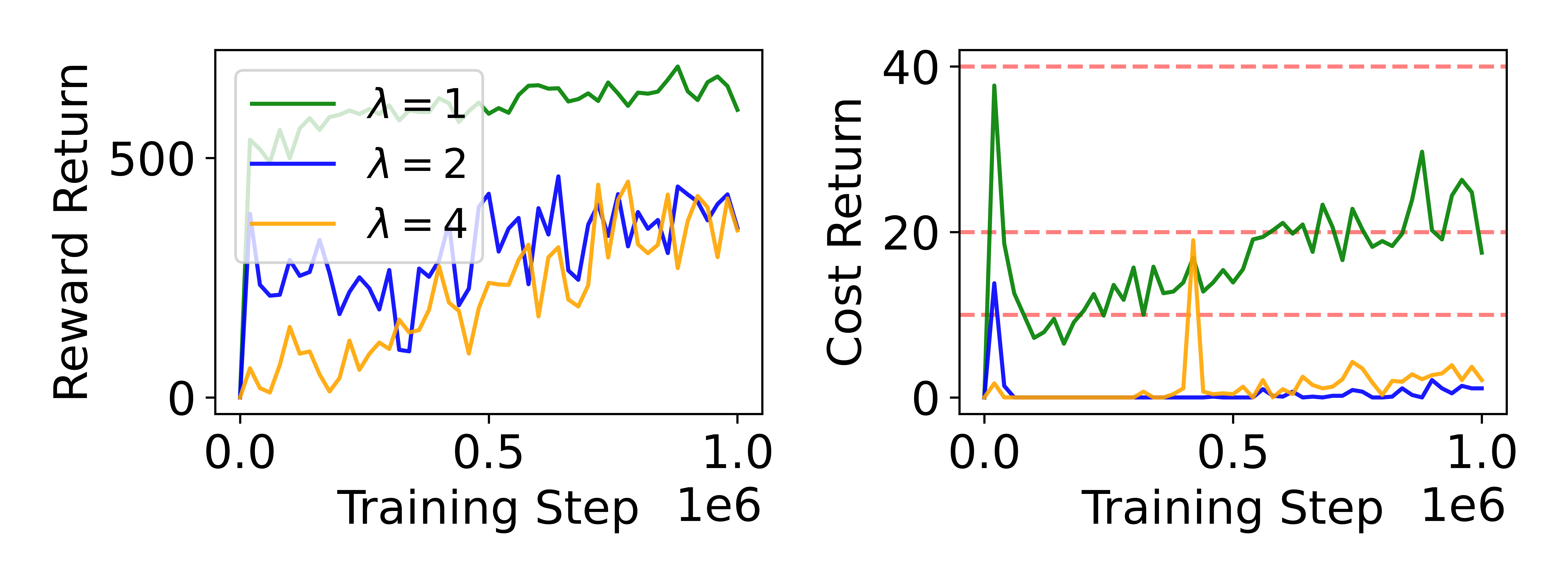}
        \caption{Ball Circle (Variation in $\lambda$)}
        \label{Ablation LSPCO_pybullet_ballcircle_lambda_variation}
    \end{subfigure}
    \hfill
    \begin{subfigure}[b]{0.49\textwidth}
        \centering
        \includegraphics[width=\textwidth]{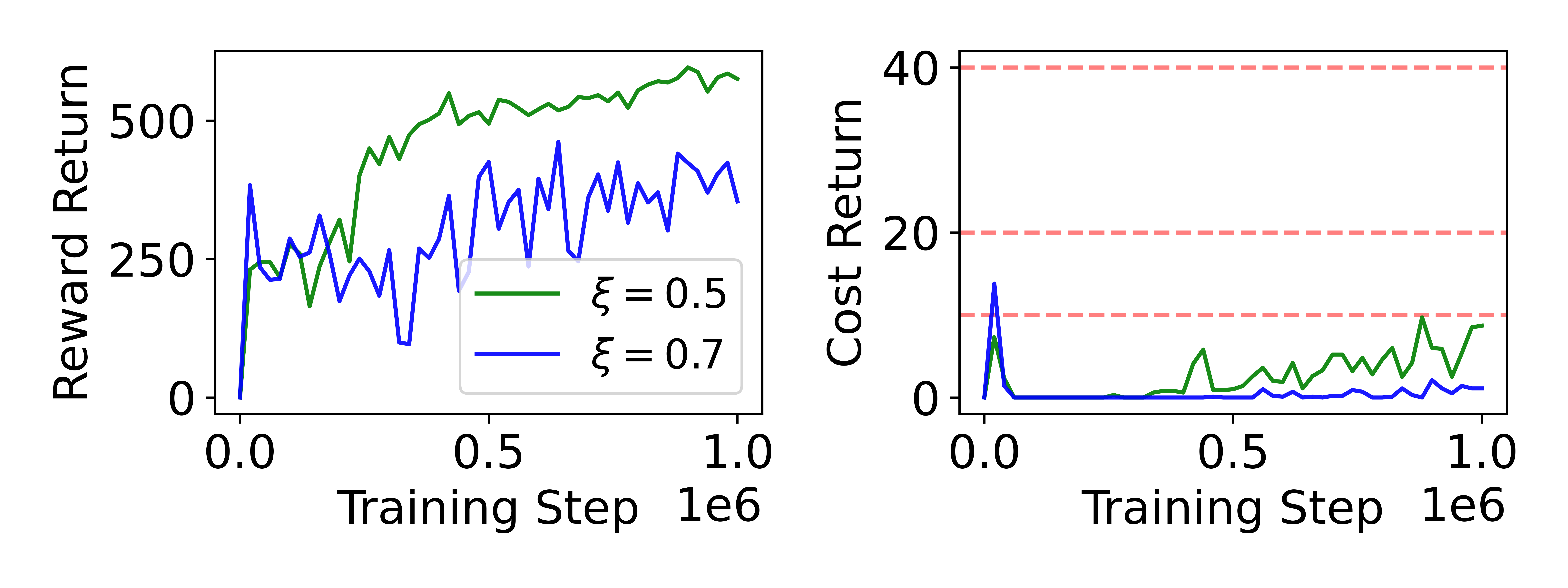}
        \caption{Ball Circle (Variation in $\xi$)}
        \label{Ablation LSPCO_pybullet_ballcircle_xi_variation}
    \end{subfigure}
    \caption{Ablation study on how the effect of variation in $\lambda$ and $\xi$ translate to LSPC-O}
    \label{fig:Ablation study on the effect translate to LSPC-O}
\end{figure}

These findings raise an important question about how the varying reward returns and consistent safety adherence in LSPC-S translate to LSPC-O. Our results indicate that when the CVAE (or LSPC-S) is trained with smaller $\lambda$ values, the reward performance of LSPC-O improves. However, this improvement comes with an increased cost returns, potentially leading to unsafe policies as depicted in figure \ref{Ablation LSPCO_pybullet_ballcircle_lambda_variation}. While LSPC-S with lower $\lambda$ effectively balances reward and cost returns, the encoder in LSPC-O, trained to further maximize rewards, can exacerbate safety risks. This highlights the critical trade-off between performance and safety optimization when selecting $\lambda$ in the LSPC framework.

\paragraph{Asymmetric Loss in IQL.}

In this experiment, we analyze the dependence of our method on the asymmetric loss function of IQL, particularly for cost management and safety adherence. As listed in \ref{appendix_hyperparameters}, we use a coefficient of $\xi=0.7$ in the asymmetric loss function used for expectile regression in Implicit Q-Learning (IQL) on our LSPC framework. In LSPC-S, this is intended to discourage the underestimation of cost Q-values. Here, we compare the results against $\xi=0.5$, which leads to a symmetric Mean Squared Error (MSE) loss, equivalent to SARSA-style policy evaluation. 

\begin{figure}[hbt!]
    \centering
    \begin{subfigure}[b]{0.49\textwidth}
        \centering
        \includegraphics[width=\textwidth]{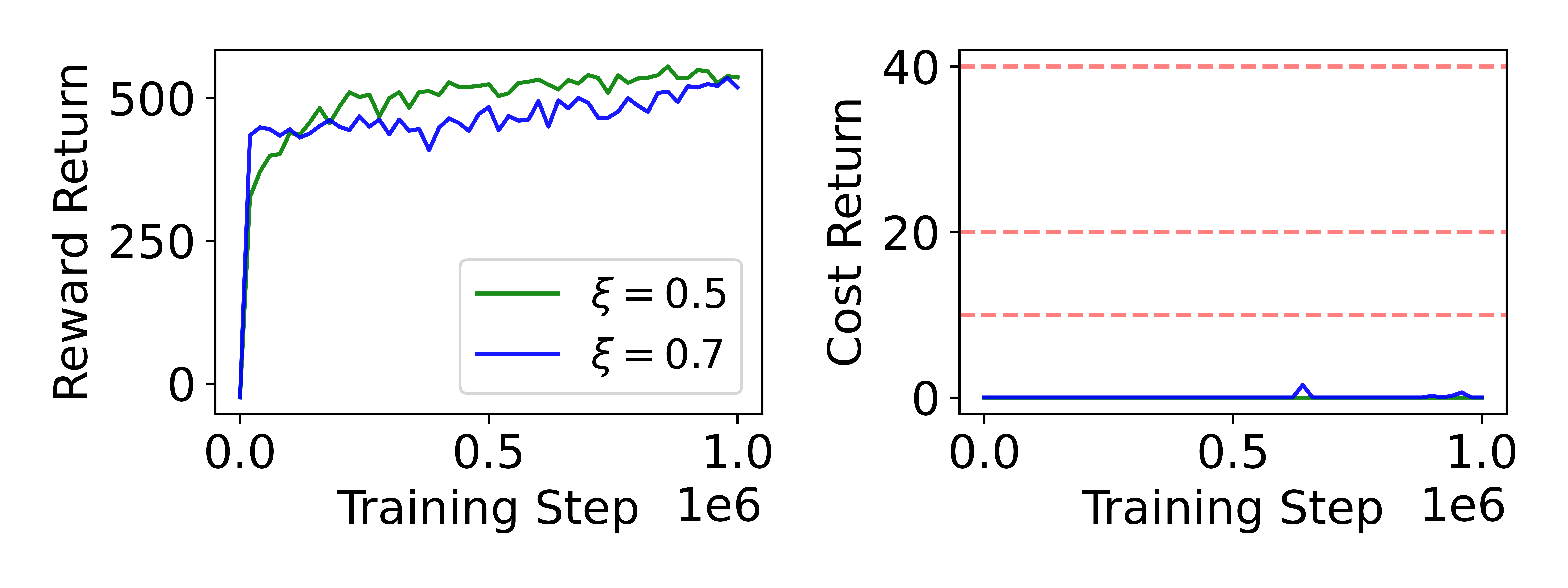}
        \caption{Pybullet Car Run}
        \label{Ablation LSPCS_pybullet_carrun_xi_variation}
    \end{subfigure}
    \hfill
    \begin{subfigure}[b]{0.49\textwidth}
        \centering
        \includegraphics[width=\textwidth]{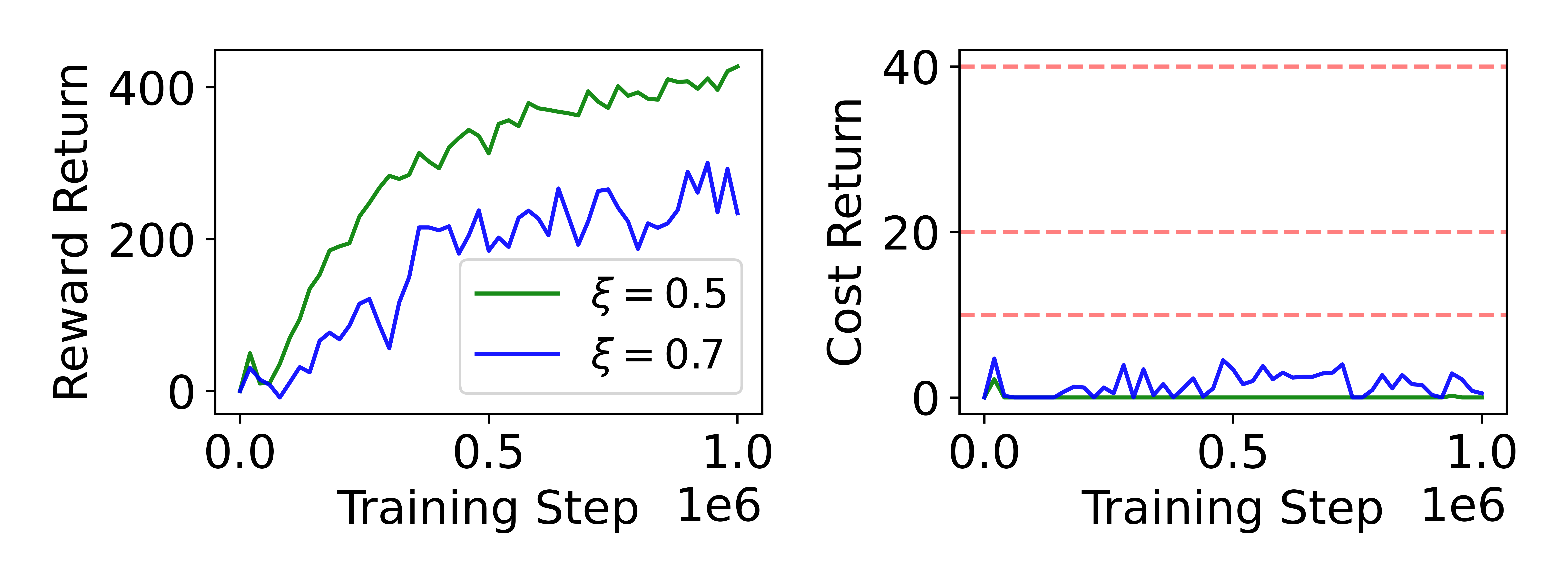}
        \caption{Pybullet Ball Circle}
        \label{Ablation LSPCS_pybullet_ballcircle_xi_variation}
    \end{subfigure}
    \caption{Ablation study on the effect of IQL expectile loss coefficient ($\xi$) in LSPC-S}
    \label{fig:Ablation study on the effect of IQL expectile loss coefficient in LSPC-S}
\end{figure}

As can be observed in figure \ref{fig:Ablation study on the effect of IQL expectile loss coefficient in LSPC-S}, using the symmetric loss in LSPC-S led to improved reward-wise performance compared to the asymmetric loss (conservative cost estimation), with both $\xi$ values resulting in policies that adhere to the safety requirements. However, for LSPC-O, where an additional encoder is trained to optimize rewards further, the symmetric loss resulted in higher cost returns, indicating chances of safety compromises. These findings depicted in figure \ref{Ablation LSPCO_pybullet_ballcircle_xi_variation} highlight a critical trade-off, similar to the ablation with $\lambda$: while less conservative policies with symmetric loss can achieve better rewards, they may translate to increased safety risks in LSPC-O.

\subsubsection{Safety and Performance Trade-offs in LSPC-S and LSPC-O}
The CVAE in the LSPC framework is trained using a cost-advantage weighted ELBO loss, aiming to minimize the cost value function $V_c^\pi$. By sampling actions from high-probability regions in the CVAE latent space, LSPC-S rerpesents a conservative policy that effectively mitigates the risk of out-of-distribution (OOD) actions and generates safe actions. While LSPC-O similarly samples from this restricted latent space and is theoretically expected to maintain equivalent safety, it may introduce some trade-offs in safety by optimizing for reward-maximizing latent variables rather than random latent samples. These trade-offs in safety can often be attributed to the latent safety encoder in LSPC-O exploiting inaccuracies in the action mappings (from the CVAE decoder) to maximize rewards, potentially leading to OOD or unsafe actions. For example, as seen in the ablation study, while LSPC-S ensures safety even under less restrictive hyperparameters, LSPC-O, due to its reward-seeking objective, can exhibit a higher cost-return, often leading to disproportionately higher costs.

Despite these trade-offs, the flexibility of our framework allows for the adjustment of safety requirements via a tunable restriction hyperparameter. A higher restriction (lower $\epsilon$) ensures that samples are drawn from regions closer to the mean of the latent distribution prior (which means stricter LSPC constraint). In addition, when $\epsilon$ is smaller, the encoder in LSPC-O is also forced to operate within a smaller region, resulting in a more restrictive policy. This flexibility in tuning $\epsilon$ according to the safety requirements allows the framework to adapt to different safety conditions for a given task, while still optimizing for rewards within those constraints, whether loose or stringent.

\subsubsection{Training time of the experiments}
The device used for reporting the training times in this section is a Dell Alienware Aurora R12 system with an 11th gen Intel Core i7 processor, 32 GB DDR4, and an NVIDIA GeForce RTX 3070 8GB GPU. All experiments were run on a CUDA device.

LSPC training requires approximately 3.5 hours for 1 million training steps, achieving a rate of around 90 training iterations per second. It is important to note that this reported training time incorporates the training of both LSPC-S and LSPC-O components of our framework. This training time performance is comparable to CPQ under similar conditions and training steps. Behavior Cloning (BC), being more straightforward, is faster, requiring about 35 minutes for 1 million timesteps at 490 training steps per second. CDT training for 100k timesteps takes approximately 3.5 hours, running at about 8 iterations per second.

\subsubsection{Comparison with CPQ}
CPQ \citep{xu2022constraints} also employs a CVAE and is referenced multiple times in this paper. Here, we highlight the key differences between our proposed method and CPQ across the following dimensions.

\paragraph{Value Learning.}
CPQ utilizes a cost-sensitive version of CQL \citep{kumar2020conservative}, which maximizes the cost value for out-of-distribution (OOD) actions while using only constraint-safe and in-distribution-safe samples to learn the reward Q-function. In contrast, our method does not use the constraint-penalized Bellman operator like CPQ. Instead, we adopt IQL for temporal-difference (TD) learning of both reward and cost values, employing standard Bellman updates to ensure simplicity and stability.

\paragraph{Policy Extraction.}
CPQ applies the off-policy deterministic policy gradient method for policy extraction. Our approach, however, employs Advantage Weighted Regression (AWR), where the actor is updated via weighted behavior cloning. This ensures alignment of the policy with the behavior distribution, promoting stability in offline RL settings. Specifically, we train cost-AWR to optimize the CVAE policy and reward-AWR to optimize an additional encoder.

\paragraph{CVAE Usage.}
While both CPQ and our framework incorporate a CVAE, their purposes diverge significantly. CPQ uses the CVAE to detect and penalize OOD actions as high-cost during Q-learning. However, this approach can distort the value function, thereby hindering generalizability and safety adherence, as observed in our experiments and noted in prior works such as DSRL \citep{liu2023datasets} and FISOR \citep{zheng2024safe}. By contrast, our method employs the CVAE as a policy model that expressively represents safety constraints in the latent space, rather than as an OOD detection mechanism.

\subsection{Benchmark Details and Additional Baselines} \label{Benchmark Details and Additional Baselines}
For empirical evaluation of our proposed methods, we use the standard Datasets for Safe Reinforcement Learning (DSRL) benchmarking suite \footnote{\url{https://github.com/liuzuxin/DSRL}} introduced by \cite{liu2023datasets}. The authors present a comprehensive set of datasets designed to facilitate the development and evaluation of offline safe reinforcement learning algorithms across standard safe RL tasks. This suite includes D4RL-styled \citep{fu2020d4rl} datasets and robust baseline implementations for offline safe RL, aiding in both training and deployment phases. For baselines, we use the official implementation \footnote{\url{https://github.com/ZhengYinan-AIR/FISOR}} and hyperparameters for FISOR, as provided by the authors, along with their published results \citep{zheng2024safe}. For all other baselines, we rely on the OSRL's official implementation, hyperparameters, and results as reported in the corresponding whitepaper \cite{liu2023datasets}. OSRL implementations as well as hyperparameter configuration can be found here \footnote{\url{https://github.com/liuzuxin/OSRL}}.

\begin{figure}[h!]
    \centering
    \begin{subfigure}[b]{0.32\textwidth}
        \centering
        \includegraphics[width=\textwidth]{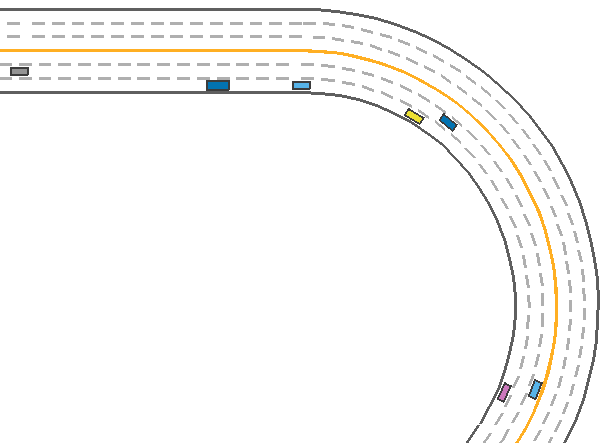}
        \caption{Metadrive}
        \label{fig:MetadriveExample}
    \end{subfigure}
    \hfill
    \begin{subfigure}[b]{0.32\textwidth}
        \centering
        \includegraphics[width=\textwidth]{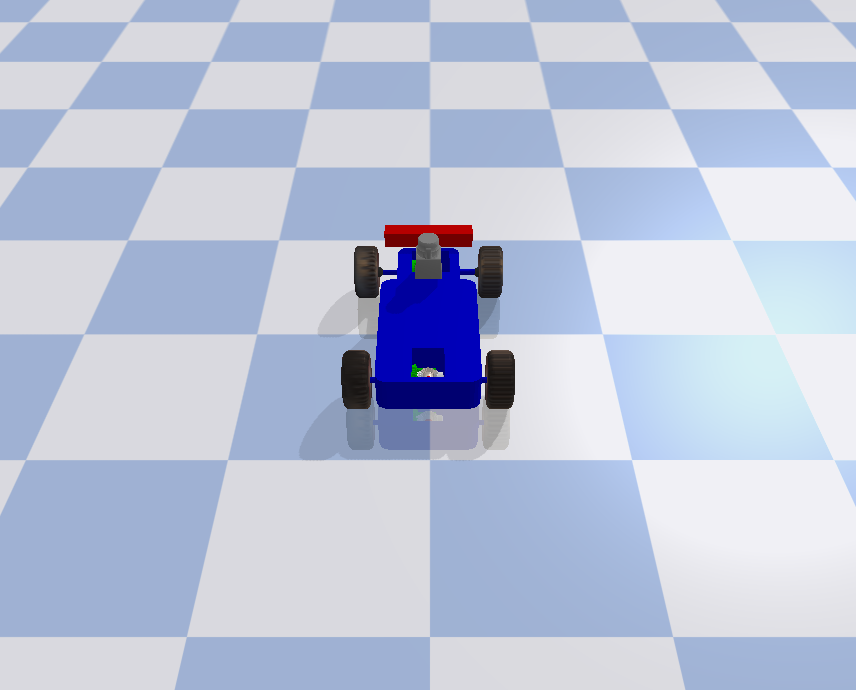}
        \caption{Bullet Safety}
        \label{fig:BulletExample}
    \end{subfigure}
    \hfill
    \begin{subfigure}[b]{0.32\textwidth}
        \centering
        \includegraphics[width=\textwidth]{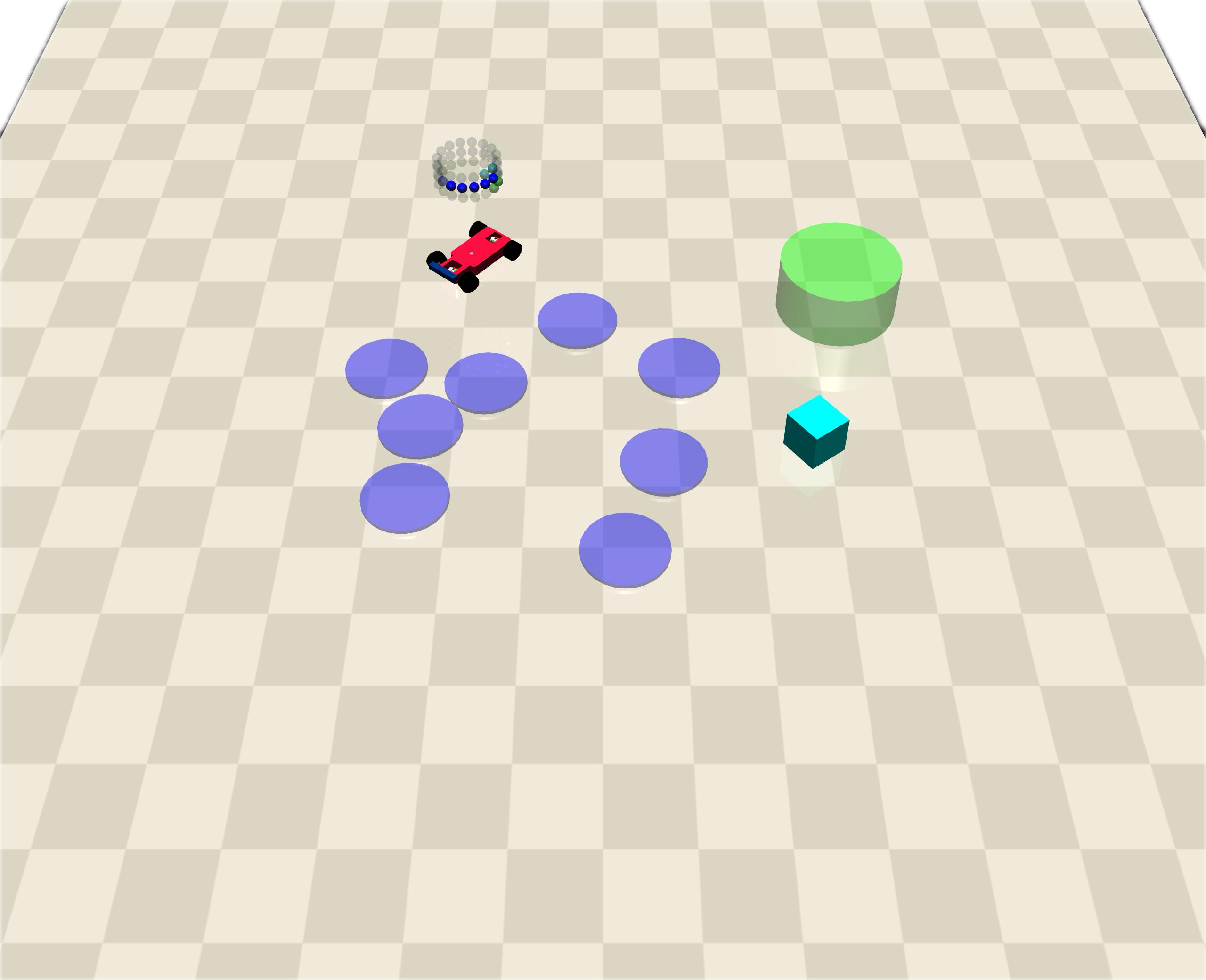}
        \caption{Safety Gym}
        \label{fig:SafetyExample}
    \end{subfigure}
    \vspace{-5pt}
    \caption{Example images from the simulation environments used in the experiments}
    \label{fig:TaskImages}
\end{figure}

For evaluation and comparison against the baselines, we use the metrics suggested by the DSRL whitepaper. We use the normalized reward return \( R_{\text{normalized}} \), calculated as:
\[
R_{\text{normalized}} = \frac{R_{\pi} - r_{\text{min}}(M)}{r_{\text{max}}(M) - r_{\text{min}}(M)},
\]
where \( r_{\text{max}}(M) \) and \( r_{\text{min}}(M) \) are the task-specific maximum and minimum empirical returns, respectively. Additionally, we use the normalized cost return \( C_{\text{normalized}} \), defined as:
\[
C_{\text{normalized}} = \frac{C_{\pi} + \varsigma}{\kappa + \varsigma},
\]
where \( \kappa \) is the target cost threshold and \( \varsigma \) is a small constant to avoid division by zero. These metrics ensure consistent comparisons across safe RL tasks. Following the DSRL constraint variation evaluation, each method is evaluated on each dataset with three distinct target cost thresholds and across three random seeds. DSRL evaluation uses the average of the normalized reward and cost to better characterize performance under varying conditions; this is particularly relevant for the Lagrange-based algorithms that depend on the cost threshold. The cost threshold values used in DSRL evaluation are set at $[10, 20, 40]$ for MetaDrive and Bullet Safety Gym environments, and $[20, 40, 80]$ for Safety Gymnasium environments. The same criteria are applied in our evaluation tables the normalized cost threshold is set to 1. Bold letters in the evaluation table represent safe agents whose normalized cost is smaller than 1, while blue and bold indicate the safe agent with the highest reward. Additionally, if a safer agent is present within a -0.05 normalized return, that agent is also highlighted in blue and bold.

\subsubsection{Metadrive}
MetaDrive \footnote{\url{https://github.com/metadriverse/metadrive}} is a self-driving simulator developed using the Panda3D engine, designed to simulate realistic driving environments with complex road structures and dynamic traffic conditions. The autonomous driving tasks in MetaDrive evaluate safety through three primary criteria: (i) collisions, (ii) off-road driving, and (iii) speeding violations. There are nine distinct offline learning tasks, each named in the format \{Road\}\{Vehicle\}. The {Road} category features three difficulty levels for self-driving cars: easy, medium, and hard. The {Vehicle} category includes four levels of surrounding traffic: sparse, medium, and dense. The stochastic nature of the environment is introduced through the random initialization of traffic patterns and map layouts, providing a challenging and varied benchmark for assessing the performance and safety of reinforcement learning algorithms in autonomous driving scenarios \citep{li2022metadrive, liu2023datasets}. Table \ref{tab:metadrive_env_config} summarizes the different configuration and map types used in the different evaluation environments that we use within the MetaDrive self-driving simulator.

\begin{table}[h!]
    \centering
    \caption{MetaDrive Performance Metrics}
    \resizebox{1.0\textwidth}{!}{
    \rowcolors{2}{gray!25}{white}
    \begin{tabular}{|c|cc|cc|cc|cc|cc|cc|cc|cc|cc|cc|}
        \hline
        \textbf{Method} & \multicolumn{2}{c}{\textbf{BC}} & \multicolumn{2}{c}{\textbf{BC-Safe}} & \multicolumn{2}{c}{\textbf{CDT}} & \multicolumn{2}{c}{\textbf{BCQ-Lag}} & \multicolumn{2}{c}{\textbf{BEAR-Lag}} & \multicolumn{2}{c}{\textbf{CPQ}} & \multicolumn{2}{c}{\textbf{COptiDICE}} & \multicolumn{2}{c}{\textbf{FISOR}} & \multicolumn{2}{c}{\textbf{LSPC-S}} & \multicolumn{2}{c|}{\textbf{LSPC-O}}\\
        \hline
        Task & reward $\uparrow$ & cost $\downarrow$ & reward $\uparrow$ & cost $\downarrow$ & reward $\uparrow$ & cost $\downarrow$ & reward $\uparrow$ & cost $\downarrow$ & reward $\uparrow$ & cost $\downarrow$ & reward $\uparrow$ & cost $\downarrow$ & reward $\uparrow$ & cost $\downarrow$ & reward $\uparrow$ & cost $\downarrow$ & reward $\uparrow$ & cost $\downarrow$ & reward $\uparrow$ & cost $\downarrow$ \\
        \hline
        \hline
        easysparse & 0.17 & 1.54 & \textbf{0.11} & \textbf{0.21} & \textbf{0.17} & \textbf{0.23} & 0.78 & 5.01 & \textbf{0.11} & \textbf{0.86} & \textbf{-0.06} & \textbf{0.07} & 0.96 & 5.44 & \textbf{0.38} & \textbf{0.15} & \textbf{0.62} & \textbf{0.06} & \textbbf{0.71} & \textbbf{0.46}\\
        easymean & 0.43 & 2.82 & \textbf{0.04} & \textbf{0.29} & \textbf{0.45} & \textbf{0.54} & 0.71 & 3.44 & \textbf{0.08} & \textbf{0.86} & \textbf{-0.07} & \textbf{0.07} & 0.66 & 3.97 & \textbf{0.38} & \textbf{0.08} & \textbf{0.62} & \textbf{0.04} & \textbbf{0.69} & \textbbf{0.26}\\
        easydense & 0.27 & 1.94 & \textbf{0.11} & \textbf{0.14} & \textbf{0.32} & \textbf{0.62} & \textbf{0.26} & \textbf{0.47} & \textbf{0.02} & \textbf{0.41} & \textbf{-0.06} & \textbf{0.03} & 0.5 & 2.54 & \textbf{0.36} & \textbf{0.08} & \textbf{0.55} & \textbf{0.06} & \textbbf{0.68} & \textbbf{0.37} \\
        \hline
        mediumsparse & 0.83 & 3.34 & \textbf{0.33} & \textbf{0.30} & 0.87 & 1.10 & 0.44 & 1.16 & \textbf{-0.03} & \textbf{0.17} & \textbf{-0.08} & \textbf{0.07} & 0.71 & 2.41 & \textbf{0.42} & \textbf{0.07} & \textbbf{0.96} & \textbbf{0.32} & \textbbf{0.94} & \textbbf{0.12} \\
        mediummean & 0.77 & 2.53 & \textbf{0.31} & \textbf{0.21} & \textbf{0.45} & \textbf{0.75} & 0.78 & 1.53 & \textbf{0.00} & \textbf{0.34} & \textbf{-0.08} & \textbf{0.05} & 0.76 & 2.05 & \textbf{0.39} & \textbf{0.02} & \textbf{0.85} & \textbf{0.43} & \textbbf{0.94} & \textbbf{0.11}\\
        mediumdense & 0.45 & 1.47 & \textbf{0.24} & \textbf{0.17} & 0.88 & 2.41 & 0.58 & 1.89 & \textbf{0.01} & \textbf{0.28} & \textbf{-0.07} & \textbf{0.07} & 0.69 & 2.24 & \textbf{0.49} & \textbf{0.12} & \textbbf{0.93} & \textbbf{0.07} & \textbbf{0.93} & \textbbf{0.01} \\ \hline
        hardsparse & 0.42 & 1.80 & 0.17 & 3.25 & \textbf{0.25} & \textbf{0.41} & 0.50 & 1.02 & \textbf{0.01} & \textbf{0.16} & \textbf{-0.05} & \textbf{0.06} & 0.37 & 2.05 & \textbf{0.30} & \textbf{0.00} & \textbbf{0.50} & \textbbf{0.24} & \textbbf{0.54} & \textbbf{0.47}\\
        hardmean & 0.20 & 1.77 & \textbf{0.13} & \textbf{0.40} & \textbf{0.33} & \textbf{0.97} & 0.47 & 2.56 & \textbf{0.00} & \textbf{0.21} & \textbf{-0.05} & \textbf{0.06} & 0.32 & 2.47 & \textbf{0.26} & \textbf{0.09} & \textbbf{0.51} & \textbbf{0.21} & \textbbf{0.53} & \textbbf{0.57} \\
        harddense & 0.20 & 1.33 & \textbf{0.15} & \textbf{0.22} & \textbf{0.08} & \textbf{0.21} & 0.35 & 1.40 & \textbf{0.02} & \textbf{0.26} & \textbf{-0.04} & \textbf{0.08} & 0.24 & 1.68 & \textbf{0.30} & \textbf{0.10} & \textbbf{0.47} & \textbbf{0.08} & \textbbf{0.50} & \textbbf{0.23} \\
        \hline
        \textbf{Average} & 0.42 & 2.06 & \textbf{0.18} & \textbf{0.58} & \textbf{0.42} & \textbf{0.80} & 0.54 & 2.05 & \textbf{0.02} & \textbf{0.39} & \textbf{-0.06} & \textbf{0.06} & 0.58 & 2.77 & \textbf{0.36} & \textbf{0.08} & \textbf{0.67} & \textbf{0.17} & \textbbf{0.72} & \textbbf{0.29}\\
        \hline

    \end{tabular}
    }
    \label{tab:metadrive}
\end{table}

\subsubsection{Safety Gymnasium}
Safety Gymnasium \footnote{\url{https://github.com/PKU-Alignment/safety-gymnasium}} is a collection of environments based on the MuJoCo physics simulator, designed to offer a variety of tasks with adjustable safety constraints and challenges, enabling different levels of difficulty \citep{ji2023safety}. We assess the car agent on Button, Goal, and Push tasks, where the numbered suffixes (1 and 2) indicate the difficulty levels. Additionally, Safety Gymnasium includes a subset of Safe Velocity tasks that impose velocity constraints on agents, extending the standard GymMuJoCo locomotion tasks to incorporate safety considerations. We evaluate the methods on Swimmer, Hopper, HalfCheetah, Walker2D, and Ant for the velocity tasks.

\begin{table}[h!]
    \centering
    \caption{Safety Gym Performance Metrics}
    \resizebox{1.0\textwidth}{!}{
    \rowcolors{2}{gray!25}{white}
    \begin{tabular}{|c|c|c|c|c|c|c|c|c|c|c|c|c|c|c|c|c|c|c|c|c|}
        \hline
        \textbf{Method} & \multicolumn{2}{c}{\textbf{BC}} & \multicolumn{2}{c}{\textbf{BC-Safe}} & \multicolumn{2}{c}{\textbf{CDT}} & \multicolumn{2}{c}{\textbf{BCQ-Lag}} & \multicolumn{2}{c}{\textbf{BEAR-Lag}} & \multicolumn{2}{c}{\textbf{CPQ}} & \multicolumn{2}{c}{\textbf{COptiDICE}} & \multicolumn{2}{c}{\textbf{FISOR}} & \multicolumn{2}{c}{\textbf{LSPC-S}} & \multicolumn{2}{c|}{\textbf{LSPC-O}}\\
        \hline
        Task & reward $\uparrow$ & cost $\downarrow$ & reward $\uparrow$ & cost $\downarrow$ & reward $\uparrow$ & cost $\downarrow$ & reward $\uparrow$ & cost $\downarrow$ & reward $\uparrow$ & cost $\downarrow$ & reward $\uparrow$ & cost $\downarrow$ & reward $\uparrow$ & cost $\downarrow$ & reward $\uparrow$ & cost $\downarrow$ & reward $\uparrow$ & cost $\downarrow$ & reward $\uparrow$ & cost $\downarrow$ \\
        \hline
        \hline
        CarButton1 & 0.03  & 1.38 & \textbbf{0.07} & \textbbf{0.85} & 0.21 & 1.6  & 0.04 & 1.63 & 0.18 & 2.72 & 0.42 & 9.66 & -0.08 & 1.68 &  \textbf{-0.02} & \textbf{0.04} & \textbf{-0.02} & \textbf{0.14} & \textbf{-0.01} & \textbf{0.11} \\
        CarButton2 & -0.13 & 1.24 & \textbf{-0.01} & \textbf{0.63} & 0.13 & 1.58 & 0.06 & 2.13 & -0.01 & 2.29 & 0.37 & 12.51 & -0.07 & 1.59 & \textbbf{0.01} & \textbbf{0.09} & \textbf{-0.09} & \textbf{0.21} & \textbf{-0.12} & \textbf{0.39} \\
        CarGoal1   & \textbf{0.39}  & \textbf{0.33} & \textbf{0.24} & \textbf{0.28} & 0.66 & 1.21 & \textbbf{0.47} & \textbbf{0.78} & 0.61 & 1.13 & 0.79 & 1.42 & \textbf{0.35} & \textbf{0.54} & \textbbf{0.49} & \textbbf{0.12} & \textbf{0.22} & \textbf{0.23} & \textbf{0.31} & \textbf{0.40}  \\
        CarGoal2   & 0.23  & 1.05 & \textbf{0.14} & \textbf{0.51} & 0.48 & 1.25 & 0.3  & 1.44 & 0.28 & 1.01 & 0.65 & 3.75 & \textbbf{0.25} & \textbbf{0.91} & \textbf{0.06} & \textbf{0.05} & \textbf{0.13} & \textbf{0.44} & \textbf{0.19} & \textbf{0.42} \\
        CarPush1   & \textbf{0.22}  & \textbf{0.36} & \textbf{0.14} & \textbf{0.33} & \textbbf{0.31} & \textbbf{0.4}  & \textbf{0.23} & \textbf{0.43} & \textbf{0.21} & \textbf{0.54} & \textbf{-0.03} & \textbf{0.95} & \textbf{0.23} & \textbf{0.5} & \textbbf{0.28} & \textbbf{0.04} & \textbf{0.18} & \textbf{0.32} & \textbf{0.18} & \textbf{0.33} \\
        CarPush2   & \textbf{0.14}  & \textbf{0.9}  & \textbf{0.05} & \textbf{0.45} & 0.19 & 1.3  & 0.15 & 1.38 & 0.1  & 1.2 & 0.24 & 4.25 & 0.09 & 1.07 & \textbf{0.14} & \textbf{0.13} & \textbf{0.02} & \textbf{0.34} & \textbf{0.05} & \textbf{0.62}\\
        \hline
        SwimmerVel & 0.49 & 4.72 & 0.51 & 1.07 & \textbbf{0.66} & \textbbf{0.96} & 0.48 & 6.58 & 0.3 & 2.33 & 0.13 & 2.66 & 0.63 & 7.58 &  \textbf{-0.04} & \textbf{0.00} & \textbf{0.50} & \textbf{0.08} & \textbf{0.44} & \textbf{0.14} \\
        HopperVel & 0.65 & 6.39 & \textbf{0.36} & \textbf{0.67} & \textbf{0.63} & \textbf{0.61} & 0.78 & 5.02 & 0.34 & 5.86 & 0.14 & 2.11 & 0.13 & 1.51 & \textbf{0.17} & \textbf{0.32} & \textbf{0.26} & \textbf{0.39} & \textbbf{0.69} & \textbbf{0.00} \\
        HalfCheetahVel & 0.97 & 13.1 & \textbf{0.88} & \textbf{0.54} & \textbbf{1.0} & \textbbf{0.01} & 1.05 & 18.21 & 0.98 & 6.58 & \textbf{0.29} & \textbf{0.74} & \textbf{0.65} & \textbf{0.00} & \textbf{0.89} & \textbf{0.00} & \textbf{0.79} & \textbf{0.01} & \textbbf{0.97} & \textbbf{0.10}\\
        Walker2dVel & 0.79 & 3.88 & \textbbf{0.79} & \textbbf{0.04} & \textbbf{0.78} & \textbbf{0.06} & \textbbf{0.79} & \textbbf{0.17} & 0.86 & 3.1 & \textbf{0.04} & \textbf{0.21} & \textbf{0.12} & \textbf{0.74} & \textbf{0.38} & \textbf{0.36} & 0.56 & 1.28 & \textbbf{0.76} & \textbbf{0.02} \\
        AntVel & 0.98 & 3.72 & \textbbf{0.98} & \textbbf{0.29} & \textbbf{0.98} & \textbbf{0.39} & 1.02 & 4.15 & \textbf{-1.01} & \textbf{0.0} & \textbf{-1.01} & \textbf{0.0} & 1.0 & 3.28 & \textbf{0.89} & \textbf{0.00} & \textbbf{0.95} & \textbbf{0.07} & \textbbf{0.98} & \textbbf{0.45} \\
        \hline
        \textbf{Average} & 0.43 & 3.37 & \textbf{0.38} & \textbf{0.51} & \textbbf{0.55} & \textbbf{0.85} & 0.49 & 3.81 & 0.26 & 2.43 & 0.19 & 3.48 & 0.30 & 1.76 &  \textbf{0.30} & \textbf{0.11} &  \textbf{0.32} & \textbf{0.32} & \textbf{0.40} & \textbf{0.27}\\
        \hline
    \end{tabular}
    }
    \label{tab: Safety Gym Performance Metrics}
\end{table}

\subsubsection{Bullet Safety Gym}
Bullet Safety Gym \footnote{\url{https://github.com/SvenGronauer/Bullet-Safety-Gym}} is a suite of environments built on the PyBullet physics simulator, designed similarly to Safety Gymnasium but with shorter time horizons and more agents. Unlike Safety Gymnasium, which features longer time horizons with shorter simulation steps, Bullet Safety Gym employs shorter time horizons, which may facilitate faster training \citep{gronauer2022bullet}. The suite includes four distinct agent types: Ball, Car, Drone, and Ant, as well as two task types: Circle and Run. The tasks are named in the format \{Agent\}\{Task\}.

\begin{table}[h!]
    \centering
    \caption{Bullet Safety Gym Performance Metrics}
    \resizebox{1.0\textwidth}{!}{
    \rowcolors{2}{gray!25}{white}
    \begin{tabular}{|c|c|c|c|c|c|c|c|c|c|c|c|c|c|c|c|c|c|c|c|c|}
        \hline
        \textbf{Method} & \multicolumn{2}{c}{\textbf{BC}} & \multicolumn{2}{c}{\textbf{BC-Safe}} & \multicolumn{2}{c}{\textbf{CDT}} & \multicolumn{2}{c}{\textbf{BCQ-Lag}} & \multicolumn{2}{c}{\textbf{BEAR-Lag}} & \multicolumn{2}{c}{\textbf{CPQ}} & \multicolumn{2}{c}{\textbf{COptiDICE}} & \multicolumn{2}{c}{\textbf{FISOR}} & \multicolumn{2}{c}{\textbf{LSPC-S}} & \multicolumn{2}{c|}{\textbf{LSPC-O}}\\
        \hline
        Task & reward $\uparrow$ & cost $\downarrow$ & reward $\uparrow$ & cost $\downarrow$ & reward $\uparrow$ & cost $\downarrow$ & reward $\uparrow$ & cost $\downarrow$ & reward $\uparrow$ & cost $\downarrow$ & reward $\uparrow$ & cost $\downarrow$ & reward $\uparrow$ & cost $\downarrow$ & reward $\uparrow$ & cost $\downarrow$ & reward $\uparrow$ & cost $\downarrow$ & reward $\uparrow$ & cost $\downarrow$ \\
        \hline
        \hline
        BallRun      & 0.6  & 5.08 & 0.27 & 1.46 & 0.39 & 1.16 & 0.76 & 3.91 & -0.47 & 5.03 & 0.22 & 1.27 & 0.59 & 3.52 & \textbbf{0.18} & \textbbf{0.00} & \textbf{0.08} & \textbf{0.00} & \textbbf{0.14} & \textbbf{0.00}\\
        CarRun       & \textbbf{0.97} & \textbbf{0.33} & \textbbf{0.94} & \textbbf{0.22} & \textbbf{0.99} & \textbbf{0.65} & \textbbf{0.94} & \textbbf{0.15} & 0.68 & 7.78 & 0.95 & 1.79 & \textbf{0.87} & \textbf{0.0} & \textbf{0.73} & \textbf{0.04} & \textbf{0.72} & \textbf{0.00} & \textbbf{0.97} & \textbbf{0.13}\\
        DroneRun     & 0.24 & 2.13 & \textbf{0.28} & \textbf{0.74} & \textbbf{0.63} & \textbbf{0.79} & 0.72 & 5.54 & 0.42 & 2.47 & 0.33 & 3.52 & 0.67 & 4.15 & \textbf{0.30} & \textbf{0.16} & \textbf{0.54} & \textbf{0.00} & \textbf{0.57} & \textbf{0.00}\\
        AntRun       & 0.72 & 2.93 & 0.65 & 1.09 & \textbbf{0.72} & \textbbf{0.91} & 0.76 & 5.11 & \textbf{0.15} & \textbf{0.73} & \textbf{0.03} & \textbf{0.02} & \textbf{0.61} & \textbf{0.94} & \textbf{0.45} & \textbf{0.00} & \textbf{0.29} & \textbf{0.04} & \textbf{0.44} & \textbf{0.45} \\ \hline
        BallCircle   & 0.74 & 4.71 & \textbbf{0.52} & \textbbf{0.65} & 0.77 & 1.07 & 0.69 & 2.36 & 0.86 & 3.09 & \textbf{0.64} & \textbf{0.76} & 0.70 & 2.61 & \textbf{0.34} & \textbf{0.00} & \textbf{0.27} & \textbf{0.28} & \textbbf{0.47} & \textbbf{0.01}\\
        CarCircle    & 0.58 & 3.74 & \textbf{0.5}  & \textbf{0.84} & \textbbf{0.75} & \textbbf{0.95} & 0.63 & 1.89 & 0.74 & 2.18 & \textbf{0.71} & \textbf{0.33} & 0.49 & 3.14 & \textbf{0.40} & \textbf{0.03} & \textbf{0.35} & \textbf{0.00} & \textbbf{0.72} & \textbbf{0.04}\\
        DroneCircle  & 0.72 & 3.03 & \textbf{0.56} & \textbf{0.57} & \textbbf{0.63} & \textbbf{0.98} & 0.8  & 3.07 & 0.78 & 3.68 & -0.22 & 1.28 & 0.26 & 1.02 & \textbf{0.48} & \textbf{0.00} & \textbf{0.16} & \textbf{0.00} & \textbbf{0.58} & \textbbf{0.60}\\
        AntCircle    & 0.58 & 4.90  & \textbf{0.40}  & \textbf{0.96} & 0.54 & 1.78 & 0.58 & 2.87 & 0.65 & 5.48 & \textbf{0.00}  & \textbf{0.00}  & 0.17 & 5.04 & \textbf{0.20} & \textbf{0.00} & \textbf{0.13} & \textbf{0.02} & \textbbf{0.45} & \textbbf{0.40}\\
        \hline
        \textbf{Average} & 0.64 & 3.36 & \textbbf{0.52} & \textbbf{0.82} & 0.68 & 1.04 & 0.74 & 3.11 & 0.48 & 3.8  & 0.33 & 1.12 & 0.55 & 2.55 &  \textbf{0.39} & \textbf{0.03} &  \textbf{0.04} & \textbf{0.04} & \textbf{0.54} & \textbf{0.20}\\
        \hline
    \end{tabular}
    }
    \label{tab:Bullet Safety Gym Performance Metrics}
\end{table}

\subsection{Composition and Safety Profiles of the Datasets}
\begin{figure}[hbt!]
    \centering
    \begin{subfigure}[b]{0.49\textwidth}
        \centering
        \includegraphics[width=\textwidth]{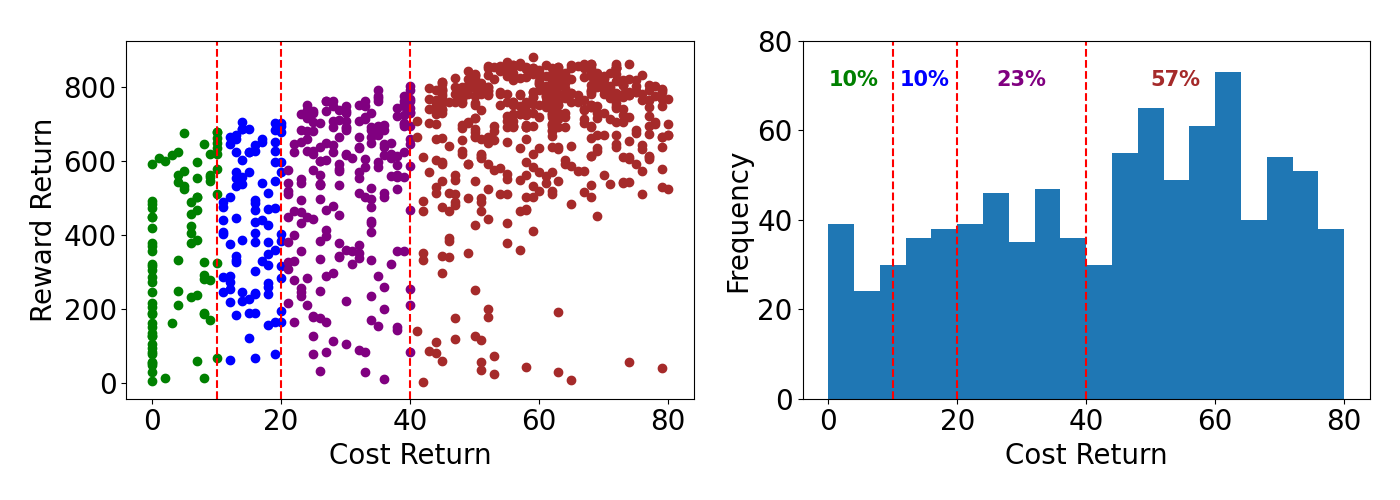}
        \caption{Ball Circle}
        \label{SafetyProfile distribution_of_cost_return_in_ball_circle}
    \end{subfigure}
    \hfill
    \begin{subfigure}[b]{0.49\textwidth}
        \centering
        \includegraphics[width=\textwidth]{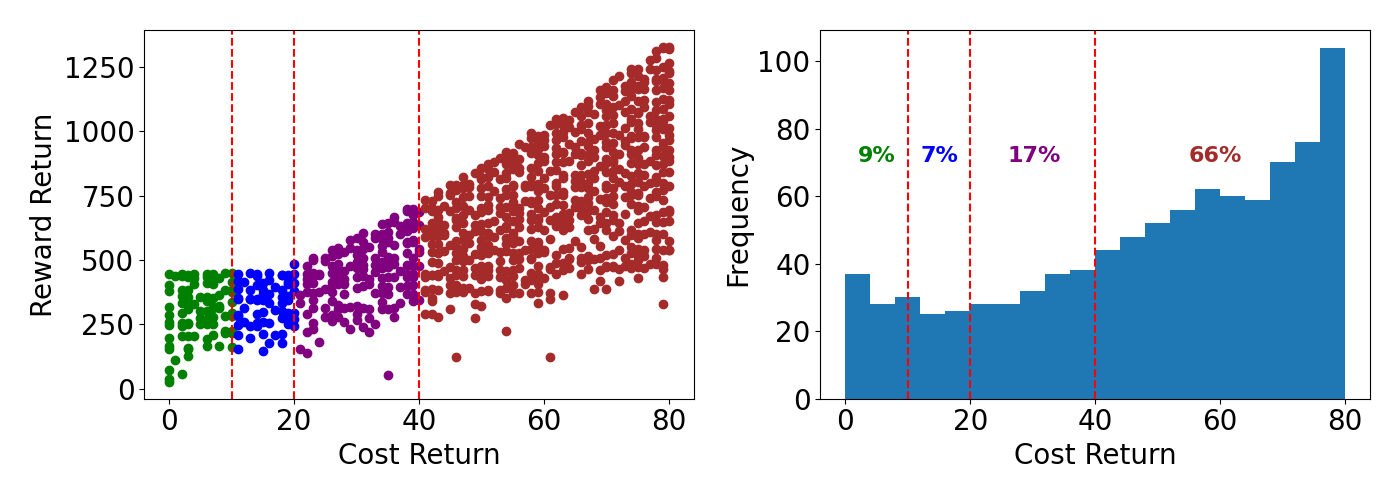}
        \caption{Ball Run}
        \label{SafetyProfile distribution_of_cost_return_in_ball_run}
    \end{subfigure} \\
    \begin{subfigure}[b]{0.49\textwidth}
        \centering
        \includegraphics[width=\textwidth]{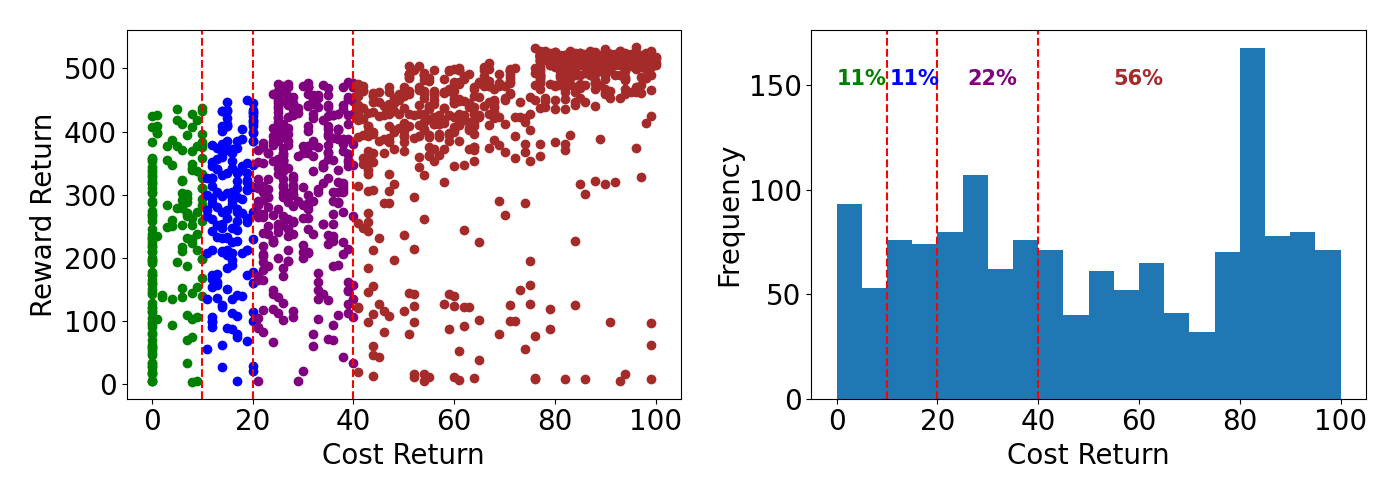}
        \caption{Car Circle}
        \label{SafetyProfile distribution_of_cost_return_in_car_circle}
    \end{subfigure}
    \hfill
    \begin{subfigure}[b]{0.49\textwidth}
        \centering
        \includegraphics[width=\textwidth]{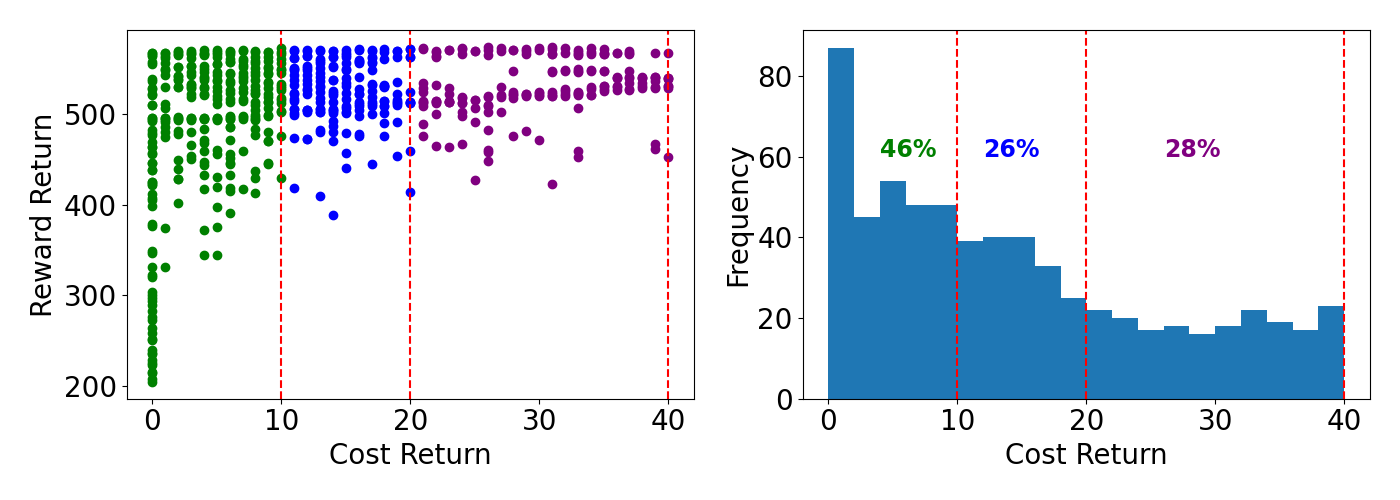}
        \caption{Car Run}
        \label{SafetyProfile distribution_of_cost_return_in_car_run}
    \end{subfigure}
    \caption{Distribution of the trajectories in the dataset based on cost return and reward return}
    \label{fig:SafetyProfile Ball Circle Ball Run Car Circle Car Run}
\end{figure}
Almost all of the datasets used in our evaluations include both safe and unsafe trajectories. The safety compliance of policies during evaluation is assessed based on their adherence to the undiscounted cost-return threshold established in our experiments, consistent with standard benchmarks. The red dashed lines in figure \ref{fig:SafetyProfile Ball Circle Ball Run Car Circle Car Run} indicate these thresholds.

The figures in \ref{SafetyProfile distribution_of_cost_return_in_ball_circle} - \ref{SafetyProfile distribution_of_cost_return_in_car_run} include scatter plots showing the reward return versus cost return for trajectories across four representative datasets, including those used in the ablation study. Each point corresponds to a single trajectory/episode. Additionally, the frequency distributions of cost returns for these datasets are provided in the right, highlighting that many trajectories exceed the safety threshold, often with unsafe trajectories outnumbering safe ones. These visualizations illustrate the datasets' composition and demonstrate the robustness of LSPC in learning policies that maintain safety compliance despite mixed safety profiles.

\begin{figure}[!h]
    \centering
    \includegraphics[width=0.7\linewidth]{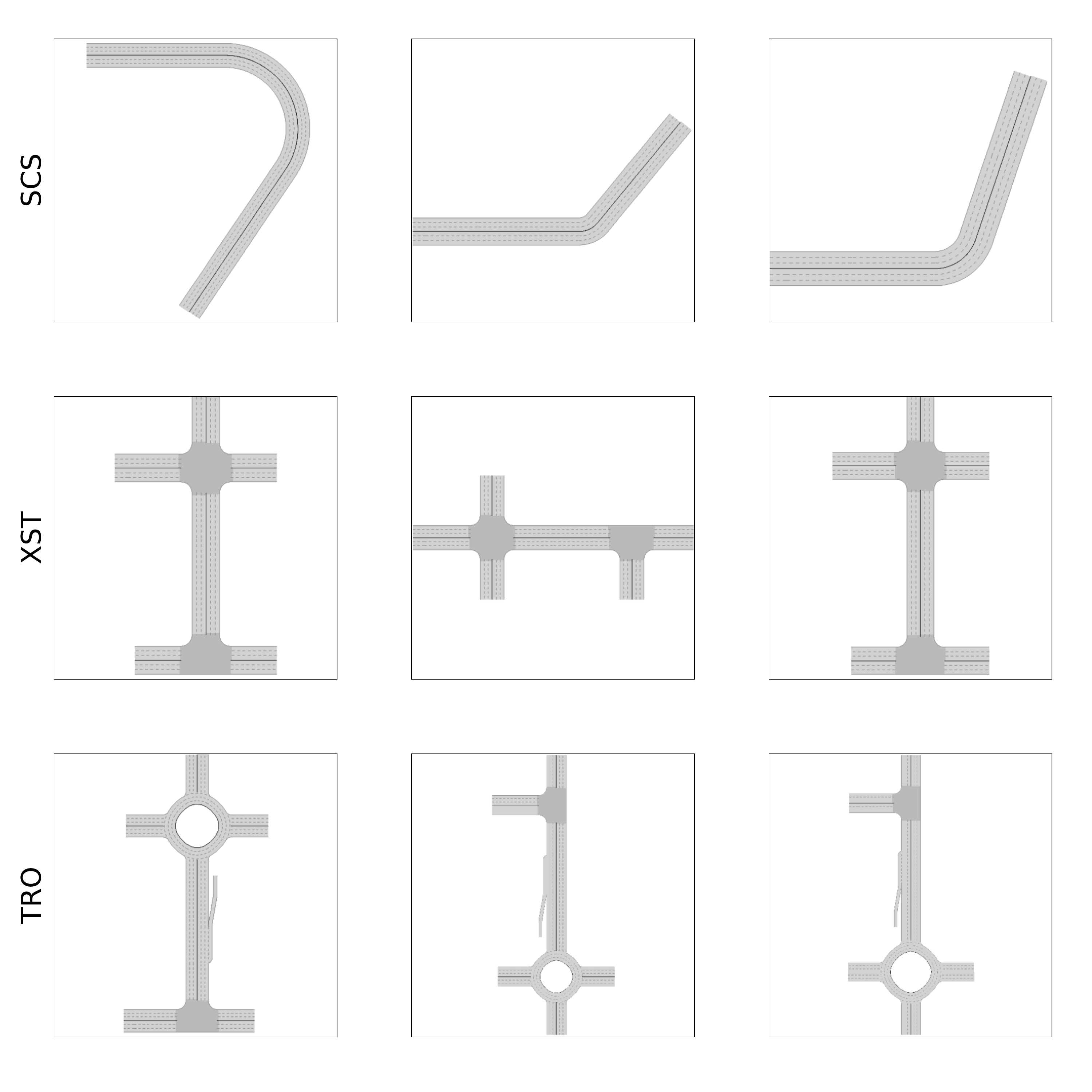}
    \caption{Block sequence and corresponding map samples in the Metadrive Simulator}
    \label{fig:block_sequence_maps}
\end{figure}

\begin{table}[h]
    \centering
    \caption{Metadrive Environment Configurations}
    \resizebox{0.95\textwidth}{!}{
    \rowcolors{2}{gray!20}{white}
    \begin{tabular}{cccccccc}
        \toprule
        \textbf{Environment} & \textbf{Traffic Density} & \textbf{Block Length} & \textbf{Block Sequence} & \textbf{Horizon} & \textbf{Lane Width} & \textbf{Lane Count} \\ \midrule
        EasySparse & 0.1  & 3 &   SCS & 1000 & 3.5 & 3 \\
        EasyMean  & 0.15 & 3 &   SCS & 1000 & 3.5 & 3 \\
        EasyDense  & 0.2  & 3 &   SCS & 1000 & 3.5 & 3 \\
        MediumSparse & 0.1  & 3 &   XST & 1000 & 3.5 & 3 \\
        MediumMean & 0.15 & 3 &   XST & 1000 & 3.5 & 3 \\
        MediumDense & 0.2  & 3 &   XST & 1000 & 3.5 & 3 \\
        HardSparse & 0.1  & 3 &   TRO & 1000 & 3.5 & 3 \\
        HardMean  & 0.15 & 3 &   TRO & 1000 & 3.5 & 3 \\
        HardDense  & 0.2  & 3 &   TRO & 1000 & 3.5 & 3 \\
        \bottomrule
    \end{tabular}
    }
    \label{tab:metadrive_env_config}
\end{table}

\subsection{Metadrive Transfer Experiment}
For rigorous evaluation of the proposed methods in varying scenarios, a set of nine distinct environments with different configurations is employed in this study. These environments are categorized into three difficulty levels: Easy, Medium, and Hard. Each difficulty level includes three variations based on traffic density: Sparse, Mean, and Dense. Each environment is characterized by specific parameters, including the start seed, traffic density, block length, block sequence, simulation horizon, lane width, and lane count. The block sequences selected correspond to the difficulty of the driving task: SCS (Straight-Curve-Straight) for easy levels, XST (Intersection-Straight-\textit{T}Interection) for medium levels, and TRO (\textit{T}Interection-OutRamp-Roundabout) for hard levels. The configuration is summarized in table \ref{tab:metadrive_env_config} and the example maps corresponding to each block sequence type are illustrated in figure \ref{fig:block_sequence_maps}.

\begin{figure}[hbt!]
    \centering
    \begin{subfigure}[b]{0.49\textwidth}
        \centering
        \includegraphics[width=\textwidth]{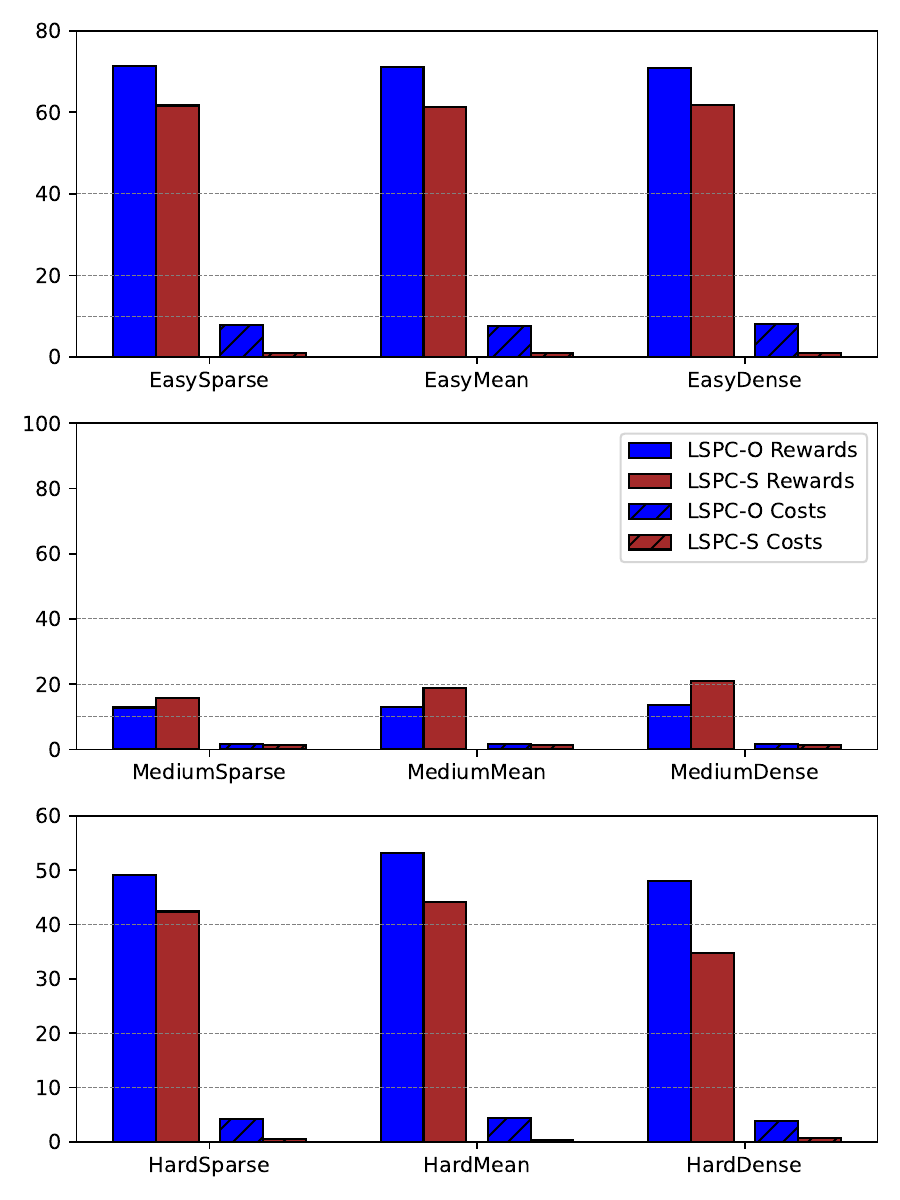}
        \caption{Agent Trained in Easy Sparse}
        \label{fig:Trained in Easy Sparse}
    \end{subfigure}
    \hfill
    \begin{subfigure}[b]{0.49\textwidth}
        \centering
        \includegraphics[width=\textwidth]{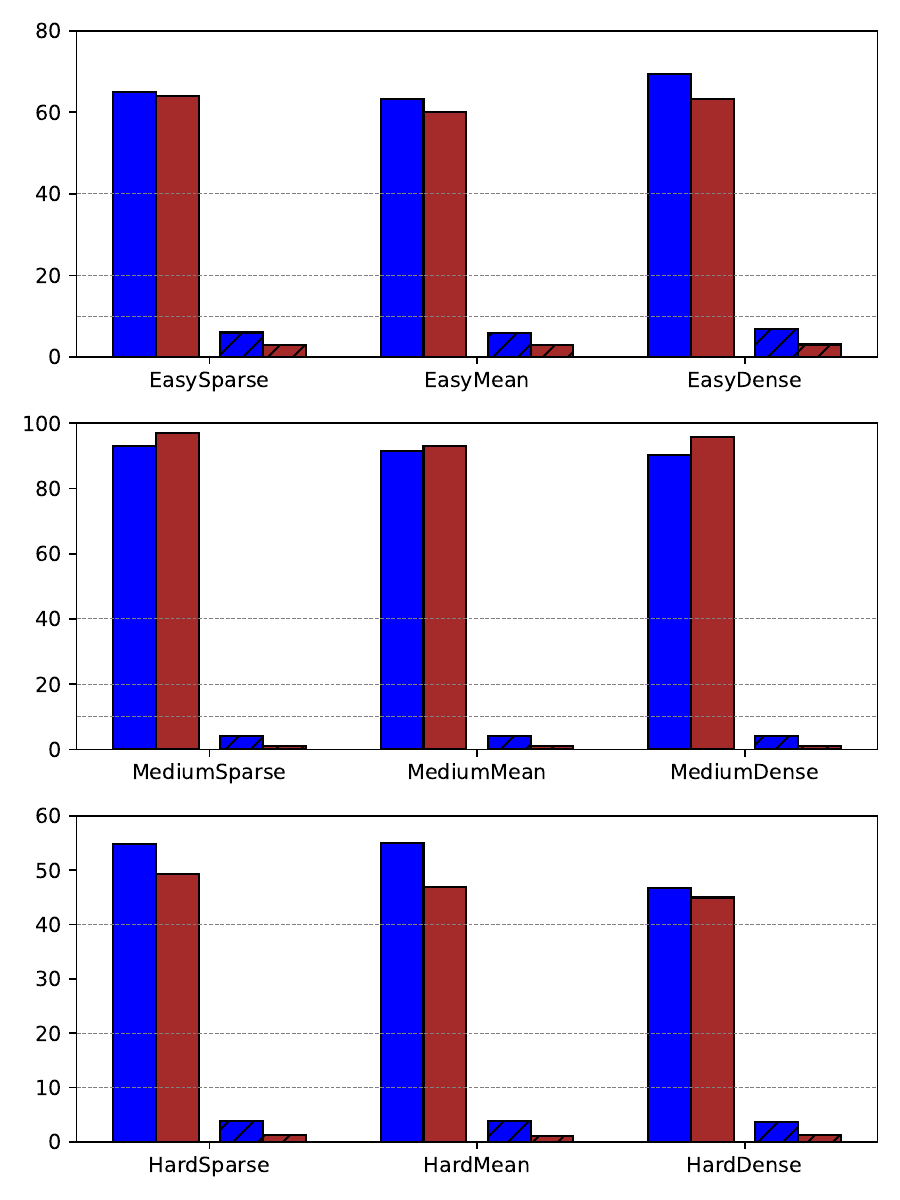}
        \caption{Agent Trained in Hard Dense}
        \label{fig:Trained in Hard Dense}
    \end{subfigure}
    \vspace{-5pt}
    \caption{Zero-shot transfer performance of agents trained in the Easy Sparse and Hard Dense scenarios when evaluated in various other settings. The unhatched bars represent the average normalized reward return, while the hatched bars indicate the average cost return, evaluated across 20 episodes. The dashed lines represent the different cost threshold used in this paper and in prior works.}
    \label{fig:Metadrive_Transfer_Experiments}
\end{figure}

In the transfer experiment, we aim to test the generalization capability of offline RL algorithms by training agents in one environment configuration and then transferring them to a different configuration without retraining. Specifically, we focus on agents trained in two extreme scenarios: Easy Sparse and Hard Dense. We then evaluate their zero-shot transfer performance across a range of other configurations. This setup enables us to assess how effectively the agents can adapt to new environments without additional training, providing insights into the robustness and flexibility of the learned policies.

The results of this experiment are illustrated in figure \ref{fig:Metadrive_Transfer_Experiments}. In the figure, unhatched bars represent the average normalized reward return, while hatched bars indicate the average cost return, both evaluated over 20 episodes. The dashed lines correspond to the different cost thresholds used in this study as well as the DSRL \citep{liu2023datasets}. Notably, the agent trained in the Hard Dense scenario exhibited improved reward performance when transferred to easier and sparser tasks, even surpassing its own performance in the Hard Dense environment. This outcome is likely due to the agent's exposure to more challenging data during training, which enhanced its adaptability to simpler settings—a result that is not always commonly observed in transfer learning scenarios. In contrast, the agent trained in the Easy Sparse environment experienced a performance drop when transferred to denser and more complex tasks, with the decline being more pronounced in medium difficulty scenarios than in the hardest ones. Despite these variations in reward performance, both agents consistently maintained safety across all cost threshold levels, demonstrating robust safety across all difficulty and density configurations. This consistency aligns with the core principle of our method, Latent Safety-Prioritized Constraints (LSPC), which ensures that the policy maximizes rewards while adhering to safety constraints within the latent space.

\end{document}